\documentclass{article}

\usepackage{arxiv}
\usepackage[utf8]{inputenc} 
\usepackage[T1]{fontenc}    
\usepackage{hyperref}       
\usepackage{url}            
\usepackage{booktabs}       
\usepackage{amsfonts}       
\usepackage{nicefrac}       
\usepackage{microtype}      
\usepackage{xcolor}         
\usepackage{wrapfig}
\usepackage{mathtools}
\usepackage{comment}


\usepackage{amsmath,amsfonts,bm}

















\def\1{\bm{1}}

\def\eps{{\epsilon}}












\DeclareMathAlphabet{\mathsfit}{\encodingdefault}{\sfdefault}{m}{sl}
\SetMathAlphabet{\mathsfit}{bold}{\encodingdefault}{\sfdefault}{bx}{n}













\usepackage{url}
\usepackage{graphicx}
\usepackage{multirow}
\usepackage{adjustbox}

\usepackage{amsfonts}
\usepackage{amsmath}
\usepackage{algorithm}
\usepackage{algpseudocode}

\usepackage{bbm}
\usepackage{microtype}
\usepackage{graphicx}
\usepackage{subfigure}
\usepackage{natbib}
\usepackage{booktabs} 
\usepackage{amsfonts}       
\usepackage{nicefrac}       
\usepackage{subfiles}
\usepackage{xspace}
\usepackage{float}
\usepackage{euscript}
\usepackage{amsthm}
\usepackage{cleveref}
\usepackage{thmtools}
\usepackage{thm-restate}
\usepackage{soul}
\usepackage{tikz}

\newtheorem{theorem}{Theorem}
\newtheorem{proposition}{Proposition}

\newtheorem{definition}{Definition}

\newtheorem{lemma}[theorem]{Lemma}
\newcommand{\renyi}{R\'enyi\xspace}
\newcommand{\Renyi}{R\'enyi\xspace}

\definecolor{DarkGreen}{rgb}{0.1,0.5,0.1}

\newtoggle{showcomments}
\toggletrue{showcomments} 

\newtoggle{showupdates}
\toggletrue{showupdates} 



\graphicspath{ {./figs/} }

\newcommand{\DPSGD}{DP-SGD}

\newcommand{\Laplace}{Laplace}
\newcommand{\Gaussian}{Gaussian}

\newcommand{\Argmax}{Argmax}

\definecolor{DarkGreen}{rgb}{0.1,0.5,0.1}

\title{Beyond Laplace and Gaussian: Exploring the Generalized Gaussian Mechanism for Private Machine Learning}

\author{%
  Roy Rinberg\thanks{Funded in part by NSF grant BCS-2218803 (HNDS-I), the Digital Data Design Institute at Harvard, and Coefficient Giving. Part of this work completed while R.R. was a student at Columbia University and visiting University of Toronto.  }\\
  Harvard University\\
  \texttt{royrinberg@g.harvard.edu} \\
  \And%
  Ilia Shumailov\\
  University of Oxford \\ \texttt{ilia.shumailov@chch.ox.ac.uk}
  \And
  Vikrant Singhal\thanks{VS is supported in part by NSF grant BCS-2218803, the Digital Data Design Trustworthy AI Lab at Harvard, and a grant from the Sloan Foundation.}\\
  Harvard University\\
  \texttt{vikrant@seas.harvard.edu}
  \And
  Rachel Cummings\thanks{R.C. supported in part by NSF grant CNS-2138834 (CAREER)}\\
  Columbia University\\
  \texttt{rac2239@columbia.edu} \\
  \And%
  Nicolas Papernot\thanks{N.P. supported by a Canada CIFAR AI Chair at the Vector Institute.} \\
  University of Toronto \& Vector Institute\\
\texttt{nicolas.papernot@utoronto.ca}
}

\begin{document}

\maketitle

\begin{abstract}
Differential privacy (DP) is obtained by randomizing a data analysis algorithm, which necessarily introduces a tradeoff between its utility and privacy. Many DP mechanisms are built upon one of two underlying tools: Laplace and Gaussian additive noise mechanisms. We expand the search space of algorithms by investigating the Generalized Gaussian (GG) mechanism, which samples the additive noise term $x$ with probability proportional to $e^{-\frac{| x |}{\sigma}^{\beta} }$ for some $\beta \geq 1$ (denoted $GG_{\beta, \sigma}(f,D)$). The Laplace and Gaussian mechanisms are special cases of GG for $\beta=1$ and $\beta=2$, respectively. 

We prove that the full GG family satisfies differential privacy and extend the PRV accountant to support privacy loss computation for these mechanisms. We then instantiate the GG mechanism in two canonical private learning pipelines, PATE and DP-SGD. Empirically, we explore PATE and DP-SGD with the GG mechanism  across the computationally feasible values of $\beta$: $\beta \in [1,2]$ for DP-SGD and $\beta \in [1,4]$ for PATE. For both mechanisms, we find that $\beta=2$ (Gaussian) performs as well as or better than other values in their computational tractable domains.This provides justification for the widespread adoption of the Gaussian mechanism in DP learning.

\end{abstract}

\section{Introduction} \label{sec:intro}

As applications of machine learning (ML) often involve sensitive information, there is an increasing need to provide  privacy protections for the individuals whose data are included in the training datasets. Privacy concerns have prompted the development of privacy-preserving ML techniques, which aim to prevent the leakage of private information analyzed during training. One of the primary frameworks for achieving this goal is differential privacy (DP), a rigorous mathematical framework that provides quantifiable privacy guarantees \citep{original_DP}.

Two popular techniques for implementing DP in ML are Differentially Private Stochastic Gradient Descent (\DPSGD) \citep{dpsgd} and Private Aggregation of Teacher Ensembles (PATE) \citep{pate_2017}. Traditionally, \DPSGD~entails Poisson sampling from the dataset, gradient clipping, and then the addition of \Gaussian~noise to the gradient. PATE, on the other hand, involves training an ensemble of teacher models on disjoint subsets of the data, then privately aggregating the votes of the teacher models on a public dataset, in order to train a student model on the privately labeled public dataset. PATE achieves private vote aggregation through a private variant of \Argmax, which is obtained by adding noise to the vote counts, and then finding the \Argmax~of the noisy histogram.

Both \DPSGD~and PATE achieve privacy protection through mechanisms that add noise drawn from specific probability distributions, namely \Laplace~or \Gaussian. The choice of the noise distribution plays a crucial role in determining the privacy-accuracy tradeoffs of an algorithm, and algorithm designers often make problem-dependent decisions in choosing between \Laplace~and \Gaussian~Mechanisms. However, many of the underlying tradeoffs between these two discrete choices remain unclear. In this work, we explore a continuum of private mechanisms that extends these two special cases of noise distributions. We investigate the Generalized Gaussian Mechanism (GG) \citep{GG_mechanism_1}, denoted $GG_{\beta, \sigma}(f,D)$, which adds noise to the true function value $f(D)$ sampled from the Generalized Gaussian distribution,\footnote{Sometimes referred to in the literature as the Exponential Power Distribution or Generalized Normal Distribution} denoted $\mathcal{N}_\beta (\mu, \sigma)$, with probability density function (PDF), 
\begin{equation}\label{eq.ggpdf}
    p(x| \mu, \sigma, \beta) \propto e^{- \frac{| x -\mu |^{\beta}}{\sigma}}.
\end{equation}

Figure \ref{fig:equivalent_privacy_mechanisms__PRV} illustrates these PDFs with $\mu=0$ for different $\beta$ and $\sigma$ values on both linear and log scales. Interestingly, all of the noise distributions illustrated in Figure \ref{fig:equivalent_privacy_mechanisms__PRV} correspond to equivalent DP guarantees when used in the GG Mechanism (see \Cref{sec:mechanisms_with_equal_privacy} for more details). We observe that a larger $\beta$ value corresponds to a PDF that is more concentrated around the mean. In order to satisfy the same $(\epsilon, \delta)$-DP, $\sigma$ must be simultaneously increased to compensate for the lighter tail. 

\begin{figure}[H]\label{fig:equivalent_privacy_mechanisms__PRV}
    \centering
    \subfigure{\includegraphics[width=0.8\textwidth]{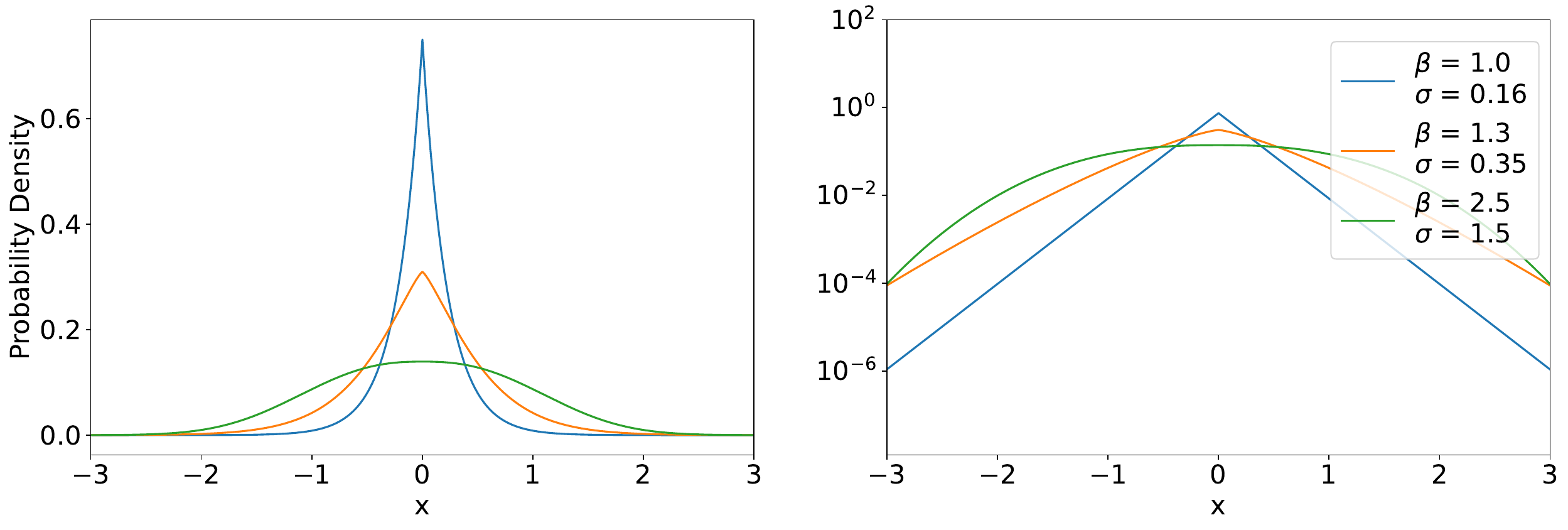}
    }
    \caption{\small{Linear (left) and log-scale (right) PDFs of the Generalized Gaussian distribution for GG mechanisms of varying $\beta$, which all satisfy the same $(\epsilon, \delta)$-DP privacy guarantee. }} 
\end{figure}

We focus on this family of mechanisms because it generalizes both the \Laplace~and \Gaussian~Mechanisms, which are special cases of the GG Mechanism for $\beta =1$ and $\beta = 2$, respectively. 

We explore two settings relevant to ML: PATE and \DPSGD. Our findings show that the choice of $\beta$ has a relatively small effect on test-accuracy. Within the computationally tractable range of $\beta$ for each mechanism, we find that $\beta = 2$ (Gaussian) performs as well as or better than other values. This helps explain why \Gaussian~noise may be so popular. Our analysis leads to the conclusion that for \(GG_{\beta, \sigma}(f,D)\) mechanisms, there is minimal or no improvement in model utility compared to the Gaussian mechanism.

\subsection{Related Work}

A number of prior works explored alternative DP mechanisms that extend the Laplace and Gaussian Mechanisms. The Staircase Mechanism~\citep{staircase_mechanism} was derived as an alternative to the \Laplace\ Mechanism by minimizing the variance of the noise distribution in order to improve utility guarantees. The Podium Mechanism~\citep{podium_mechanism} improved upon the Staircase Mechanism by changing the noise distribution to be defined over finite (truncated) support, instead of the infinite support used in other mechanisms like \Laplace. While both mechanisms have rigorous analysis of utility in the single-shot regime, they are neither studied nor optimized for high-composition regimes, and thus are not used in ML, which is the key use-case we investigate.

\citet{cactus_mechanism} developed the Cactus Mechanism to specifically address the high-composition regime by numerically computing a mechanism that minimizes the Kullback-Leibler divergence between the conditional output distributions of a mechanism given two different inputs, under a high number of compositions. \citet{cactus_mechanism} derived a near-optimal divergence loss in the high-composition regime; however, they only considered privacy for 1-dimensional outputs, making the result not directly applicable to ML models. Further, they focused on the optimal DP mechanism in the limit of a large number of compositions, explicitly ignoring the low-composition regime.

\citet{awan_CND_2022} address the multivariate setting by deriving a family of multivariate log-concave canonical noise distributions~\citep{awan2023canonical}, allowing addition of minimal noise for a particular f-DP guarantee. However, their work is constrained to $\ell_1$ and $\ell_{\infty}$ sensitivity settings and does not analyze the privacy-utility tradeoff of these mechanisms for private ML. Unfortunately, the proposed mechanisms are either only for Gaussian DP (known to underestimate privacy \citep{microsoft_PRV_2}), or are not easily integrated with existing privacy accountants in the subsampled regime.

\citet{GG_mechanism_1} introduced and partially analyzed the Generalized Gaussian Mechanism, which adds noise from the Generalized Gaussian distribution (Definition \ref{def.gengauss}). They provided probabilistic-DP (a variation of approximate DP) guarantees for $\beta = 1$ and $\beta \geq 2$, and gave empirical results for integer $\beta$-values in the GG mechanism applied to training Support Vector Machines on sanitized data from several tabular datasets. In a separate vein of research, \citet{certifiably_robust_RDP}, provided a relatively weak bound for the \Renyi Differential Privacy (RDP, Definition~\ref{def:RDP-calculation}) when $\beta >2$, for the limited use case of $\alpha =1$, which restricts much of the usefulness of the RDP formulation. To best of our knowledge, no previous work provides tight $(\epsilon, \delta)$-DP guarantees that are useful for the high composition regime. In our work, we consider $\beta \in [1,2]$ for DPSGD and $\beta \in [1,4]$ for PATE, and explore the privacy-accuracy tradeoff of the GG mechanism and its applications to PATE and~\DPSGD, and empirically provide tighter privacy guarantees than those provided by prior works, for any number of compositions.

\subsection{Our Contributions}

In this work, we investigate the Generalized Gaussian Mechanism, a little-explored family of DP mechanisms. Our main contributions are the following:
\begin{enumerate}
\item \textbf{GG-Mechanism and privacy accounting:}
We first show in Section \ref{sec:theory} that the GG Mechanism \(GG_{\beta, \sigma}(f,D)\) is DP for any $\beta$, and provide a PRV-based privacy accounting method for this mechanism. We present the Generalized Gaussian mechanism and its privacy guarantees in Section \ref{s.mechs}, and then in Section \ref{s.accounting}, we present a privacy accounting method which enables the GG-Mechanism to be used in high-composition ML applications. In Appendix \ref{appendix:subsampling}, we also introduce the Sampled Generalized Gaussian Mechanism (SGG), which is  a variant of the GG-Mechanism that first subsamples the database to make use of privacy amplification.

\item \textbf{GG-Mechanism for PATE and DP-SGD:}
In \Cref{sec:argmax}, we introduce the GGNMax algorithm for computing a private argmax for PATE, and in \Cref{sec:dpsgd}, we show how to use the Sampled GG Mechanism for \DPSGD\ in a new mechanism that we name $\beta$-Differentially Private Stochastic Gradient Descent ($\beta$-\DPSGD). We show that all four mechanisms satisfy DP (Theorems \ref{thm.ggprivacy}, \ref{thm.sggprivacy}, \ref{thm.ggargmaxpriv}, and \ref{thm:DPSGD_is_DP}), and we show how to extend PRV accountant to track privacy budget over many compositions of these mechanisms.

\item \textbf{Optimizing $\beta$ parameter for accuracy}:
   For guidance on choosing $\beta$, in \Cref{sec:argmax}, we empirically find that in privately computing an argmax in PATE, the choice of $\beta$ has a weak relationship with the accuracy. In \Cref{sec:dpsgd}, we empirically find that within the computationally tractable range of $\beta \in [1,2]$, the \Gaussian~mechanism ($\beta = 2$) performs as well as or better than other values when hyperparameters are tuned based on the choice of $\beta$.

   \item \textbf{Numerical accountant for arbitrary mechanisms}:   
     Previously, privacy accounting for new mechanisms required the noise distributions to be well-behaved (e.g., analytically derivable \Renyi Divergence values between distributions); our work develops a robust framework for using the PRV accountant \citep{microsoft_PRV_2} on new mechanisms that are not so well-behaved. This method many enables future research into novel DP mechanisms that cannot be directly analyzed analytically.
\end{enumerate}

Our results on the optimization of $\beta$ can be interpreted as a negative result: the Gaussian Mechanism ($\beta=2$) performs as well as or better than other values tested, and little to no additional improvements in performance can be gained by optimizing $\beta$. Our privacy accounting results can be interpreted as positive results, both for direct analysis of the GG-Mechanism, and may also be of independent interest for privacy accounting of new mechanisms.

\section{Preliminaries} \label{sec:background}
\subsection{Differential Privacy}\label{s.dpprelims}

 Differential privacy (DP) is a framework for designing privacy-preserving data analysis algorithms that protect the privacy of individuals in a dataset while allowing accurate statistical analysis. Informally, DP  provides a mathematical guarantee that an individual's data will have only a limited affect on the result of analysis on a large database. Two datasets are said to be \emph{neighboring} if they differ only in a single data record.
 
\begin{definition}[Differential privacy \citep{original_DP}]
      A mechanism $\mathcal{M}: \mathcal{D} \to \mathcal{R}$ satisfies $(\epsilon, \delta)$-differential privacy if for any two neighboring datasets $D, D' \in \mathcal{D}$ and for any $S \subseteq \mathcal{R}$,
      \[
            \Pr[\mathcal{M}(D)\in S ] \leq e^\epsilon \cdot \Pr[\mathcal{M}(D')\in S ] + \delta.
      \]
\end{definition}

Smaller values of the parameters $\epsilon$ and $\delta$ correspond to stronger privacy guarantees. The Laplace and Gaussian Mechanisms are examples of \emph{output perturbation} mechanisms, which first evaluate a function on the input dataset, and then add mean-zero noise to the result. The variance of the noise scales with the \emph{sensitivity} of the function $\Delta f$, defined as the maximum change in the function's value due to the removal or addition of a single database entry. 

\begin{definition}[$\ell_\beta$ sensitivity \citep{original_DP}] 
   The \(\ell_\beta\)-sensitivity of a function \(f \colon \mathbb{N}^{\lvert x \rvert} \rightarrow \mathbb{R}^k\) is:
\[
\Delta_\beta(f) = \max_{D,D' \text{ neighbors}} \lVert f(D) - f(D') \rVert_\beta.
\]
\end{definition}

We emphasize that while differently-normed sensitivity denotes that the function is bounded for a different $\ell_{\beta}$ norm, the definition of a neighboring dataset remains the same -- two datasets are neighboring if they differ by a single entry.

Two of the most common mechanisms in differential privacy are the \Laplace~Mechanism and the \Gaussian~Mechanism. 
\begin{definition}
    The \Laplace~Distribution (centered at 0) with scale $b$ is the distribution with probability density function:
    \[        Lap(x|b) = \frac{1}{2b} \exp ( - \frac{|x|}{b} ).
    \]
\end{definition}

\begin{definition}[\Laplace~Mechanism \citep{original_DP}]
    For any $\epsilon > 0$, given a real-valued function $f: \mathcal{D} \to \mathbbm{R}$, the \Laplace~mechanism is defined as
    \begin{equation*}
        \mathcal{M}_L(D, f, \epsilon) = f(D) + Y,
    \end{equation*}
    where $Y\sim Lap(\Delta f / \epsilon)$. The \Laplace~Mechanism is $(\epsilon,0)$-DP.
\end{definition}

\begin{definition}[\Gaussian~Mechanism \citep{privacy_book}]
    For any $\epsilon>0$ and $\delta \in (0,1]$, given a real-valued function $f: \mathcal{D} \to \mathbbm{R}$, the \Gaussian~mechanism is defined as
    \begin{equation*}
        \mathcal{M}_G(D, f, \epsilon) = f(D) + Y,
    \end{equation*}
    where $Y\sim \mathcal{N}(0,\sigma)$ for $\sigma > \Delta f \sqrt{2 \log (1.25 / \delta)}/\epsilon$. The \Gaussian~Mechanism is $(\epsilon,\delta)$-DP.   
    \end{definition}

One main feature of differential privacy is that the guarantees \emph{compose}, meaning  that the overall privacy loss (as measured by $\epsilon$ and $\delta$) of running multiple DP mechanisms can be bounded as a function of the privacy parameters of the individual mechanisms. In the simplest version of composition \citep{original_DP}, the privacy parameters ``add up'' so that running two $(\epsilon, \delta)$-DP mechanisms results in $(2 \epsilon, 2\delta)$-DP overall. In practice, however, this naive composition dramatically overestimates the incurred privacy risk, and more advanced composition algorithms \citep{advanced_composition} and privacy accountants \citep{dpsgd} are used to more accurately bound privacy risk. 

A privacy accountant is a tool to track the privacy budget of a system by recording the privacy cost associated with each query; accountants are particularly important for applications like \DPSGD, where DP mechanisms are composed a large number of times, e.g., as many as the number of steps in gradient descent. The introduction of the Moments Accountant \citep{dpsgd} enabled the first use of \DPSGD~with reasonable privacy guarantees on common datasets like MNIST \citep{mnist}. This was later replaced in many settings by accountants that rely on Renyi Differential Privacy (RDP), introduced by \citet{RDP_2017}.

\subsection{\Renyi Differential Privacy} \label{appendix:RDP}

\Renyi Differential Privacy (RDP) generalizes pure differential privacy $(\delta = 0)$ and is closely related to the moments accountant. Defined below, the RDP guarantee of a mechanism is stated in terms of \renyi divergence.
\begin{definition}[\renyi Divergence] The \renyi divergence of order $\alpha$ between two distributions $P$ and $Q$ is defined as:
    \begin{align*}
        D_\alpha(P||Q) =  \frac{1}{\alpha -1} \log \mathbbm{E}_{x \sim Q} \left[ (P(x)/ Q(x))^\alpha \right] = \frac{1}{\alpha -1 } \log \mathbbm{E}_{x \sim P} \left[ (P(x) / Q(x))^{\alpha -1} \right]. \\
    \end{align*}
\end{definition}

\begin{definition} \label{def:RDP-calculation}
    (\renyi Differential Privacy \citep{RDP_2017}). A randomized mechanism $\mathcal{M}$ satisfies $(\alpha, \epsilon)$-RDP with $\alpha \geq 1$ if for any neighboring datasets $D$ and $D'$:
    \begin{equation*}
        D_\alpha (\mathcal{M}(D)|| \mathcal{M}(D')) = \frac{1}{\alpha -1} \log \mathbbm{E}_{x \sim \mathcal{M}(D)} \left[ \left(\frac{\Pr[\mathcal{M}(D) =x ]}{\Pr[\mathcal{M}(D') =x ]} \right)^{\alpha -1} \right] \leq \epsilon.
    \end{equation*}

\end{definition}

RDP is desirable in ML applications because of its straightforward composition properties: the adaptive composition of mechanisms $\mathcal{M}_1, \ldots, \mathcal{M}_k$ where each $\mathcal{M}_i$ satisfies $(\alpha, \epsilon_i)$-RDP, will together satisfy $(\alpha, \sum_{i=1}^k \epsilon_i)$-RDP.

Pure $(\epsilon,0)$-DP corresponds to $(\infty, \epsilon)$-RDP; \cite{RDP_2017} provided more general guarantees for converting between RDP and DP: if $\mathcal{M}$ is an $(\alpha, \epsilon)$-RDP mechanism, it also satisfies $(\epsilon + \frac{\log (1/ \delta)}{\alpha -1}, \delta)$-DP for any $\delta \in (0,1)$.

\subsection{PRV Accountant} \label{sec:prv_accountant_background}

The privacy guarantees of a DP mechanism are commonly defined as a single tuple $(\epsilon, \delta)$, but can also be described as a function, since the probability of failure ($\delta$) can depend on the required $\epsilon$-bound. This naturally leads to the definition of a privacy curve, such that for every $\epsilon \in \mathbb{R}$, $\mathcal{M}$ is $(\epsilon, \delta(\epsilon))$-DP for the appropriate function $\delta(\epsilon)$. \citet{microsoft_PRV_2} provided an efficient method for composing privacy curves directly that gave much tighter privacy guarantees, using an accountant called the Privacy Random Variable accountant (PRV). This relies on a connection between a DP mechanism's privacy curve $\delta(\epsilon)$ and its uniquely defined privacy loss random variables ($X,Y$), which represent the likelihood of returning a particular outcome on two neighboring databases, respectively defined as:
\[
      X = \log(\tfrac{Q(\omega)}{P(\omega)}) \textrm{ where } \omega \sim P; \quad
      Y = \log(\tfrac{Q(\omega)}{P(\omega)}) \textrm{ where } \omega \sim Q,
\]
where \textit{P} and \textit{Q} are the distribution of the mechanism's output over two neighboring datasets. Intuitively, the privacy loss random variables can be thought of as the \textit{actual $\epsilon$} value for a specific output; it is a random variable because the output $\mathcal{M}(D)$ is itself a random function.

\citet{microsoft_PRV_2} introduced the algorithm ComposePRV, which efficiently computes the privacy guarantees for the composition of multiple DP mechanisms. ComposePRV takes as input the CDFs of PRVs $Y_1, \ldots, Y_k$ (as well as a few other hyperparameters), and returns an estimate of the privacy curve for all the mechanisms composed, represented by $\delta(\epsilon)$, allowing for the direct computation of $\epsilon$.

\section{Generalized Gaussian Mechanism and Privacy Guarantees} \label{sec:theory}

In this section, we first introduce the Generalized Gaussian (GG) Mechanism and show that it satisfies DP (Section \ref{s.mechs}). Since these privacy results are existential, rather than descriptive -- i.e., we show that \emph{there exists} some $\epsilon$ and $\delta$ values, rather than providing a closed form relationship between $(\beta,\sigma)$ and $(\epsilon,\delta)$ -- we also present a PRV-based privacy accounting method (Section \ref{s.accounting}) that can be used to measure explicit $(\epsilon,\delta)$-DP guarantees in the applications of this mechanism to ML tasks, as studied in Sections \ref{sec:argmax} and \ref{sec:dpsgd}.

\subsection{Generalized Gaussian Mechanism}\label{s.mechs}

We first formally define the Generalized Gaussian distribution and introduce the Generalized Gaussian Mechanism (Algorithm \ref{alg:GG_mechanism}), which is an output perturbation mechanism that adds noise sampled from the Generalized Gaussian distribution.

\begin{definition}[\citep{analytic_ggd}] \label{def.gengauss}
The Generalized Gaussian distribution, denoted $\mathcal{N}_\beta(\mu,\sigma)$, is specified by the pdf $p(x| \mu, \sigma, \beta) \propto e^{- \frac{|x- \mu|^\beta}{\sigma}}$ with normalizing constant $\frac{\beta}{2 \sigma^{\frac{1}{\beta}} \Gamma(\frac{1}{\beta})}$.
\end{definition}

The Generalized Gaussian Mechanism was introduced by \citep{Generalized_Gaussian_Mechanisms} for $\ell_1$ sensitivity and by \citep{GG_mechanism_1} for $\ell_\beta$ sensitivity; we use the latter, more general version. 

\begin{algorithm}[H]
    \caption{Generalized Gaussian Mechanism, $GG_{\beta, \sigma}(f,D)$.  \citep{GG_mechanism_1}}\label{alg:GG_mechanism}
    \begin{algorithmic}[1]
        \State \textbf{Input:} noise parameters $\beta \geq 1$, $\sigma >0$, vector-valued function $f: \mathcal{D} \to \mathbb{R}^d$, database $D \in \mathcal{D}$ 
        \State Let $\Delta_\beta f = \max_{D,D' \text{ neighbors}} \|f(D) - f(D')\|_\beta$
        \For{$i=1$ to $d$}
        \State Sample $Y_i \sim \mathcal{N}_{\beta}(0, \sigma \cdot \Delta_{\beta} f)$
        \EndFor
        \State {\bf Output:} $f(D) + (Y_1, \ldots Y_d)$ 
    \end{algorithmic}
\end{algorithm}

Next, we show that the GG Mechanism satisfies DP (Theorem \ref{thm.ggprivacy}). As stated, the result is existential, and does not provide an explicit relationship between the ($\beta,\sigma$) noise parameters of the mechanism and the resulting DP parameters ($\epsilon,\delta$). Critically, this guarantee that the mechanism is DP for \emph{some} $\epsilon$ and $\delta$ is sufficient to apply the PRV accountant described in Section \ref{s.accounting}.

\begin{restatable}{theorem}{ggprivacy}\label{thm.ggprivacy}
    For any $\beta \geq 1$, $\sigma >0$, $\delta > 0$ there exists a finite value $\epsilon$ such that $GG_{\beta, \sigma}(\cdot,\cdot)$ satisfies $(\epsilon, \delta)$-DP.
\end{restatable}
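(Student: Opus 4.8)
The plan is to prove the statement by reducing the multivariate Generalized Gaussian mechanism to a one-dimensional analysis and then showing that a bounded privacy loss exists coordinate-wise. Because the mechanism in Algorithm \ref{alg:GG_mechanism} adds independent noise $Y_i \sim \mathcal{N}_\beta(0, \sigma \Delta_\beta f)$ to each coordinate, the output density factors as a product over coordinates. Thus for neighboring datasets $D, D'$ with $v = f(D) - f(D')$, the log-likelihood ratio at an output point $x \in \mathbb{R}^d$ is the sum of one-dimensional contributions, and I would control each term using the closed-form density from Definition \ref{def.gengauss}.

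Concretely, writing $P$ and $Q$ for the output distributions on $D$ and $D'$, the privacy loss is
\begin{equation*}
    \log \frac{P(x)}{Q(x)} = \frac{1}{\sigma \Delta_\beta f} \sum_{i=1}^d \Bigl( |x_i - f(D')_i|^\beta - |x_i - f(D)_i|^\beta \Bigr).
\end{equation*}
The key analytic fact I would establish is a pointwise bound showing that the map $t \mapsto |t|^\beta$ does not grow too fast relative to the tail decay of the noise: for $\beta \geq 1$ there is a constant so that $|a+b|^\beta - |a|^\beta$ is dominated by a term that is at most linear in $|a|^{\beta-1}$ plus a constant depending on $|b|$. Since the noise density itself decays like $e^{-|a|^\beta/(\sigma\Delta_\beta f)}$, this growth is slower than the decay, which makes the relevant tail integrals converge. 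To obtain $(\epsilon,\delta)$-DP for a fixed $\delta > 0$, I would show that for any threshold $\epsilon$, the ``bad'' set $S_\epsilon = \{x : \log(P(x)/Q(x)) > \epsilon\}$ has $P$-measure tending to $0$ as $\epsilon \to \infty$; equivalently, the privacy loss random variable has finite tails. Choosing $\epsilon$ large enough that $\Pr_{x \sim P}[\log(P(x)/Q(x)) > \epsilon] \leq \delta$ then gives the claim via the standard fact that $(\epsilon,\delta)$-DP holds whenever the privacy loss exceeds $\epsilon$ with probability at most $\delta$.

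The main obstacle is the case $\beta \geq 2$ (and more generally controlling the heavier polynomial growth of $|x_i|^{\beta-1}$ in the exponent as $\beta$ increases), where the difference of $\beta$-th powers grows polynomially in the coordinate and one must verify that the Generalized Gaussian tail is still thin enough to force the privacy-loss tail probability below $\delta$. I would handle this by taking $v = f(D) - f(D')$ to have worst-case $\ell_\beta$ norm equal to $\Delta_\beta f$ and bounding the sum using a uniform pointwise inequality for $|a+b_i|^\beta - |a|^\beta$ together with the rapid decay $e^{-|a|^\beta/(\sigma \Delta_\beta f)}$, so that the privacy loss is a sum of random variables each with exponentially decaying tails. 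A clean way to finish is to invoke the fact that a random variable with sub-exponential (or lighter) tails has finite moment generating function on a neighborhood of the origin, which forces $\Pr[\log(P/Q) > \epsilon] \to 0$ and hence guarantees a finite admissible $\epsilon$. I would emphasize that the result is purely existential: no explicit $\epsilon$-$\sigma$ relationship is needed, only the qualitative statement that the privacy-loss tail is summable, which the exponential decay of the Generalized Gaussian density supplies for every $\beta \geq 1$.
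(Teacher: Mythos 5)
Your proposal is correct in its essential logic, but it takes a genuinely different route from the paper. The paper proves Lemma \ref{lemma:renyi_div_is_bounded}, showing that the \Renyi divergence $D_\alpha(\mathcal{N}_\beta(0,\sigma)\,\|\,\mathcal{N}_\beta(\mu,\sigma))$ is finite for every order $\alpha>1$ by splitting the integral $\int p^\alpha/q^{\alpha-1}$ into the three regions $x<0$, $0\le x\le\mu$, $x>\mu$ and checking that the exponent is eventually negative; it then converts $(\alpha,r)$-RDP to $(r+\tfrac{\log(1/\delta)}{\alpha-1},\delta)$-DP via \citet{RDP_2017}. You instead bound the tail of the privacy loss random variable directly and invoke the standard ``pointwise DP'' conversion, namely that $\Pr_{x\sim P}[\log(P(x)/Q(x))>\epsilon]\le\delta$ implies $(\epsilon,\delta)$-DP. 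Both work, and the two are morally dual: finiteness of all \Renyi orders is essentially equivalent to the privacy loss having a finite moment generating function, which is what your sub-exponential tail argument supplies. Your route is more elementary (no RDP machinery) and for a purely existential claim you could even skip the quantitative tail analysis entirely --- the privacy loss is finite almost surely since both densities are strictly positive, so its survival function tends to zero and a suitable finite $\epsilon$ exists for any $\delta>0$; the only substantive point then is uniformity over all neighboring pairs, i.e., reducing to the worst-case shift $\lVert v\rVert_\beta=\Delta_\beta f$, which you address. The paper's route has the advantage of slotting directly into the RDP/PRV accounting framework used in the rest of the paper. Your explicit coordinate-wise factorization of the multivariate privacy loss is also something the paper defers to its separate dimension-independence argument in \Cref{appendix:dim_independence_GG}, so in that respect your write-up is more self-contained. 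One small caution: your displayed log-likelihood ratio has the scale factor $\tfrac{1}{\sigma\Delta_\beta f}$ where the paper's parametrization of $\mathcal{N}_\beta(0,\sigma\cdot\Delta_\beta f)$ would give $\tfrac{1}{(\sigma\Delta_\beta f)^{\beta}}$ in the exponent; this does not affect the existential conclusion but should be fixed for consistency.
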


To prove Theorem \ref{thm.ggprivacy}, we first show that for all $\mu \geq 0$ (corresponding to the difference in the value of $f$ on neighboring databases) and all $\alpha>1$, the $\alpha$-\Renyi divergence between $\mathcal{N}_{\beta}(0, \sigma)$ and $\mathcal{N}_{\beta}(\mu, \sigma)$ is bounded by some finite $r$; this step is shown formally in Lemma \ref{lemma:renyi_div_is_bounded} in Appendix \ref{appendix:rdp_of_gg_is_bounded}. Bounded \Renyi divergence means that the $GG_{\beta, \sigma}(f,D)$ mechanism satisfies $(\alpha, r)$-RDP, and by the RDP-to-DP conversion of \citet{RDP_2017}, it also satisfies $(r + \frac{\log 1/ \delta}{\alpha -1}, \delta)$-DP for any $\delta \in (0,1)$. A formal proof of Theorem \ref{thm.ggprivacy} is given in Appendix \ref{appendix:gg_is_private}.

In \Cref{appendix:subsampling}, we also introduce and analyze the Sampled Generalized Gaussian Mechanism, SGG (\Cref{alg:SGG_mechanism}), which is a variant of the GG Mechanism that first applies Poisson subsampling to the input database, evaluates the function $f$ on the sample, and then adds Generalized Gaussian noise to the result. This mechanism is motivated by \emph{privacy amplification by subsampling}, which strengthens privacy guarantees without increasing the level of noise added by the mechanism, by subsampling the database before applying a DP mechanism. It is particularly popular in ML applications, such as the Sampled Gaussian Mechanism (SGM) \citep{SGM_rdp} and \DPSGD, because it formalizes the privacy gains intuitively brought by computing each update on the small subset of training examples selected to form each minibatch. 

\subsection{Privacy Accounting for GG Mechanisms}\label{s.accounting}

We focus on the PRV accountant because it is implemented in common codebases such as Opacus \citep{opacus_paper} and it empirically provides tighter guarantees than other accountants \citep{microsoft_PRV_2}. In this work we extend the PRV accountant to work for privacy accounting of arbitrary DP mechanisms such as the GG mechanism, which do not typically exist in closed-form, providing a framework for investigating alternative DP mechanisms. In order to calculate the privacy consumed using a PRV accountant, one must simply compute the CDF of the PRV. In \Cref{appendix:analytic_single_dim_PRV}, we extend the known PRVs for \Laplace~and \Gaussian~Mechanisms and compute a closed-form expression for the PRV of the Generalized Gaussian Mechanism, which enables us to apply the PRV accountant.

We compute the privacy guarantees of $GG_{\beta,\sigma}(f,D)$ using the PRV accountant by sampling from the appropriate $\mathcal{N}_\beta(\mu, \sigma)$ distribution, and numerically computing an empirical estimate of the CDF of $Y$. \citep{microsoft_PRV_2} provides a tight estimate of the error in the PRV accountant's estimate; our work relies on this computation and provides a similarly tight error analysis that bounds the additional error from sampling, which is presented in~\Cref{appendix:privacy_in_practice}. We then plug in this CDF as input to the ComposePRV algorithm of \citet{microsoft_PRV_2}, which takes the CDFs of the PRVs of the composed mechanisms, and returns a composed privacy curve $\delta(\epsilon)$, providing the unique $\epsilon$ value for a specified choice of $\delta$.  We call this method the \textit{Sampled PRV accountant}; the implementation of the PRV accountant and Sampled PRV accountant is described in greater detail in~\Cref{appendix:privacy_in_practice}.

Figure \ref{fig:epsilon_dp_as_a_function_of_beta} illustrates the resulting value of $\epsilon$ as a function of $\sigma$ for different values of $\beta$ and fixed values of $\delta=1e-5$ and $\Delta f =1$.  Observe that all curves have a similar structural form to the known \Gaussian\ ($\beta=2$) and \Laplace\ ($\beta=1$) Mechanisms, and that as $\beta$ grows, the same $(\epsilon, \delta)$-DP guarantee necessitates a larger value of $\sigma$. In the figure, we show the privacy curves for a single composition of the GG mechanism, but importantly, these privacy curves and their relative differences change as a function of the number of times the mechanisms are composed. 

\begin{figure}[ht]
    \centering
    \includegraphics[width=1.0\textwidth]{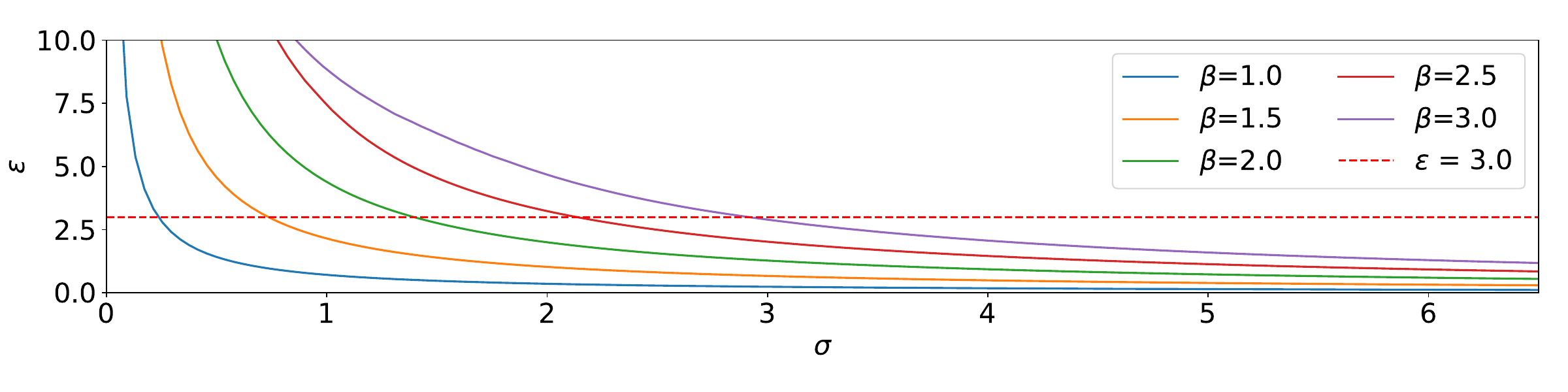}    
    \caption{\small{
    DP parameter $\epsilon$ as a function of noise parameter $\sigma$ with fixed $\delta = 10^{-5}$ and $\Delta f=1$, calculated using the PRV accountant. Mechanisms with equivalent DP guarantees can be identified by computing the privacy curve's intersection with a horizontal line, illustrated here with a red line for  (arbitrarily chosen) $\epsilon=4$. }}
    \label{fig:epsilon_dp_as_a_function_of_beta}
\end{figure}

When using sensitivity defined in the $\ell_{\beta}$ norm we observe that the privacy cost of the multi-dimensional mechanism is equivalent to the single dimension mechanism when $\beta \leq 2$. This makes privacy accounting for $GG_{\beta,\sigma}(f,D)$ dimension independent for $\beta$ in this range. We prove this in result \Cref{appendix:dim_independence_GG}, and also provide an analytic solution for the one dimensional GG mechanism in~\Cref{appendix:analytic_single_dim_PRV}. This dimension-independence is crucial for machine learning, as it allows us consider numerical, sampling-based approaches. Getting a sufficiently accurate empirical CDF requires sampling millions of times from the PRV, which entails sampling from the noise distribution used in the privacy mechanism distribution. Thus if privacy accounting depended on dimension, sampling numerically would be computationally inefficient in high dimensions. Even very small machine learning models typically have over 1M parameters (dimensions), requiring us to sample from 1M dimensional distributions millions of times.

For a given privacy budget $(\epsilon, \delta)$, varying $\beta$ will change the paired $\sigma$ and subsequently vary the weight of the tails of the distributions. As a result, for a particular definition of ``tail'', it is possible to derive the optimal GG mechanism that minimizes the likelihood of an outlier; we explore this in \Cref{section:outliers_for_equivalent_mechanisms} and believe may be of independent interest.

\section{GG Mechanism for PATE and Private \Argmax} \label{sec:argmax}

Private Aggregation of Teacher Ensembles (PATE) \citep{pate_2017} is an algorithm for training a private machine learning model. In PATE, a dataset is partitioned and a model is trained on each partition, then the models in the ensemble privately vote on the labels of an unlabeled dataset, and finally a model is trained on the privately labeled dataset. A more detailed explanation is presented in \Cref{appendix:argmax_and_pate}. The core step in PATE that provides formal privacy is the private computation of the \Argmax~over the votes of an ensemble of models, returning the plurality of a noisy histogram of votes. This is done based on a variation of the ReportNoisyMax algorithm, and is the center of our focus in this section.

 PATE has been extensively studied with variations of \Laplace~and \Gaussian~Mechanisms; \citet{pate_2017} employed LNMax, which privately aggregates votes from an ensemble of models by taking the argmax of a histogram after adding \Laplace~noise. Later, \citet{pate_2018} developed several variations based on adding \Gaussian~noise, including GNMax, and empirically found them to be superior to their \Laplace~counterpart. 
 
 We introduce a new algorithm, GGNMax, generalizing the GNMax and LNMax algorithms, which adds noise from the Generalized Gaussian Distribution $\mathcal{N}_\beta(0,\sigma)$ (Section \ref{s.privargmax}). We then show empirically in Section \ref{sec:argmax-experiments} that the effect of $\beta$ on average label accuracy is relatively weak and \Laplace~and \Gaussian~noise work produce nearly equivalent privacy-accuracy tradeoffs. We supplement these findings with simulations in \Cref{sec:argmax_simulations}, and we empirically show that the \Gaussian~mechanism is near-optimal when the correct label of the histogram aligns with the non-private majority vote.

\subsection{Private \Argmax~and the GGNMax Mechanism}\label{s.privargmax}

We present our Generalized Gaussian Private \Argmax~algorithm (GGNMax), which takes in noise parameters ($\beta, \sigma$), a set of count functions each with sensitivity $1$, and a database. The algorithm adds noise sampled from $\mathcal{N}_{\beta}(0, \sigma)$ to each function value and then returns the index of the function with the largest noisy value. This will be used as a key private subroutine of PATE, where the count functions will be used to privately count the number of votes for each label.

\begin{algorithm}[H]
    \caption{Generalized Gaussian Private \Argmax, GGNMax($\beta,\sigma,\{f\},D$) }\label{alg:beta_private_argmax}
    \begin{algorithmic}[1]
        \State \textbf{Input} noise parameters $\beta \geq 1$, $\sigma>0$, count functions $f_1,\ldots, f_N : \mathcal{D} \to \mathbb{R}$, database $D \in \mathcal{D}$
        \For{$i=1$ to $N$}
        \State Compute $f_i(D)$
        \State Sample $Y_i \sim \mathcal{N}_\beta(0,\sigma)$
        \EndFor
        \State {\bf Output} $\arg\max_{i\in[N]}\{f_i(D) + Y_i\}$
    \end{algorithmic}
\end{algorithm}

The privacy guarantees of GGNMax rely on the privacy of the GG Mechanism, which is used as a subroutine. Note that the GG Mechanism is known to be private for \emph{some} ($\epsilon,\delta$) pair (Theorem \ref{thm.ggprivacy}), so this theorem relates the parameters of the two mechanisms.

\begin{restatable}{theorem}{ggnmaxIsPrivate} \label{thm.ggargmaxpriv}
        If the ($\beta, \sigma$)-Generalized Gaussian Mechanism is $(\epsilon, \delta)$-DP for a fixed $\epsilon>0$ and $\delta\geq 0$, then ($\beta, \sigma$)-Generalized Gaussian Private \Argmax\ is also $(\epsilon, \delta)$-DP.
\end{restatable}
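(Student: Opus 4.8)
The plan is to prove this as a \emph{post-processing} statement: the noisy histogram that GGNMax computes before taking the argmax is itself an instance of the Generalized Gaussian Mechanism, and the argmax is a fixed, data-independent function of that histogram, so the privacy guarantee is inherited immediately from the assumed guarantee on the GG Mechanism.

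First I would make the reduction explicit. Collect the $N$ count functions into a single vector-valued function $F = (f_1, \ldots, f_N) : \mathcal{D} \to \mathbb{R}^N$, and observe that the vector $(f_1(D)+Y_1, \ldots, f_N(D)+Y_N)$ produced in the loop of Algorithm \ref{alg:beta_private_argmax} is exactly $F(D)$ perturbed by independent $\mathcal{N}_\beta(0,\sigma)$ noise in each coordinate. Writing $M$ for the map that outputs this noisy histogram, $M$ is precisely the $(\beta,\sigma)$-Generalized Gaussian Mechanism applied to $F$. The final output of GGNMax is then $g(M(D))$, where $g : \mathbb{R}^N \to [N]$ is the deterministic argmax operator (with any fixed tie-breaking rule; ties occur with probability zero under the continuous noise).

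Next I would invoke the two ingredients. By hypothesis the $(\beta,\sigma)$-GG Mechanism is $(\epsilon,\delta)$-DP, so $M$ satisfies $(\epsilon,\delta)$-DP. Since $g$ does not depend on the database, the post-processing property of differential privacy yields that $g \circ M$ is also $(\epsilon,\delta)$-DP. For completeness I would restate and apply the post-processing lemma directly: for every $T \subseteq [N]$ and neighbors $D,D'$, one has $\Pr[g(M(D)) \in T] = \Pr[M(D) \in g^{-1}(T)] \le e^\epsilon \Pr[M(D') \in g^{-1}(T)] + \delta = e^\epsilon \Pr[g(M(D')) \in T] + \delta$, which is exactly the $(\epsilon,\delta)$-DP condition for GGNMax.

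The step I expect to require the most care is verifying that $M$ really is covered by the assumed GG-Mechanism guarantee, i.e.\ matching the sensitivity scaling. Algorithm \ref{alg:GG_mechanism} scales its noise by $\Delta_\beta f$, whereas Algorithm \ref{alg:beta_private_argmax} adds noise of fixed scale $\sigma$ to functions that each have sensitivity $1$; so I would check that the $\ell_\beta$-sensitivity of the joint count vector $F$ is consistent with the $(\beta,\sigma)$ parameterization under which the GG Mechanism is assumed private, appealing if needed to the dimension-independence of GG accounting so that the joint-vector guarantee reduces to the single-dimensional one the hypothesis supplies. Once this bookkeeping is pinned down, the post-processing argument closes the proof without needing the more delicate coordinate-by-coordinate analysis of the classical ReportNoisyMax.
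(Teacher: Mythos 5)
Your proposed reduction has a genuine gap, and it is precisely at the step you flag as ``requiring the most care'': the sensitivity bookkeeping does not close. The hypothesis of the theorem is that the \emph{scalar} $(\beta,\sigma)$-GG Mechanism --- noise $\mathcal{N}_\beta(0,\sigma)$ calibrated to a sensitivity-$1$ function --- is $(\epsilon,\delta)$-DP. To run your post-processing argument you need the intermediate object, the full noisy histogram $M(D) = F(D) + (Y_1,\ldots,Y_N)$, to be $(\epsilon,\delta)$-DP. But the $\ell_\beta$-sensitivity of the joint vector $F=(f_1,\ldots,f_N)$ is not $1$: under the monotone/Lipschitz model the paper's proof actually uses (all $N$ counts may increase by up to $1$ when one record is added), it can be as large as $N^{1/\beta}$, and even for a pure vote histogram under swap neighbors it is $2^{1/\beta}$. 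The dimension-independence result you want to invoke says the multidimensional GG mechanism matches the one-dimensional one \emph{at the same $\ell_\beta$-sensitivity}; it does not let you pretend the joint sensitivity is the per-coordinate sensitivity. So $M$ with noise scale $\sigma$ is an under-noised GG mechanism relative to the assumed guarantee, it is not $(\epsilon,\delta)$-DP, and post-processing cannot rescue the claim. Releasing the full noisy histogram genuinely costs more privacy than releasing only its argmax --- that is the entire content of the theorem, and of ReportNoisyMax-style results generally, including the fact (stated right after the theorem) that the guarantee is independent of $N$.

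The paper instead proves this by the classical ReportNoisyMax argument adapted to $(\epsilon,\delta)$: fix the noise $r_{-i}$ on all coordinates except $i$, define the threshold $r^*$ above which index $i$ wins on $D$, use monotonicity and the Lipschitz property of the counts to show that $r_i \ge r^*+1$ suffices for $i$ to win on $D'$, and then apply the assumed $(\epsilon,\delta)$-DP of the \emph{one-dimensional} GG mechanism to the tail events $\{Z \ge r^*\}$ and $\{Z \ge r^*+1\}$, finally comparing $\Pr[i \mid D, r_{-i}]$ and $\Pr[i \mid D', r_{-i}]$ coordinate by coordinate. If you want to salvage a post-processing proof, you would have to restrict to the special case where exactly one count changes between neighbors (joint $\ell_\beta$-sensitivity exactly $1$) and invoke the dimension-independence lemma as a formal DP statement; that yields a strictly weaker theorem than the one stated, so you should adopt the coordinate-wise analysis instead.
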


A proof of Theorem \ref{thm.ggargmaxpriv} is given in Appendix \ref{appendix:private_argmax_is_private}. Note that as with the \Laplace-based Report Noisy Max algorithm \citep{privacy_book} and the \Gaussian-based GNMax algorithm \citep{pate_2018}, the privacy guarantee does not depend on number of functions $N$. This makes it a desirable method for computing the argmax of the output of the GG Mechanism, which consumes privacy proportional to the number of queries; this property allows GGNMax to have privacy-accuracy tradeoffs that scale well in high dimensions.

\subsection{PATE Experiments} \label{sec:argmax-experiments}

As an intermediate step in PATE, the algorithm produces a histogram of votes in order to privately label an unlabeled dataset. In order to isolate the effect of the specific privacy mechanism used for PATE, we focus our attention to study the effect of the $\beta$ parameter on the label accuracy of the private aggregation step in PATE. 

To study this for a realistic setting, we use the histograms generated on the MNIST and the Street View House Numbers (SVHN) dataset \citep{svhn}, produced by \citet{pate_2017}, which introduced the PATE algorithm. For each of these datasets, we start with a collection of $10,000$ histograms; each histogram is the collection of 250 models trained on a partition of the dataset, and evaluated on an unlabeled datapoint $x$. Then, for each histogram, we compute the private label produced by the GGNMax mechanism for 20 evenly spaced values $\beta \in [1,4]$ and 200 values of $\sigma \in [0.01, 7]$. For each fixed $(\epsilon, \delta)$ and value of $\beta$, we compute the average label accuracy with respect to the ground truth labels provided by the dataset, averaged across 25 trials for each datapoint.\footnote{We find that changing the number of trials has minimal effect on the mean or standard deviation of the average accuracy} 

\begin{figure}[tbh]
    \centering
    \subfigure{\includegraphics[width=0.95\textwidth]{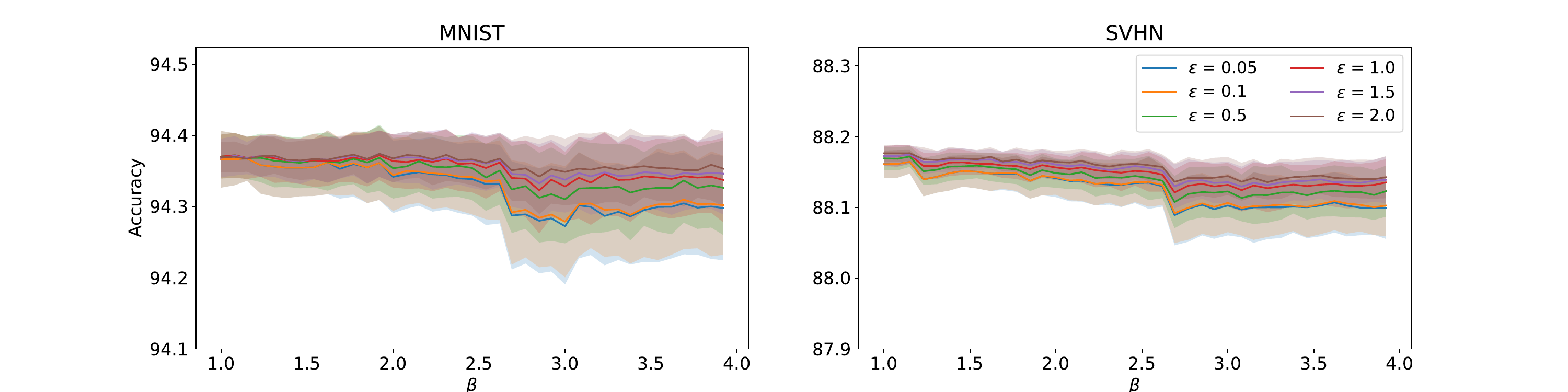}}
    \caption{\small{Average Label Accuracy of GGNMax mechanisms with equivalent privacy guarantees and varying values of $\beta$, evaluated on histograms which were generated by \citet{pate_2017} as an intermediate state produced by 250 teachers trained on MNIST (left) and SVHN (right)}. 
    }
    \label{fig:pate_argmax_accuracy}
\end{figure}

Figure \ref{fig:pate_argmax_accuracy} shows the average label accuracy for the GGNMax mechanism applied to histograms generated as part of PATE for MNIST and SVHN, as a function of $\beta$. We observe that values of $\beta$ in the region of $\beta \in [1,2.5]$ perform roughly equivalently, and for larger choices of $\beta$, the label accuracy decreases slightly.\footnote{Empirically, we observe a small drop in performance around $\beta=2.6$, which may be due to an artifact of how the mechanisms of equivalent DP guarantees are generated. First we add noise from an evenly distributed grid of $\beta \in [1,4]$ and $\sigma \in [0.01, 7]$ values, and then we compute the privacy for those $(\beta,\sigma)$ tuples and compute the corresponding mechanisms of approximately equal DP guarantees. This may cause us to overestimate the $\epsilon$ when solving for mechanisms with equivalent DP guarantees.}

To further extend these findings to more settings, we investigate the privacy-accuracy tradeoff of the GGNMax algorithm on simulated histograms in \Cref{sec:argmax_simulations}. We find that when the correct label is the argmax of the unnoised histograms, the GGNMax with $\beta$ values close to 2 perform near-optimally.

\section{GG Mechanism for Differentially Private Stochastic Gradient Descent} \label{sec:dpsgd}

We now turn to applications of the GG Mechanism in \DPSGD, which is one of the most commonly used mechanisms for private ML. We propose two simple changes to \DPSGD: replacing the Gaussian noise used in \DPSGD~with Generalized Gaussian noise, and using the $\ell_{\beta}$ norm for clipping rather than $\ell_2$.\footnote{We choose to use $\ell_{\beta}$ clipping rather than a fixed choice like $\ell_2$ because when using the $GG_{\beta, \sigma}(f,D)$ mechanism with $\ell_{\beta}$ sensitivity, privacy accounting is dimension-independent. See \Cref{appendix:dim_independence_GG}.} As articulated in Section \ref{s.dpprelims}, this change in sensitivity measure does not change the notion of neighboring databases, so optimization over $\beta$ shoulve be viewed as additional hyperparameter tuning to improve performance, rather than changing the nature of the privacy guarantees.

We call the resulting mechanism the $\beta$-Generalized Gaussian Differentially Private SGD ($\beta$-DP-SGD). This is also equivalent to changing the underlying mechanism in \DPSGD~from the Sampled Gaussian Mechanism to the Sampled Generalized Gaussian Mechanism (SGG), as \DPSGD~performs Poisson subsampling on the dataset before computing the gradient update and adding Gaussian noise. The $\beta$-DP-SGD algorithm is presented formally in Algorithm \ref{alg:beta_dpsgd}.\footnote{Note that when using a privacy accountant with a desired $(\epsilon,\delta)$-DP guarantee, the privacy accountant tracks the privacy used at each epoch and halts when the privacy budget has been reached, rather than running for a fixed training length $T$ as in Algorithm \ref{alg:beta_dpsgd}. We use the current presentation for simplicity.}  

\begin{algorithm}[tbh]
    \caption{$\beta$-Generalized Gaussian Differentially Private SGD, $\beta$-DP-SGD($\beta, \sigma, D,l,\eta,L,C,T$) }\label{alg:beta_dpsgd}
    \begin{algorithmic}[1]
        \State \textbf{Input:} noise parameters $\beta\geq 1$, $\sigma>0$, database $D =\{x_1,\ldots,x_N\}$ of points in $\mathbb{R}^d$, loss function $l(\theta , x_i)$, learning rate $\eta$, average group size $L$, clip norm $C$, and training epoch length $T$
        \State Initialize $\theta_1 \in \mathbb{R}^d$ randomly 
        \For{$t =1$ to $T$}
        \State Construct $L_t \subseteq D$ such that each $x_i \in D$ is included with probability $q = L/|D|$ (Poisson sampling)
        \For{each $i\in L_t$}
        \State {Compute $G_t(x_i) = \nabla_{\theta_t} l(\theta_t, x_i)$}
        \State {$\bar{G}_t(x_i) \gets G_t(x_i) / \max\big(1, \frac{\|G_t(x_i)\|_{\beta}}{C}\big)$}
        \EndFor
        \State Sample $Y_1,\ldots,Y_d \sim_{i.i.d.} \mathcal{N}_{\beta}(0, \sigma \cdot C)$ 
        \State $\tilde{G}_t \gets \frac{1}{L}\left( \sum_i \bar{G}_t(x_i) + \vec{Y}\right)$ 
        \State { $\theta_{t+1} \gets \theta_{t} - \eta \tilde{G}_t$}
        \EndFor
        \State {\bf Output:} $\theta_T$
    \end{algorithmic}
\end{algorithm}

The following theorem states that $\beta$-\DPSGD~is differentially private, which we prove in \Cref{proof:dpsgd_is_dp}.

\begin{restatable}{theorem}{DPSGDIsDP}\label{thm:DPSGD_is_DP}
For any $\delta>0$, $\beta \geq 1$, $\sigma >0$, $f: \mathcal{D} \rightarrow \mathbb{R}^d$, database $D \in \mathcal{D}$, for any loss function of the form $l(\theta, x_i)$, learning rate $\eta \geq 0$, average group size $L$, clipping norm $C \geq 0$, there exists a finite $\epsilon > 0$ such that the algorithm $\beta$-\DPSGD$(\beta, \sigma, D, l, \eta, L,C)$ satisfies $(\epsilon, \delta)$-DP.
\end{restatable}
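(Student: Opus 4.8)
The plan is to show that $\beta$-\DPSGD\ is, up to a data-independent initialization and deterministic post-processing, an adaptive $T$-fold composition of the Sampled Generalized Gaussian Mechanism, and then to appeal to the privacy of SGG (Theorem \ref{thm.sggprivacy}) together with the post-processing and composition properties of DP. First I would dispose of the non-private parts: the initialization $\theta_1$ is drawn independently of the data, so releasing it leaks nothing. At iteration $t$, I condition on the previously produced iterate $\theta_t$; the only data-dependent randomized quantity then computed is $\tilde{G}_t = \frac{1}{L}\big(\sum_{i \in L_t} \bar{G}_t(x_i) + \vec{Y}\big)$, where $L_t$ is a Poisson subsample of $D$ at rate $q = L/|D|$ and $\vec{Y}\sim \mathcal{N}_\beta(0,\sigma C)^d$. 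Writing $g_t(D) = \sum_{x_i} \bar{G}_t(x_i)$ for the sum of the $\ell_\beta$-clipped gradients, the map $D \mapsto \sum_{i \in L_t}\bar{G}_t(x_i) + \vec{Y}$ is exactly the SGG mechanism applied to $g_t$ with subsampling rate $q$ and noise parameter $\sigma C$, while the scaling by $1/L$ and the update $\theta_{t+1} = \theta_t - \eta\tilde{G}_t$ are deterministic post-processing of this output.

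The key quantitative step is the sensitivity bound. Because each per-example gradient is divided by $\max(1, \|G_t(x_i)\|_\beta / C)$, every clipped gradient satisfies $\|\bar{G}_t(x_i)\|_\beta \le C$. For neighboring $D, D'$ differing in a single record $x^\ast$, the clipped-gradient sums differ by exactly $\pm\,\bar{G}_t(x^\ast)$, so $\Delta_\beta g_t = \max_{D,D'}\|g_t(D)-g_t(D')\|_\beta \le C$. This is precisely the $\ell_\beta$ sensitivity to which the noise scale $\sigma C = \sigma\cdot\Delta_\beta g_t$ in Algorithm \ref{alg:beta_dpsgd} is calibrated, so each iteration meets the hypotheses of Theorem \ref{thm.sggprivacy}. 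Hence, for any target per-step failure probability $\delta_0 > 0$, that theorem (which itself rests on the existential GG guarantee, Theorem \ref{thm.ggprivacy}) supplies a finite $\epsilon_0$ for which the iteration-$t$ mechanism is $(\epsilon_0, \delta_0)$-DP; post-processing preserves this, so $\theta_{t+1}$ is an $(\epsilon_0,\delta_0)$-DP release given $\theta_t$.

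Finally I would assemble the $T$ iterations by adaptive composition: each step is a mechanism whose internals (the gradients $G_t$, hence the clipped sum $g_t$) depend on the earlier outputs $\theta_1,\dots,\theta_t$, so the standard adaptive composition theorem for $(\epsilon,\delta)$-DP applies. Taking $\delta_0 = \delta/T$ and invoking basic composition already yields a finite $\epsilon \le T\epsilon_0$ with overall failure probability at most $\delta$, establishing $(\epsilon,\delta)$-DP; tighter RDP or PRV accounting would only shrink $\epsilon$ but is unnecessary for an existence claim. The only real obstacle is the bookkeeping that legitimizes treating each iteration as a genuine instance of SGG: one must verify both that the $\ell_\beta$ clipping produces the stated sensitivity bound and that the Poisson-subsampling-plus-$\mathcal{N}_\beta$-noise step matches the SGG definition exactly, after which the result reduces to invoking Theorem \ref{thm.sggprivacy} and composing. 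Since we need only \emph{some} finite $\epsilon$ rather than a tight value, no delicate accounting is required.
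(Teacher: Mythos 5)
Your proposal is correct and follows essentially the same route as the paper's proof: view $\beta$-\DPSGD\ as an adaptive $T$-fold composition of the SGG mechanism with deterministic post-processing, invoke Theorem \ref{thm.sggprivacy} per step, and compose. You actually fill in two details the paper leaves implicit — the $\ell_\beta$ sensitivity bound $\Delta_\beta g_t \le C$ justifying the noise calibration, and the $\delta_0 = \delta/T$ split so the failure probabilities sum correctly under composition — which only strengthens the argument.
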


\subsection{\DPSGD~Experiments} \label{sec:dpsgd_experiments}

We seek to find a relationship between $\beta$ and \DPSGD's privacy-accuracy trade-off for non-convex optimization tasks by comparing test-accuracy as a function of $\epsilon$, for different $\beta$. 

To provide a robust evaluation of the role of $\beta$ in the $\beta$-\DPSGD\ algorithm, we focus on 4 datasets in different domains: CIFAR-10 \citep{cifar_10} and Street View House Numbers (SVHN) \citep{svhn}, two common computer vision datasets; the Adult dataset \citep{adult_dataset}, a tabular dataset with a binary classification task; and the IMDB dataset \citep{imdb_dataset}, a collection of movie reviews meant for binary sentiment classification. We train three different architectures: for the vision classification tasks, we use the models described in \cite{handcrafted_dp}, which previously achieved SOTA results for the $\epsilon \leq \sim 2.5$ regime (ScatterNet CNNs). For the the Adult Dataset we train a 2-layer Fully Connected Network (FCN), and for the IMDB dataset, we train a Long-Short Term Memory (LSTM) network with $\sim$ 1M parameters.

A full description of the hyperparameters, datasets, and models is included in \Cref{appendix:hyperparameters_in_dpsgd}. Each experiment is run 3 times, which we found sufficient given standard deviations that generally fell below 0.3\%.\footnote{Although 3 iterations would typically be considered a small number, each run of the mechanism is already aggregating over $T$ rounds of the GG mechanism and corresponding gradient updates.} We find that when fixing a choice of $\beta$ and allowing for hyperparameter tuning along all other hyperparameters, we see a weak but noticeable relationship with final test accuracy. We report the maximum test-accuracy for each dataset, i.e., the maximum across all hyperparameters under the fixed architecture, and the standard deviation across 3 trials for each set of hyperparameters.

We restrict to $\beta \in [1,2]$ because this is the range for which the GG Mechanism is \emph{dimension-independent}, which enables computational efficiency of our experiments. Specifically, to compute the privacy accounting for each of the ML models, we use the Sampled PRV accountant of Section \ref{s.accounting} (with additional details in \Cref{appendix:privacy_in_practice}). The Sampled PRV accountant relies on generating an empirical CDF by sampling a large number of times from the PRV of the mechanism, which entails sampling from the noise distribution used in the private mechanism. The dimension-independent privacy accounting property means that when sensitivity is measured in the $\ell_\beta$ norm, the PRV for a $d$-dimensional GG mechanism is identical to the PRV for the corresponding $1$-dimensional mechanism. Thus running the Sampled PRV accountant only requires drawing samples from a $1$-dimensional GG distribution. This is a critical computational saving: 
for neural networks with $d > 10^6$ parameters, sampling from a $d$-dimensional distribution millions of times would be computationally infeasible. In Appendix \ref{appendix:dim_independence_GG}, we prove that for $\beta \leq 2$, privacy accounting is dimension independent.

\Cref{fig:dpsgd_results_scatter__horizontal} presents the maximum test-accuracy for 3 different values of $\epsilon$, evaluated for 3 different models on 4 different datasets (ScatterNet on CIFAR-10 and SVHN, a FCN on the Adult dataset, and an LSTM on the IMDB dataset), across varying values of $\beta \in [1,2]$.

In this regime of $\beta \in [1,2]$, we find that $\beta = 2$ empirically outperforms other choices of $\beta$. 
While our work is able to explain why the Gaussian mechanism performs well in practice, computational challenges prevent us from drawing more general conclusions on its optimality across a wider range of values of $\beta$ ($\beta >2$). Extending our empirical results to the $\beta>2$ setting would require further advances to privacy accounting, which we leave to future work.

In \Cref{app:hyperparams_in_dpsgd}, we explore the relationship of individual hyperparameters with $\beta$ and find a noticeable effect of $\beta$ on the final test-accuracy, particularly for smaller $\epsilon$.

\begin{figure}[htb]
    \centering
    \includegraphics[width=0.8\textwidth]{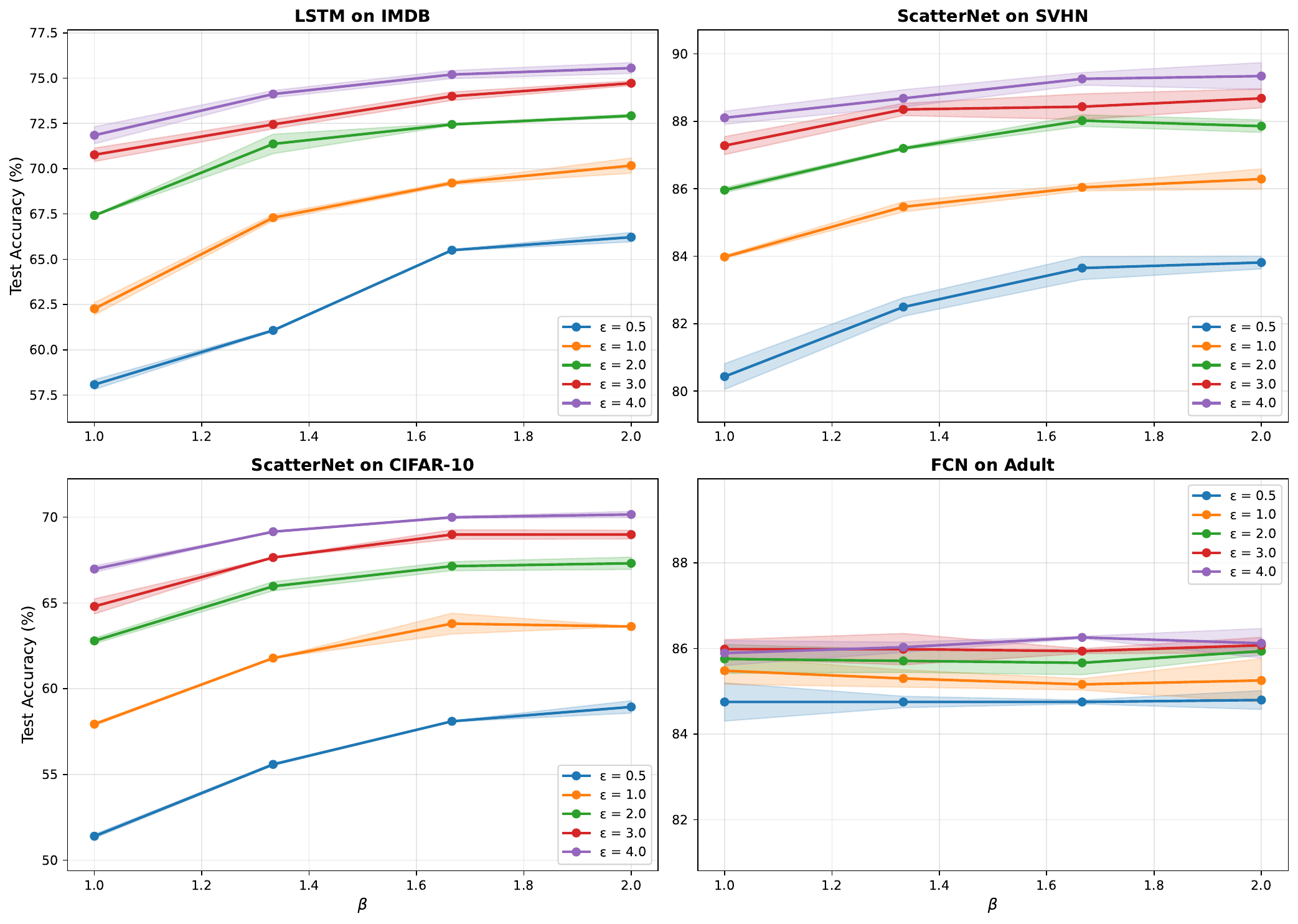}
    \caption{\small{$\beta$-\DPSGD~results for the corresponding architectures trained on CIFAR-10, SVHN, Adult, and IMDB, for $\delta = 10^{-6}$. The test-accuracy is reported for five values of $\epsilon$, computed for each architecture-dataset pair. A vertical dashed line denotes the \Gaussian~mechanism. Note: Some values are not presented (for lower $\epsilon$), because larger $\beta$ tends to consume more privacy per step, and the model's privacy budget exceeds the target in the first step.}}
    \label{fig:dpsgd_results_scatter__horizontal}
\end{figure}

\Cref{tab:dpsgdresults} provides numerical results on the performance of $\beta$-\DPSGD~compared to that of regular \DPSGD~across different $\epsilon$ values for the example case of CIFAR-10. We observe that test-accuracy for $\beta=2$ is as-good or better that other values of $\beta$ we explored in $\beta \in [1,2]$. Most existing SOTA results use privacy guarantees provided by an RDP accountant, which can overestimate privacy loss relative to PRV accounting. In order to disambiguate empirical differences due to improved accounting versus the GG mechanism, we recreate existing SOTA results using both the PRV and RDP accountants. As expected, we see that methods using the PRV accountant outperform those using the RDP accountant.

The comparison of $\beta$-\DPSGD~versus \DPSGD\ in \Cref{tab:dpsgdresults} reveals that within the range $\beta \in [1,2]$, we found that $\beta=2$ (Gaussian) performs as well as or better than other values. This implies that smaller choices of $\beta<2$ are likely not optimal for this task, but it is possible for a larger $\beta>2$ to perform even better.

\begin{table}[tbh]
    \centering
     \begin{tabular}{cllrrrr}
        \toprule
         Accountant & Training Algorithm & $\epsilon =1$ & $\epsilon =2$ & $\epsilon =3$ & $\beta$ value \\
        \midrule
        RDP& \DPSGD &60.3 (-)& 67.2 (-) & 69.3 (-) & - \\ 
        \hline 
        PRV & \DPSGD & 63.5 (0.4) & 67.6 (0.1) & 69.2 (0.3) & - \\
        PRV & $\beta$-\DPSGD~(\textit{ours}) &\textbf{63.7} (0.1) & \textbf{67.7} (0.1) & \textbf{69.4} (0.1) & (2, 2, 2) \\
        \bottomrule
    \end{tabular}
    \caption{\small{SOTA Results for private ML, evaluated on CIFAR-10, grouped by privacy accountant. The model used is a CNN model ($\sim$ 5e5 parameters) trained on Scatternet features, which we refer to as ScatterNet. The $\beta$ value column is set to `-' if trained with traditional \DPSGD, otherwise it reports a tuple of $\beta$ values that achieve max accuracy for $\epsilon =1,2,3$, respectively (ties broken by smaller std values). Best performance for each $\epsilon$ is bolded.
    } }
    \label{tab:dpsgdresults}    
\end{table}

\section{Conclusion} \label{sec:discussion}
In this work, we have studied the Generalized Gaussian Mechanism and its privacy guarantees  -- as well as its variants SGG, GGNmax, and $\beta$-DP-SGD. We have demonstrated the application of these mechanisms to private ML, particularly for PATE (via private \Argmax) and \DPSGD. 

While one may hope that providing an additional hyperparameter $\beta$ to control the noise distribution would allow for improvements in the privacy-accuracy tradeoff of DP mechanisms, our empirical evaluations instead reveal that optimizing $\beta$ has a relatively modest impact on test accuracy. In particular, values close to $\beta =2$ (i.e., Gaussian noise) exhibit near-optimal performance, which provides insight into the popularity of \Gaussian~noise in \DPSGD, PATE, and other private ML applications. 
 
We also showed how the PRV accountant can be used for Generalized Gaussian Mechanism and its variants, and that privacy accounting is dimension-independent for the $\beta$-GG Mechanism, using $\ell_\beta$ sensitivity. This dimension-independence has the potential to dramatically improve the computational costs of privacy accounting for this family of mechanisms.

An interesting extension for future work is that the GG Mechanism samples noise independently across dimensions. For the Laplace Mechanism ($\beta=1$), it is known that sampling from a high-dimensional Laplace variant can improve performance in private ML settings such as private empirical risk minimization \citep{CMS11}. Multi-dimensional Gaussian distributions are the only spherically symmetric distribution where all the component random variables are independent \cite{gaussian_independence}, so such high-dimensional variants would not improve performance for $\beta=2$. This suggests that for $\beta \in [1,2)$, it may be possible to improve utility for the same privacy guarantee by sampling from a single high-dimensional distribution rather than sampling independently for each coordinate.

\bibliography{citations}

\bibliographystyle{plainnat}

\appendix

\newpage

\section{Sampled Generalized Gaussian Mechanism} \label{appendix:subsampling}

\emph{Privacy amplification by subsampling} is a technique to strengthen DP guarantees without increasing the level of noise, by randomly sampling a subset of the input dataset before applying a DP mechanism; it is commonly used in ML applications. The DP parameters improve proportionally to the subsampling rate, as formalized in Theorem \ref{thm:privacy_amplification}. Intuitively, each point is less likely to be used in the analysis, and the noise from sampling can be ``counted'' toward the privacy budget.

For our mechanism, we will focus on \emph{Poisson subsampling}, a sampling process where each element of a population is included in a set according to the outcome of an independent Bernoulli trial. We use $S(q)$ to refer to a Poisson sampling procedure with sampling rate $q$.

\begin{theorem}[Privacy Amplification by Poisson Subsampling \citep{privacy_amplification_theorem_1} \citep{privacy_amplification_theorem_2}] \label{thm:privacy_amplification}
Let $\mathcal{M}$ be an $(\epsilon,\delta)$-DP mechanism, and let $\mathcal{S}(q)$ be a Poisson sampling procedure with sampling rate $q$. Then $\mathcal{M} \circ \mathcal{S}(q)$ is ($O(\log(q)\epsilon),q\delta$)-DP.
\end{theorem}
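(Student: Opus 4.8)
The plan is to prove the amplification bound via a coupling-plus-mixture argument that reduces the subsampled mechanism to a single invocation of the base mechanism $\mathcal{M}$ on a pair of neighboring inputs. First I would fix two neighboring databases and, since Poisson sampling treats each record by an independent Bernoulli trial, take them in the add/remove form $D' = D \cup \{x^*\}$ (the case $D = D' \cup \{x^*\}$ being symmetric). The central observation is that the Bernoulli trials governing the shared records of $D$ are identically distributed whether we run on $D$ or on $D'$, so I would couple them: let $T$ be the random subset of $D$ selected by $\mathcal{S}(q)$. Then $\mathcal{M}\circ\mathcal{S}(q)(D)$ has the same law as $\mathcal{M}(T)$, while $\mathcal{M}\circ\mathcal{S}(q)(D')$ has the same law as the mixture that outputs $\mathcal{M}(T)$ with probability $1-q$ and $\mathcal{M}(T\cup\{x^*\})$ with probability $q$, using the same $T$.

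Next, conditioning on a fixed value of $T$, I would invoke the $(\epsilon,\delta)$-DP guarantee of $\mathcal{M}$ between the neighbors $T$ and $T\cup\{x^*\}$. For any measurable $S$ this gives $\Pr[\mathcal{M}(T\cup\{x^*\})\in S] \le e^{\epsilon}\Pr[\mathcal{M}(T)\in S] + \delta$, and plugging this into the mixture yields the conditional bound $(1-q+qe^{\epsilon})\Pr[\mathcal{M}(T)\in S] + q\delta = (1+q(e^{\epsilon}-1))\Pr[\mathcal{M}(T)\in S] + q\delta$. Taking the expectation over $T$ (the law of total probability absorbs the coupling) turns $\Pr[\mathcal{M}(T)\in S]$ into $\Pr[\mathcal{M}\circ\mathcal{S}(q)(D)\in S]$, establishing the forward inequality with amplified parameters $\epsilon' = \log(1+q(e^{\epsilon}-1))$ and $\delta' = q\delta$. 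Since $\log(1+q(e^{\epsilon}-1)) \le q(e^{\epsilon}-1)$, this recovers the advertised order (linear in $q$, and $O(q\epsilon)$ for bounded $\epsilon$); I would note that the $\delta$ term is charged only the factor $q$ because the failure event of $\mathcal{M}$ can be triggered solely in the branch where $x^*$ is actually sampled.

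The step I expect to be the main obstacle is the reverse inequality, $\Pr[\mathcal{M}\circ\mathcal{S}(q)(D)\in S] \le e^{\epsilon'}\Pr[\mathcal{M}\circ\mathcal{S}(q)(D')\in S] + q\delta$, since here the mixture structure sits on the $D'$ side rather than the $D$ side, so it cannot simply be read off the same decomposition. My plan is to again condition on $T$, write $\Pr[\mathcal{M}\circ\mathcal{S}(q)(D')\in S\mid T]$ in its mixture form, apply the other direction of the DP guarantee, namely $\Pr[\mathcal{M}(T)\in S] \le e^{\epsilon}\Pr[\mathcal{M}(T\cup\{x^*\})\in S] + \delta$, and then verify the scalar inequality $1 \le e^{\epsilon'}(1-q+qe^{-\epsilon})$ for the same $\epsilon'$, which reduces to $q(1-q)(e^{\epsilon/2}-e^{-\epsilon/2})^2 \ge 0$ and hence holds for all $q\in[0,1]$; the $q\delta$ slack must be tracked carefully through the substitution, and in fact it comes out favorably rather than inflated. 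Once both directions hold for arbitrary neighbors and arbitrary $S$, the theorem follows, and I would finally remark that the argument is agnostic to the internal workings of $\mathcal{M}$, so it applies verbatim to the Sampled Generalized Gaussian Mechanism by instantiating $\mathcal{M}$ as the GG Mechanism of Theorem \ref{thm.ggprivacy}.
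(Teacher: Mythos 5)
The paper does not prove Theorem~\ref{thm:privacy_amplification}; it imports the result from the cited references and uses it as a black box (e.g.\ in the proofs of Theorems~\ref{thm.sggprivacy} and~\ref{thm:DPSGD_is_DP}), so there is no in-paper argument to compare yours against. Judged on its own, your proof is the standard and correct coupling-plus-mixture derivation for Poisson subsampling under add/remove adjacency: the decomposition of $\mathcal{M}\circ\mathcal{S}(q)(D')$ as $(1-q)\,\mathcal{M}(T)+q\,\mathcal{M}(T\cup\{x^*\})$ over a shared sample $T$ is the right reduction, the forward direction gives $\epsilon'=\log\bigl(1+q(e^{\epsilon}-1)\bigr)$ and $\delta'=q\delta$ exactly as you state, and your identification of the reverse direction as the delicate step is accurate --- the needed scalar inequality $e^{\epsilon'}(1-q+qe^{-\epsilon})\ge 1$ does reduce to $q(1-q)\bigl(e^{\epsilon/2}-e^{-\epsilon/2}\bigr)^2\ge 0$, and the $\delta$ bookkeeping yields a coefficient $q\bigl(q+(1-q)e^{-\epsilon}\bigr)\le q$, so $q\delta$ indeed suffices. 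Two small remarks: your bound $\log\bigl(1+q(e^{\epsilon}-1)\bigr)$ is the clean, tight form, whereas the paper's ``$O(\log(q)\epsilon)$'' is at best a typo for $O(q\epsilon)$ (note $\log q<0$ for $q<1$), so your statement is actually the more defensible one; and for completeness you should dispose of the corner case $\Pr[\mathcal{M}(T)\in S]\le\delta$ in the reverse direction, where the lower bound $\Pr[\mathcal{M}(T\cup\{x^*\})\in S]\ge e^{-\epsilon}(\Pr[\mathcal{M}(T)\in S]-\delta)$ becomes vacuous but the claim still follows directly from the $(1-q)$ branch of the mixture.
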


\subsection{Sampled Generalized Gaussian Mechanism} \label{sec:SGG_def }
Next, we present the Sampled Generalized Gaussian Mechanism, SGG, which is a sampled variant of the GG Mechanism. It generalizes the Sampled Gaussian Mechanism \citep{SGM_rdp}, which is a common mechanism in private ML. This mechanism relies on privacy amplification by subsampling (Theorem \ref{thm:privacy_amplification}) to attain improved privacy guarantees relative to its non-sampled counterpart. We state the SGG Mechanism in terms of Poisson sampling because the PRV accountant is defined only for Poisson sampling; the mechanism can immediately be extended other types of sampling and the privacy guarantees would still hold under the appropriate accountant.

\begin{algorithm}[H]
    \caption{Sampled Generalized Gaussian Mechanism, $SGG_{\beta, \sigma,q}(f,D)$}\label{alg:SGG_mechanism}
    \begin{algorithmic}[1]
        \State \textbf{Input} noise parameters $\beta \geq 1$, $\sigma >0$, sample rate $q \in (0,1]$ a vector-valued function $f: \mathcal{D} \to \mathbb{R}^d$, database $D \in \mathcal{D}$
        \State Compute $l_{\beta}$ sensitivity of $f$: $\Delta_{\beta} f = \max_{D,D' \text{ neighbors}} \|f(D) - f(D')\|_{\beta}$
        \State $S = \emptyset$
        \For{each data element $x_j \in D$}
        \State With probability $q$, add $x_j$ to $S$
        \EndFor
        \For{$i=1$ to $d$}
        \State Sample $Y_i \sim \mathcal{N}_{\beta}(0, \sigma \cdot \Delta_{\beta} f)$
        \EndFor
        \State {\bf Output} $f(S) + (Y_1, \ldots Y_d)$ 
    \end{algorithmic}
\end{algorithm}

The privacy guarantees of the SGG Mechanism follow nearly immediately from privacy of the GG Mechanism (Theorem \ref{thm.ggprivacy}) and privacy amplification by sampling (Theorem \ref{thm:privacy_amplification}). 

\begin{theorem}\label{thm.sggprivacy}
    For any $\beta \geq 1$, $\sigma >0$, $\delta > 0$, $q \in (0,1]$, there exists a value $\epsilon$ such that $SGG_{\beta, \sigma,q}(\cdot,\cdot)$ satisfies $(\epsilon, \delta)$-DP.
\end{theorem}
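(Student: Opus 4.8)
The plan is to recognize $SGG_{\beta,\sigma,q}$ as the composition of an ordinary Generalized Gaussian Mechanism with a Poisson subsampling step, and then combine the two results already in hand: the existential privacy of the GG Mechanism (Theorem \ref{thm.ggprivacy}) and privacy amplification by subsampling (Theorem \ref{thm:privacy_amplification}). Concretely, I would write $SGG_{\beta,\sigma,q} = \mathcal{M} \circ \mathcal{S}(q)$, where $\mathcal{S}(q)$ is the Poisson sampler from Algorithm \ref{alg:SGG_mechanism} and $\mathcal{M}$ is the post-sampling mechanism that adds $\mathcal{N}_\beta(0, \sigma \Delta_2 f)$ noise independently to each coordinate of $f(S)$.

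First I would observe that $\mathcal{M}$ is itself an instance of the GG Mechanism of Algorithm \ref{alg:GG_mechanism}. The only discrepancy is cosmetic: Algorithm \ref{alg:SGG_mechanism} scales its noise by the $\ell_2$ sensitivity $\Delta_2 f$, whereas Algorithm \ref{alg:GG_mechanism} scales by $\Delta_\beta f$. Since both sensitivities are finite positive constants, I absorb their ratio into the noise parameter by setting $\sigma' = \sigma \Delta_2 f / \Delta_\beta f > 0$, so that $\mathcal{M} = GG_{\beta,\sigma'}(f,\cdot)$. Theorem \ref{thm.ggprivacy} then guarantees that for any chosen $\delta_0 \in (0,1)$ there is a finite $\epsilon_0$ for which $\mathcal{M}$ satisfies $(\epsilon_0,\delta_0)$-DP.

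Next I would apply Theorem \ref{thm:privacy_amplification} to the pair $(\mathcal{M}, \mathcal{S}(q))$, which yields that $SGG_{\beta,\sigma,q} = \mathcal{M}\circ\mathcal{S}(q)$ is $(O(\log(q)\,\epsilon_0),\, q\delta_0)$-DP; the amplified privacy loss is still finite because $\epsilon_0$ is finite and $q \in (0,1]$. It only remains to match the target $\delta$. Since $q \in (0,1]$, I can always choose $\delta_0 \in (0,1)$ small enough that $q\delta_0 \leq \delta$ (for instance $\delta_0 = \tfrac{1}{2}\min(1,\delta/q)$); then the amplified guarantee together with the monotonicity of $(\epsilon,\delta)$-DP in its second argument shows that $SGG_{\beta,\sigma,q}$ is $(\epsilon,\delta)$-DP for the corresponding finite $\epsilon$.

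The argument has no substantive obstacle, since both ingredients are supplied by earlier results; the only care required is bookkeeping. The two points to get right are the $\ell_2$-versus-$\ell_\beta$ scaling of the noise, which I handle by reparametrizing $\sigma$, and the change in $\delta$ induced by amplification, which I handle by choosing $\delta_0$ appropriately and invoking monotonicity in $\delta$. With these in place, the conclusion follows nearly immediately from the composition of the two cited theorems.
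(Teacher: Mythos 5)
Your proposal is correct and takes essentially the same route as the paper: decompose $SGG_{\beta,\sigma,q}$ as the GG Mechanism applied after Poisson subsampling, invoke Theorem \ref{thm.ggprivacy} for the noise-addition step, and then apply Theorem \ref{thm:privacy_amplification}. You are in fact slightly more careful than the paper's own two-line proof, which silently ignores both the $\ell_2$-versus-$\ell_\beta$ scaling in Algorithm \ref{alg:SGG_mechanism} and the need to match the target $\delta$ after amplification sends $\delta_0 \mapsto q\delta_0$.
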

\begin{proof}
Theorem \ref{thm.ggprivacy} states that for any $\beta \geq 1$, $\sigma >0$, $\delta > 0$, there exists an $\epsilon>0$ for which $GG_{\beta, \sigma,q}(\cdot,\cdot)$ is $(\eps,\delta)$-DP. The SGG Mechanism simply applies Poisson Subsampling before running this GG Mechanism on the subsampled dataset, so by Theorem \ref{thm:privacy_amplification}, SGG satisfies ($O(\log(q)\epsilon),q\delta$)-DP for the same $\eps$.
\end{proof}

\subsection{$\beta$-\DPSGD~is Differentially Private}
The $\beta$-\DPSGD\ algorithm presented in \Cref{alg:beta_dpsgd} also relies on privacy amplification by subsampling for its privacy guarantees. At each time $t$, the algorithm performs Poisson subsampling with $q=L/|D|$ before its gradient measurement and update steps. We first provide the existential privacy result that $\beta$-\DPSGD\ is $(\eps,\delta)$-DP for \emph{some} privacy parameters (Theorem \ref{thm:DPSGD_is_DP}), and then provide Corollary \ref{corollary:DPSGD_privacy_bounds}, which rephrases the privacy guarantee in a way that enables use of the PRV accountant across all $T$ rounds.

\DPSGDIsDP*
\begin{proof} \label{proof:dpsgd_is_dp} 
\Cref{alg:beta_dpsgd} can be viewed as an adaptive $T$-fold composition of the SGG Mechanism, with post-processing on its results. Specifically, the algorithm's final output $\theta_T$ can be written as $\theta_T = \theta_0 + \eta \sum_{t}^T \tilde{G_t}$, where the $\tilde{G_t}$ is the postprocessed output of an instantiation of SGG with $q=L/|D|$ and function $f(D) = \sum_i \nabla_{\theta_t} l(\theta_t, x_i) / \max\big(1, \frac{\|\nabla_{\theta_t} l(\theta_t, x_i)\|_\beta}{C}\big)$ and noise vector $\vec{Y}$ sampled with each entry $Y_i \sim \mathcal{N}_\beta(0,\sigma C)$. Since -- given $\beta, \delta, \sigma, q$ -- SGG is ($\eps,\delta$)-DP for some $\epsilon$ (Theorem \ref{thm.sggprivacy}), then the adaptive composition and postprocessing of these mechanisms' outputs that is performed in \Cref{alg:beta_dpsgd} will also be $(\eps,\delta)$-DP for some $\epsilon$.
    \end{proof}

Corollary \ref{corollary:DPSGD_privacy_bounds} restates this result in a way that directly allows the PRV accountant to be applied for tighter composition across all $T$ rounds.

    \begin{restatable}{corollary}{DPSGDPrivacyBounds}\label{corollary:DPSGD_privacy_bounds}
        If the $SGG_{\beta, \sigma, q}(f,D)$ mechanism composed $T$ times on function $f$ with sensitivity $\Delta f$ satisfies $(\epsilon, \delta)$-DP, then for any $L\leq |D|$, $C = \Delta f$, and loss function $l(\theta, x_i)$, the $\beta$-\DPSGD$(\beta,\sigma, D, l, \eta, L, C)$ also satisfies $(\epsilon, \delta)$-DP.
    \end{restatable}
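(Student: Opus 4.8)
The plan is to reuse the structural decomposition already established in the proof of Theorem~\ref{thm:DPSGD_is_DP}, but to transfer the privacy guarantee of the \emph{$T$-fold composition} of the Sampled Generalized Gaussian Mechanism directly, rather than invoking only its single-shot guarantee. Concretely, I would show that $\beta$-\DPSGD, as presented in \Cref{alg:beta_dpsgd}, is exactly a post-processing of the adaptive $T$-fold composition of $SGG_{\beta,\sigma,q}$ with $q = L/|D|$, so that any $(\epsilon,\delta)$-DP guarantee that holds for the composed SGG mechanism is inherited verbatim by $\beta$-\DPSGD.

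First I would identify each round $t$ of the algorithm with a single invocation of $SGG_{\beta,\sigma,q}$. The per-round function is $f_t(D) = \sum_i \bar{G}_t(x_i)$, where $\bar{G}_t(x_i) = G_t(x_i)/\max(1, \|G_t(x_i)\|_\beta / C)$ is the clipped gradient. The clipping step guarantees $\|\bar{G}_t(x_i)\|_\beta \leq C$ for every $i$, so adding or removing a single record changes $f_t$ by at most $C$ in $\ell_\beta$ norm; hence $\Delta_\beta f_t \leq C$. Setting $C = \Delta f$ makes this sensitivity bound match the noise scale $\mathcal{N}_\beta(0, \sigma C)$ used in \Cref{alg:beta_dpsgd}, which is precisely the noise that $SGG_{\beta,\sigma,q}$ adds to a function of sensitivity $C$. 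Thus each round of $\beta$-\DPSGD is an instance of $SGG_{\beta,\sigma,q}$ applied to $f_t$.

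Next I would argue that everything the algorithm does beyond these SGG invocations is post-processing. The rescaling $\tilde{G}_t = \frac{1}{L}\bigl(\sum_i \bar{G}_t(x_i) + \vec{Y}\bigr)$ is a deterministic function of the SGG output, and the gradient update $\theta_{t+1} = \theta_t - \eta \tilde{G}_t$ depends on the private data only through $\tilde{G}_t$. Because $f_t$ depends on the current iterate $\theta_t$, and $\theta_t$ is a function of the outputs of rounds $1,\dots,t-1$, the rounds form an \emph{adaptive} composition; the uniform sensitivity bound $\Delta_\beta f_t \leq C$ holds for every $\theta_t$ regardless of the earlier outputs, so the hypothesis that the $T$-fold composition of $SGG_{\beta,\sigma,q}$ is $(\epsilon,\delta)$-DP applies to this adaptive sequence. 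Invoking closure of $(\epsilon,\delta)$-DP under post-processing then yields that $\beta$-\DPSGD is $(\epsilon,\delta)$-DP.

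The main obstacle I anticipate is the bookkeeping around adaptivity: I must ensure the sensitivity bound is data-independent (which the $\ell_\beta$ clipping guarantees) and that the privacy guarantee assumed for the composed SGG mechanism is an \emph{adaptive} composition guarantee, as is the case for the PRV-accountant-based composition used here. The substantive value of the corollary over Theorem~\ref{thm:DPSGD_is_DP} is exactly this reframing: it lets one feed the tight $(\epsilon,\delta)$ curve computed by the PRV accountant for $T$ compositions of SGG straight into a privacy guarantee for $\beta$-\DPSGD, rather than composing the loose single-shot RDP-to-DP bound $T$ times.
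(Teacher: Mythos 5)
Your proposal is correct and follows essentially the same route as the paper's own proof: both view $\beta$-\DPSGD\ as the adaptive $T$-fold composition of $SGG_{\beta,\sigma,q}$ with $q=L/|D|$ applied to the clipped-gradient sums, followed by post-processing, and then transfer the assumed $(\epsilon,\delta)$ guarantee of the composed SGG mechanism verbatim. Your write-up is in fact more explicit than the paper's about verifying $\Delta_\beta f_t \leq C$ and about why the data-independent sensitivity bound makes the adaptive composition legitimate, but these are elaborations of the same argument rather than a different one.
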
  
    
    \begin{proof} \label{proof:dpsgd_privacy_bounds}
        As shown in the proof of \Cref{thm:DPSGD_is_DP}, the $\beta$-\DPSGD\ mechanism consists of the $T$-fold adaptive composition (with postprocessing) of the SGG Mechanism, with the clipping norm $C$ specified by the sensitivity $\Delta f$ of the input function $f$, so that $C = \Delta f$. Thus by Theorem \ref{thm:DPSGD_is_DP} and the post-processing and composition guarantees of DP, the $\beta$-\DPSGD\ mechanism also satisfies $(\eps,\delta)$-DP for the same $(\eps,\delta)$.
    \end{proof}

\section{Empirically Computing Privacy Guarantees} 

Due to the complexity that arises from composing multiple DP mechanisms together, any statement about the privacy-accuracy tradeoff of a highly-composed mechanism (like \DPSGD) is typically caveated by the specific implementation of the privacy accountant used for privacy accounting. We demonstrate an example of this in Table~\ref{tab:dpsgdresults} where we present a clear improvement in the privacy-accuracy tradeoff of existing methods by simply changing RDP to PRV accounting.

At the time of writing, numerical accountants such as the PRV accountant achieve the tightest privacy guarantees \cite{microsoft_PRV_2}. However, as a result of using a numerical accountant, closed form solutions for the privacy consumed do not typically exist. 
Specifically, \citet{microsoft_PRV_2} introduced the algorithm ComposePRV (presented here in Appendix \ref{appendix:privacy_in_practice}), which efficiently computes privacy guarantees for the composition of multiple DP mechanisms based on the PRV accountant. It takes as input the CDFs of PRVs $Y_1, \ldots, Y_k$, a mesh size $h$, and truncation parameter $L$, and returns an estimate of the privacy curve for all the mechanisms composed, represented by $\delta(\epsilon)$, enabling the direct computation of $\epsilon$. Given ComposePRV, \citep{microsoft_PRV_2} also shows that the PRV accountant can be directly used to compute a mechanism's privacy loss, including for Poisson subsampled variants of mechanisms such as SGG. 

However, when using the PRV accountant one must compute the CDF of the PRV, which is not a simple task for all differentially private mechanisms. In this work, we get around this by estimating the CDF of the PRV numerically; while this does introduce error, it also makes the privacy accounting of arbitrary mechanisms possible. In this appendix we show how to use the PRV accountant for the $\beta$-\DPSGD~mechanism by replacing the true CDF with an empirical estimate, and provide error bounds for the resulting privacy guarantee. In short, we replace the CDF of the PRV with the CDF of a (binned) histogram of the PRV sampled $n$ times, and account for the error introduced by using numerically computed histograms. 

In this appendix, we first show that how to compute the PRVs for the single-dimensional GG mechanism, and the multi-dimensional GG mechanism, and show that the GG mechanism is dimension-independent for specific sensitivity bounds (\Cref{appendix:analytic_single_dim_PRV} and \Cref{appendix:dim_independence_GG}). We then show how to use these PRVs to do privacy accounting for the GG mechanism and the sampled GG mechanism, and provide error bounds under the modified PRV accountant that estimates the CDF (\Cref{{appendix:privacy_in_practice}}). Lastly, we explore how to find mechanisms that satisfy equivalent $(\epsilon, \delta)$-DP guarantees, which can be used to define a new search space of comparable mechanisms (\Cref{sec:mechanisms_with_equal_privacy}); we end by proposing a niche, but potentially valuable use-case where the Generalized Gaussian for $\beta \notin \{1,2\}$ surpasses the Laplace or Gaussian Mechanisms, for the goal of preventing outliers (\Cref{section:outliers_for_equivalent_mechanisms}).

In this paper, our privacy accounting for $\beta$-\DPSGD~depends on numerically computing the CDF of the PRV for the Generalized Gaussian Mechanism by sampling from the Generalized Gaussian. For most distributions, computing the PRVs for a multi-dimensional output perturbation DP mechanism (e.g., a private mean in multiple dimensions) would depend on dimension. In typical machine learning applications (the primary use-case of \DPSGD), even small models regularly exceed 1 million parameters, so dependence on dimension could be catastrophic if estimating a histogram involved sampling from $d > 1,000,000$ distributions. Instead, we find that the PRVs for the Generalized Gaussian Mechanism do not depend on dimension when noise is sampled from $\mathcal{N}_\beta$ and the function has $|| \cdot||_{\beta}$-sensitivity. We note that this retroactively provides some rationale for why the Laplace and Gaussian Mechanism respectively use $\ell_1$- and $\ell_2$-norms for their noise distributions.

\subsection{Analytic PRV for a Single-Dimensional Generalized Gaussian Mechanism} \label{appendix:analytic_single_dim_PRV}

In order to compute the PRV for $GG_{\beta,\sigma}(f,D)$ we consider the privacy loss random variables for two distributions shifted by $\mu = \Delta f$, corresponding to the outputs of the mechanism on two neighboring datasets: $P \sim \mathcal{N}_{\beta}(\mu, \sigma)$ and $Q \sim \mathcal{N}_{\beta}(0, \sigma)$. 
\begin{proposition} \label{thm:prvs_for_gg}
    For $\sigma,\beta>0$, let  $Z \sim \mathcal{N}_{\beta}(0, \sigma)$, and let $\mu=\Delta f$. Then the PRVs for $GG_{\beta,\sigma}(f,D)$ are $X = \big(\frac{1}{\sigma}\big)^\beta \big(|Z|^\beta - |Z-\mu|^\beta \big)$ and $Y = \big(\frac{1}{\sigma}\big)^\beta \big(|Z-\mu|^\beta- |Z|^\beta \big)$. 
\end{proposition}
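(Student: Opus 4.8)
The plan is to evaluate the privacy loss random variables directly from the two output densities and simplify. By translation invariance of the mechanism I take the two neighboring output distributions to be $Q \sim \mathcal{N}_\beta(0,\sigma)$ and $P \sim \mathcal{N}_\beta(\mu,\sigma)$, where $\mu = \Delta f$ is the worst-case (maximal) single-coordinate shift between $f(D)$ and $f(D')$. Using the density from Definition \ref{def.gengauss}, the crucial structural fact is that $P$ and $Q$ share the same $\beta$ and $\sigma$, and therefore the same normalizing constant $\tfrac{\beta}{2\sigma^{1/\beta}\Gamma(1/\beta)}$. Hence that constant cancels in the ratio $Q(\omega)/P(\omega)$, and the log-ratio reduces to a difference of exponents,
\[
\log\frac{Q(\omega)}{P(\omega)} = \Big(\tfrac1\sigma\Big)^\beta\big(|\omega-\mu|^\beta - |\omega|^\beta\big).
\]

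I would then instantiate each PRV according to the distribution from which $\omega$ is drawn, following the convention $X=\log(Q(\omega)/P(\omega))$ with $\omega\sim P$ and $Y=\log(Q(\omega)/P(\omega))$ with $\omega\sim Q$. For $Y$, where $\omega \sim Q = \mathcal{N}_\beta(0,\sigma)$, I set $\omega = Z$ with $Z \sim \mathcal{N}_\beta(0,\sigma)$ and read off $Y = (\tfrac1\sigma)^\beta(|Z-\mu|^\beta - |Z|^\beta)$, matching the claim immediately. For $X$, where $\omega \sim P = \mathcal{N}_\beta(\mu,\sigma)$, I reparametrize $\omega = Z + \mu$ so that $Z = \omega - \mu \sim \mathcal{N}_\beta(0,\sigma)$; substituting yields $X = (\tfrac1\sigma)^\beta(|Z|^\beta - |Z+\mu|^\beta)$.

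The final step reconciles this with the stated form $X = (\tfrac1\sigma)^\beta(|Z|^\beta - |Z-\mu|^\beta)$ using the even symmetry of the centered generalized Gaussian. Since the density of $\mathcal{N}_\beta(0,\sigma)$ is symmetric about $0$, we have $Z \stackrel{d}{=} -Z$, and applying $Z \mapsto -Z$ leaves $|Z|^\beta$ unchanged while turning $|Z+\mu|^\beta$ into $|Z-\mu|^\beta$; the two expressions for $X$ therefore coincide in law. Because PRVs are only defined up to their distribution, this completes the identification.

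I do not expect a genuine obstacle: the argument is essentially a one-line density calculation. The only points that warrant care are confirming that the normalizing constants truly cancel — which hinges on a neighboring change perturbing only the location $\mu=\Delta f$ and not the scale $\sigma$ — and handling the $|Z+\mu|$ versus $|Z-\mu|$ gap through the noise's symmetry rather than dismissing it as a sign slip.
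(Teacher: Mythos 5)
Your proof is correct and follows essentially the same route as the paper's: write down the two shifted generalized Gaussian densities, observe that the shared normalizing constant cancels, and read off the log-ratio as the difference of exponents. You are in fact more careful than the paper's own proof, which omits the re-centering $\omega = Z + \mu$ for the $X$ term and the appeal to the symmetry $Z \stackrel{d}{=} -Z$ that you use to reconcile $|Z+\mu|^\beta$ with $|Z-\mu|^\beta$.
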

\begin{proof}
Given a noise distribution $Z$, and the associated differentially private mechanism, we derive the PRVs $X, Y$. This proof is of the same form as a similar derivation in \citet{microsoft_PRV_2}, specifically deriving the PRVs for the Gaussian mechanism.
The maximum difference in output of $f$ between two neighboring datasets is $\mu = \Delta f$, so we can abstract the worst-case pair of neighboring datasets with distributions  $P \sim \mathcal{N}_{\beta}(\mu, \sigma)$ and $Q \sim \mathcal{N}_{\beta}(0, \sigma)$.

Then for $t\sim Q =\mathcal{N}_\beta(0,\sigma)$, we can define the PRV $Y$ as the following distribution: 
    \begin{align*}
        Y & \sim \log\big(\frac{Q(t)}{P(t)} \big) = \log  \big(\frac{\exp(-t^\beta)}{-|t-\mu|^\beta} \big) = |t-\mu|^\beta - |t|^\beta. 
    \end{align*}

    By symmetry for $X$, for $t \sim P =\mathcal{N}_\beta(\mu ,\sigma)$, we can define the PRV $X$ as the distribution:
    \begin{align*}
        X \sim \log\big(\frac{Q(t)}{P(t)} \big) = \log  \big(\frac{-|t-\mu|^\beta}{\exp(-t^\beta)} \big) = |t|^\beta - |t-\mu|^\beta.
    \end{align*}
\end{proof}

    We note that while it is not necessarily true that $X = -Y$ for all private mechanisms, it is true for the Generalized Gaussian mechanism, as the proof above holds for all values of $\beta$ or $\sigma$.

\subsection{Dimension Independence in Multi-Dimensional PRVs}\label{appendix:dim_independence_GG}

We first derive the PRVs for the multi-dimensional Generalized Gaussian Mechanism, and then show that they are dimension independent for $1 \leq \beta \leq 2$, using the $\ell_{\beta}$ sensitivity.\footnote{We thank Matthew Joseph for pointing out the error in an earlier version of our paper, which incorrectly claimed dimension independence for all $\beta$. This property only holds for $1 \leq \beta \leq 2$.}

 As shown in \Cref{thm:prvs_for_gg}, the PRVs for the single dimensional GG Mechanism are $Y = |Z - \mu|^\beta - |Z|^\beta$ and $X = |Z|^\beta- |Z - \mu|^\beta$ where $Z \sim \mathcal{N}_{\beta}(\mu =\Delta f, \sigma=1)$. 

As shown in Section \ref{sec:prv_accountant_background}, the definition of a PRV is random variable generated from the probability of sampling a particular sample $t$, 
\begin{align*}
        Y \sim \log\big(\frac{Q(t)}{P(t)} \big) \textrm{where } t\sim Q \quad \text{ and } \quad  X \sim \log\big(\frac{Q(t)}{P(t)} \big) \textrm{where } t\sim P.
\end{align*}
This definition of a PRV does not change if $t$ is a scalar or a vector --- denoted $\vec{t} \in \mathbb{R}^d$, where $t_i$ is a scalar from $i$-th dimension of $\vec{t}$. We now observe that the multi-dimensional probability distribution is equal to the product distribution of the single-dimensional PDFs, since the multidimensional GG mechanism is sampled independently for each dimension: 
\begin{align*}
    \Pr(\vec{t}) = \prod^d_{i=1} \Pr(\vec{t}_i) \propto \prod^d_{i=1} \exp(- |\vec{t}_i|^\beta / \sigma ).
\end{align*}

Using this fact, we can then compute the PRVs of the multi-dimensional GG Mechanism.
Let $\vec{\mu}$ be a $d$-dimensional vector such that $||\vec{\mu}||_\beta = \Delta f$, and let $\mu_i$ be the $i$-th dimension of $\vec{\mu}$. Since each dimension of the noise is sampled independently in the GG mechanism, we denote the PRVs for a $d$ dimensional Generalized Gaussian mechanism as $Y_d$ and $X_d$. For $\vec{t} \sim Q$, these can be written as:
\[
       Y_d \sim  \log \prod_{i=1}^d \big( \frac{\exp(-|t_i|^\beta/\sigma)}{\exp(-|t_i - \mu_i|^\beta/\sigma)} \big) = \sum_{i=1}^d (|t_i - \mu_i|^\beta -|t_i|^\beta  ) / \sigma, \]
and by symmetry,    
\[
X_d \sim \sum_{i=1}^d (|t_i|^\beta -|t_i - \mu_i|^\beta  ) / \sigma.
\]

To connect this to $Y_d$, let $D,D'$ be neighboring datasets and
let $\vec{\mu}=f(D)-f(D')$ with $\|\vec{\mu}\|_\beta \le \Delta f$.
Let $Q$ denote the mechanism's output distribution on $D$ and $P$ the output distribution on $D'$.
For an additive-noise mechanism, these distributions are shifts of one another, so
\[
P(\vec t)=Q(\vec t-\vec{\mu}).
\]
Hence the privacy loss random variable (PRV) is exactly the log-likelihood ratio appearing above:
when $\vec t\sim Q$,
\[
\log\frac{Q(\vec t)}{P(\vec t)}
=\log\frac{Q(\vec t)}{Q(\vec t-\vec{\mu})}
=\frac{1}{\sigma}\Big(\|\vec t-\vec{\mu}\|_\beta^\beta-\|\vec t\|_\beta^\beta\Big),
\]
which is precisely $Y_d$; when $\vec t\sim P$ the same log-ratio gives $X_d$ (the reverse direction).

For a one-dimensional mechanism, $\mu$ in the PRV calculation must equal the sensitivity $\Delta f$ of the function being privately evaluated. However, for multi-dimensional mechanisms, $\vec{\mu}$ must instead be a vector with norm $||\vec{\mu}||_{\beta} = \Delta f$ that maximizes the privacy loss random variables $X$ and $Y$. As we will see, there are many possible $\vec{\mu}$ vectors which meet this requirement.

Below, we show that if sensitivity is measured with respect to the $\ell_{\beta}$ norm, then the PRVs for a multi-dimensional Generalized Gaussian mechanism are simply the PRVs of the corresponding the single-dimensional GG mechanism; this motivates our choice of $||\cdot||_{\beta}$ sensitivity, rather than fixing the sensitivity for all mechanisms.

Recall the definition of the $\ell_\beta$ norm: $|| \vec{x}||_{\beta} = (\sum_{i} |x_i|^\beta )^{1/\beta}$
To start, we observe that $Y_d= \sum_{i=1}^d (|t_i - \mu_i|^\beta -|t_i|^\beta  ) / \sigma$ can be rewritten as 
\[
 Y_d = \frac{1}{\sigma} \left[ || \vec{t}-\vec{\mu} ||_{\beta}^\beta - || \vec{t} ||_{\beta}^\beta \right].
\] 

Since differential privacy requires a worst-case bound over all pairs of neighboring databases, it requires bounding $\max\{\frac{Q(\vec{t})}{Q(\vec{t} - \vec{\mu})},\frac{Q(\vec{t}- \vec{\mu})}{Q(\vec{t})}\}$, for all difference vectors $\mu$ satisfying the sensitivity bound $||\vec{\mu}||_\beta = \Delta f$.  
Then the set of $\vec{\mu}$ over which this difference must be maximized is all points in an $\ell_{\beta}$-ball of radius $\Delta f$.

More formally, let $\mathcal{C}$ be this set of points satisfying satisfying $||\vec{\mu} ||_{\beta} \leq \Delta f$. Then the value of $\vec{\mu}$ that maximizes this difference is:
\[
    \arg\max_{\vec{\mu} \in \mathcal{C}} \left\{Y_d = \frac{1}{\sigma}(||\vec{t}-\vec{\mu}||_\beta^\beta - ||\vec{t}||_\beta^\beta ) \right\}.
\]

Since $||\vec{t}||_\beta^\beta $ does not depend on $\vec{\mu}$, then the expression is also maximized by:

\[
    \arg\max_{\vec{\mu} \in \mathcal{C}} \left\{ \frac{1}{\sigma}(||\vec{t}-\vec{\mu}||_\beta^\beta ) \right\}.
\]

For $1\leq \beta <2$, we observe that any $\vec{\mu}$ on the $\ell_{\beta}$ ball of radius $\Delta f$, i.e., $\vec{\mu} \in \{\vec{x} : ||\vec{x}||_\beta = \Delta f \}$, will satisfy this maximal-difference constraint because all such points are exactly $\Delta f$ away from the origin in $\ell_{\beta}$-distance. Thus $Y_d = \frac{1}{\sigma}\left[\| \vec{t}-\vec{\mu} \|_\beta^\beta -  \| \vec{t} \|_\beta^\beta \right] $ will be identical for any $\vec{\mu}$ on the $\ell_\beta$-ball. This includes, for example, the one-hot vector $\vec{\mu} = \langle1,0, \ldots, 0\rangle$.

Plugging in this choice of $\vec{\mu}$, we see that the PRVs for the multidimensional GG mechanism are the same as the single-dimensional GG mechanism: 
\begin{align*}
       Y_d &\sim \sum_{i=1}^d (|t_i - \mu_i|^\beta -|t_i|^\beta  ) / \sigma \\
       & = \bigg( |t_1 - \Delta f|^\beta - |t_1|^\beta  + 0 + \ldots + 0 \bigg)  / \sigma \\
       & = (|(t_1 - \Delta f)|^\beta - |t_1|^\beta ) / \sigma \\
       & = Y.
\end{align*}

The same holds for $X_d$ and $X$:
\begin{align*}
       X_d &\sim \sum_{i=1}^d (|t_i|^\beta - |t_i - \mu_i|^\beta  ) / \sigma \\
       & = \bigg( | t_1|^\beta - |t_1 -\Delta f|^\beta  + 0 + \ldots + 0 \bigg)  / \sigma \\
       & = (| t_1|^\beta- |t_1 - \Delta f|^\beta ) / \sigma \\
       & = X.
\end{align*}

Thus for $\beta$-GG mechanism with sensitivity measured with a $\ell_\beta$-norm for $\beta \in [1,2]$, the PRV for the multi-dimensional GG distribution is equivalent to the PRV for a single-dimensional GG distribution, when the sensitivity is also measured with the $\ell_{\beta}$-norm.

To see why this argument fails for $\beta > 2$, recall that $\vec{\mu}$ must be a 
vector that maximizes the privacy loss random variables $X$ and $Y$. For $\beta > 2$, only the diagonal vectors along the hypersphere (that is, points of the form 
$\{\pm 1, \pm 1, \ldots, \pm 1\} \cdot \frac{1}{\sqrt{d}}$) satisfy this 
maximal-difference constraint on the $\ell_{\beta}$ unit sphere.  This means that for $\beta > 2$, the PRV cannot be reduced to a scalar distribution and instead depends on  dimension. Thus, the only known method to sample points from this family is to sample a $d$-dimensional vector. 

In the next subsection (\Cref{appendix:privacy_in_practice}), we introduce the Sampled PRV accountant, which estimates privacy guarantees by constructing an empirical CDF from samples of the PRV distribution; obtaining tight error bounds requires drawing very large numbers of such samples. When the PRV is dimension-dependent, those samples must be $d$-dimensional vectors, which is computationally infeasible for ML models with $d > 10^6$ parameters. Dimension-independence allows $1$-dimensional samples to be drawn instead, which results in substantial computational savings.

\subsection{Privacy of the Sampled PRV Accountant} \label{appendix:privacy_in_practice}

In this section we formally introduce the Sampled PRV accountant, and provide theoretical guarantees for its privacy and accuracy as a PRV accountant (Algorithm~\ref{alg:sampledPRV}). The algorithm works along the same lines as the PRV accountant from~\cite{microsoft_PRV_2}, but instead of composing the actual CDFs of the PRVs, this algorithm composes the empirical distribution functions (EDFs) from samples from those PRV distributions. This is an (high-probability) approximation to the guarantees of the PRV accountant from~\cite{microsoft_PRV_2} with the uncertainty coming from the sampling error of the points obtained from the PRV distributions. Our Sampled PRV accountant (SampledComposePRV, Algorithm~\ref{alg:sampledPRV}) extends the ComposePRV algorithm to work even when the CDF of the PRVs is not known, but the algorithm instead only has sample-access to the distribution.

Our approach is similar to that of~\cite{microsoft_PRV_2}, where we first discretize the real space and map the samples on to this grid (with our choice of width $h$) using the same mapping as~\cite{microsoft_PRV_2}, while truncating the samples to within a fixed interval $[-L,L]$. Note that the truncation from~\cite{microsoft_PRV_2} removes the probability mass from outside $[-L,L]$ and then normalizes the truncated distribution. For our case, we may either construct the EDF only using the samples from the PRV that lie within $[-L,L]$ or we may construct the EDF from all the samples and truncate in the same way as in~\cite{microsoft_PRV_2} -- both ways would be equivalent, and the distribution of the EDF would be the same as in the case where the EDF is constructed from the original discretized and truncated PRV. After constructing the EDF using the discretized and truncated samples for each PRV, we convolve all the EDFs using FFT with modulo arithmetic in the same way as~\cite{microsoft_PRV_2}, i.e., using the operator $A \oplus B \equiv A+B~(\text{mod } 2L) $.

First we need an algorithm $\texttt{sample}_n(Y,n, L,h)$ (Algorithm~\ref{alg:sampleFunction}) which takes in a random variable $Y$, an integer $n$, real-valued domain size $L$, and real-valued bin-width $h$, and returns a random variable $Y_{n,L,h}$ as output. This algorithm generates $n$ samples from $Y$, then constructs a discretized random variable over the domain $[-L,L]$ with distribution equal to the empirical CDF of the samples. 

\begin{algorithm}[ht]
    \caption{Sample Function $\texttt{sample}(Y,n,L,h)$} \label{alg:sampleFunction}
    \textbf{Input:} Integer $n$, real-valued bin-width $h$, real-valued domain-range $L$, random variable $Y$ \\
    \textbf{Output:} Random variable $Y_{n,L,h}$ constructed from the empirical CDF of $n$ samples from $Y$
    \begin{algorithmic}[1]
        \State Set $m = \tfrac{L-\frac{h}{2}}{h}$
        \State Generate independent samples from $Y$ until we have $2n$ samples that lie within $[-L,L]$: $\{y_1, y_2, \dots, y_{2n}\} \sim_{i.i.d.} Y$ with rejection sampling
        \State Define $Y^1 = \{y_1,\dots,y_n\}$ and $Y^2 = \{y_{n+1},\dots,y_{2n}\}$
        \For{$i = -m$ to $m$}
            \State Set $p_i = \text{CDF}_{Y^1}(ih + h/2) - \text{CDF}_{Y^1}(ih - h/2)$
        \EndFor
        \State Set $\vec{q} = \tfrac{1}{\sum_{i=1}^{2n}{p_i}}\langle p_{1}, \dots, p_{2n}\rangle$ 
        \State Set $\tilde{\mu} = \tfrac{1}{n}\sum_{j=n}^{2n}{y_j} - \sum_{i=-m}^{m}{ihq_i}$
        \State Set $\hat{\mu} = \max\{\min\{\tilde{\mu},h/2\},0\}$
        \State Set $\tilde{Y} = \Big\{ih + \hat{\mu} \text{ w.p. } q_i \text{ for } -m \leq i \leq m$
        \State \textbf{Return} $Y_{n,L,h}$
    \end{algorithmic}
\end{algorithm}

Finally, we can present our SampledComposePRV algorithm (Algorithm~\ref{alg:sampledPRV}). This algorithm takes in sample-access mechanisms to $k$ PRVs $Y_1,\ldots,Y_k$, where each $Y_i$ is the PRV for a single mechanism. For each $Y_i$, the algorithm generates $n$ samples (that lie within $[-L,L]$, using Algorithm~\ref{alg:sampleFunction}) and constructs an empirical (PRV) distribution $Y_{i,n}$ from the samples. Let $\tilde{Y}_i$ be the truncated and discretized version of $Y_i$ as described in~\cite{microsoft_PRV_2}. Suppose $Y= \sum_{i=1}^k Y_i$ is the true composed PRV, $\tilde{Y} = \sum_{i=1}^{k}{\tilde{Y_i}}$ is the composition of the discretized and truncated PRVs, and $Y_{n}= \sum_{i=1}^k Y_{i,n}$ is the composed empirical PRV. The last summation $Y^n$ is computed correctly with high probability by our algorithm via FFT as $\hat{Y}_n = \bigoplus_{i=1}^{k}{Y_{i,n}}$, where the $\oplus$ operator is defined via modular arithmetic as above.
    
\begin{algorithm}[htb]
    \caption{SampledComposePRV($\{Y_i\}_{i\in[k]}, h,L,n$)} \label{alg:sampledPRV}
    \textbf{Input:} Sample-access to PRVs $Y_1, Y_2, \ldots Y_k$ , mesh size $h$, truncation parameter $L$ that is a multiple of $h$, number of samples $n$ \\
    \textbf{output} PDF of an approximation $\tilde{Y}$ of $Y= \sum_{i=1}^k Y_i$,  supported on a grid over $[-L,L]$ with bin-width $h$    
    \begin{algorithmic}[1]
        \For{$i=1$ to $k$}
            \State $Y_{i,n} \leftarrow \texttt{sample}(Y_i,n,L,h)$ 
        \EndFor   
        \State Compute PDF of $\hat{Y}_{n} = \bigoplus\limits_{i=1}^{k}{Y_{i,n}}$, where $\oplus$ is the convolution from~\cite{microsoft_PRV_2}.
        \State Compute $\delta_{\hat{Y}_{n}}(\epsilon) = \mathbb{E}_{\hat{Y}_{n}} \big[  \big( 1- e^{\epsilon - \hat{Y}_{n}}\big)_{+}\big]$ for all $\epsilon \in [0,L]$.
        \State Return $\hat{Y}_{n}, \delta_{\hat{Y}_{n}}$.
    \end{algorithmic}
\end{algorithm}

We present the guarantees of our privacy accountant in the following theorem, which is the main result of this section.
\begin{theorem}\label{thm:sampled_prv_error}
    Let $s,t>0$ be some fixed parameters. Let $\mathcal{M}_1,\mathcal{M}_2,\dots,\mathcal{M}_k$ be DP algorithms with privacy curves $\delta_{\mathcal{M}_i}(\epsilon)$. Let $Y_i$ denote the PRV corresponding to $\mathcal{M}_i$, such that $\delta_{\mathcal{M}_i}(\epsilon)=\delta_{Y_i}(\epsilon)$ for $\epsilon \geq 0$. Let $\mathcal{M}$ be the (adaptive) composition of $\mathcal{M}_1,\mathcal{M}_2,\dots,\mathcal{M}_k$, and let $\delta_{\mathcal{M}}(\epsilon)$ be its privacy curve. Let $\hat{Y}_n$ (where $n>0$) be the approximation of $Y=\sum_{i=1}^k Y_i$ produced by SampledComposePRV with mesh size $h>0$ and truncation parameter $L>0$. Let $2n$ denote the number of samples used in \texttt{sample} (Algorithm \ref{alg:sampleFunction}). Then,
    \[
        \delta_{\hat{Y}_n}\left(\epsilon+\tau\right) - \eta \leq \delta_{\mathcal{M}}(\epsilon) \le \delta_{\hat{Y}_n}\left(\epsilon-\tau\right)+\eta,
    \]
    where
    \begin{align*}
        \eta &= 2\sum_{i=1}^{k}{\Pr[|Y_i| \geq L]} + 4e^{-\frac{2s^2}{kh^2}} + 4ke^{-\frac{nt^2}{2L^2}} + 8ke^{-\frac{nt^2}{2}} + \Pr\left[\left|\textstyle \sum\limits_{i=1}^{k}{Y_i}\right| \geq L-t\right] + 2k\left(t + \sqrt{\tfrac{L}{nh}}\right),
    \end{align*}
    and
    \begin{align*}
        \tau = s + k\left(t + 2L\left(\tfrac{t}{2} + \sqrt{\tfrac{L}{nh}}\right)\right) + 2k\left(\tfrac{t}{2} + \sqrt{\tfrac{L}{nh}}\right).
    \end{align*}

    Additionally, SampleComposePRV takes $O\left( b \frac{L}{h} \log(\frac{L}{h}) + bn\right)$ time, where $b$ is the number of distinct algorithms among $\mathcal{M}_1,\mathcal{M}_2,\dots,\mathcal{M}_k$.    
\end{theorem}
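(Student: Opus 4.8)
The plan is to build on the error analysis of \citet{microsoft_PRV_2} for the exact-CDF PRV accountant and to isolate the \emph{additional} error introduced by replacing each true (discretized, truncated) PRV by its sampled empirical version. Throughout I will use the representation of the privacy curve as an expectation functional of the dominating PRV, $\delta_{\mathcal{M}}(\epsilon) = \mathbb{E}_Y\big[(1-e^{\epsilon-Y})_+\big]$, together with the key structural fact that this functional is monotone in $\epsilon$ and Lipschitz under perturbations of the PRV: if two PRVs can be coupled so that one lies within an additive shift $\tau$ of the other except on an event of probability $\rho$, then their privacy curves satisfy a sandwich of the form $\delta_A(\epsilon+\tau)-\rho \le \delta_B(\epsilon)\le \delta_A(\epsilon-\tau)+\rho$. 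This reduces the theorem to producing such a coupling between the true composed PRV $Y=\sum_i Y_i$ and the sampled composition $\hat Y_n=\bigoplus_i Y_{i,n}$, with $\tau$ and $\rho$ accounting for the three distinct error sources.

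First I would recall and reuse the truncation and discretization bounds of \citet{microsoft_PRV_2}. Truncating each $Y_i$ to $[-L,L]$ moves probability mass at most $\Pr[|Y_i|\ge L]$, and truncating the composed variable contributes the term $\Pr[|\sum_i Y_i|\ge L-t]$; these appear verbatim in $\eta$. The discretization to a grid of width $h$ --- performed here with the same mean-matching correction $\hat\mu$ as in their work --- couples each discretized PRV to within $h/2$ of its target, and summing these signed rounding errors over the $k$ compositions and applying Hoeffding's inequality yields the $4e^{-2s^2/(kh^2)}$ failure probability and contributes the shift $s$ to $\tau$.

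The new work is the third step, bounding the sampling error between the true discretized PRV $\tilde Y_i$ and the empirical version $Y_{i,n}$ built from $n$ in-range samples. Here I would control two quantities. \textbf{(i) Distribution error:} over a grid of $\Theta(L/h)$ bins, the empirical bin-probabilities $\vec q$ deviate from the true discretized probabilities; bounding the expected total-variation distance of a distribution on $\Theta(L/h)$ atoms estimated from $n$ samples gives the $\sqrt{L/(nh)}$ term, which enters both $\tau$ (scaled by the grid range $2L$ and by $k$) and $\eta$. \textbf{(ii) Mean-correction error:} the empirical mean estimate $\tilde\mu$ concentrates around the true mean via Hoeffding, and the two relevant support scales --- the raw in-range samples (range $2L$) and the discretized normalized contributions --- are expected to produce the $4ke^{-nt^2/(2L^2)}$ and $8ke^{-nt^2/2}$ terms and contribute the remaining $t$-dependent shifts to $\tau$. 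Assembling the three couplings by the triangle inequality, so that the shifts add into $\tau$ and the failure and mass terms add into $\eta$, yields the claimed sandwich. The runtime then follows because for each of the $b$ distinct mechanisms sampling costs $O(n)$ and the FFT convolution over the $\Theta(L/h)$-point grid costs $O\!\big(\tfrac{L}{h}\log\tfrac{L}{h}\big)$.

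The main obstacle I anticipate is keeping all three error sources additive rather than multiplicative as they propagate through the $k$-fold convolution, and in particular ensuring that the sampling error --- which is naturally a statement about empirical CDFs (DKW-type sup-norm control) --- is correctly converted into the ``$\epsilon$-shift plus additive-$\delta$'' language required by the PRV sandwich. The mean-matching correction $\hat\mu$ is what makes the discretization unbiased, so the delicate point is showing that its empirical version still concentrates tightly enough, uniformly over the $k$ terms, that it does not inflate the composed error; this is precisely where the Hoeffding bounds with the two different support scales must be applied with care.
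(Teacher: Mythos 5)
Your plan matches the paper's proof essentially step for step: the same three-way decomposition into truncation, discretization (with the empirical mean-matching correction $\hat\mu$ controlled by Hoeffding at the two support scales), and sampling error (controlled via the total-variation bound for an empirical distribution on $\Theta(L/h)$ atoms, giving the $\sqrt{L/(nh)}$ term), assembled by the triangle inequality for coupling approximations and converted to the privacy-curve sandwich via the shift-plus-mass lemma of \citet{microsoft_PRV_2}. The only detail you gloss over is the hybrid/induction argument needed to show the modular FFT wraparound probability for the $k$ empirical PRVs is within $2k\alpha$ of that of the true discretized PRVs, but this is the same route the paper takes.
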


In order to prove Theorem~\ref{thm:sampled_prv_error}, we will analyze a series of couplings, some of which have already been defined in~\cite{microsoft_PRV_2}. For that, we first define \emph{coupling approximations} for measuring closeness between two distributions, which were used by~\cite{microsoft_PRV_2}.

\begin{definition}[Coupling Approximation \citep{microsoft_PRV_2}]
    Given two random variables $Y_1, Y_2$, we write that $|Y_1 - Y_2| \leq_{\eta} c$ if there exists a coupling between $Y_1, Y_2$ such that $\Pr[|Y_1 - Y_2| >c] \leq \eta$.
\end{definition}

As a reminder to the reader, $Y$ denotes the sum of all the original PRVs ($Y_i$), $\tilde{Y}$ denotes the sum of all the truncated and discretized PRVs ($\tilde{Y}_i$), $Y^n$ denotes the sum of all the empirical PRVs, each created by $2n$ discretized and truncated samples ($Y_{i,n}$), and $\hat{Y}_n$ denotes the convolution of all these sampled PRVs ($Y_{i,n}$). First we will show that $|Y^n - \hat{Y}_n| \leq_{\eta_2} \tau_2$ for some $\eta_2,\tau_2 \geq 0$. Next, we will prove that $|\tilde{Y} - Y^n| \leq_{\eta_1} \tau_1$ for some $\eta_1,\tau_1 \geq 0$. We finally prove that $|Y - \tilde{Y}| \leq_{\eta_0} \tau_0$ for some $\eta_0,\tau_0 \geq 0$, for our specific algorithmic setting that is slightly different from the one in~\cite{microsoft_PRV_2}. We will then invoke the following result from~\cite{microsoft_PRV_2} that defines a triangle inequality on coupling approximations.

\begin{lemma}[Lemma~C.1, \citep{microsoft_PRV_2}]
    \label{lem:triangle_inequality_couplings}
    Suppose $X, Y, Z$ are random variables such that $|X - Y| \leq_{\eta_1} c_1$ and $|Y - Z| \leq_{\eta_2} c_2$. Then, 
    $$|X - Z| \leq_{\eta_1 + \eta_2} c_1 + c_2.$$
\end{lemma}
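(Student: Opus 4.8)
The plan is to prove the triangle inequality by the standard \emph{gluing} construction for couplings. By hypothesis, $|X-Y| \leq_{\eta_1} c_1$ furnishes a coupling (joint law) $\pi_{XY}$ of $(X,Y)$ with $\Pr_{\pi_{XY}}[|X-Y|>c_1] \leq \eta_1$, and $|Y-Z| \leq_{\eta_2} c_2$ furnishes a coupling $\pi_{YZ}$ of $(Y,Z)$ with $\Pr_{\pi_{YZ}}[|Y-Z|>c_2] \leq \eta_2$. The key structural observation is that both couplings share the same $Y$-marginal, namely the law of $Y$; this is exactly what permits stitching them together over a common copy of $Y$.

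First I would construct a joint law $\pi_{XYZ}$ on $(X,Y,Z)$ by disintegrating each coupling over the shared variable $Y$: I write $\pi_{XY}$ as the law of $Y$ together with the conditional law of $X$ given $Y$, and $\pi_{YZ}$ as the law of $Y$ together with the conditional law of $Z$ given $Y$; I then define $\pi_{XYZ}$ by sampling $Y$ from its law and, conditionally on $Y$, sampling $X$ and $Z$ \emph{independently} from their respective conditionals. By construction the $(X,Y)$-marginal of $\pi_{XYZ}$ equals $\pi_{XY}$ and the $(Y,Z)$-marginal equals $\pi_{YZ}$, so in particular the induced $(X,Z)$-marginal is a legitimate coupling of $X$ and $Z$.

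Second, under $\pi_{XYZ}$ I would apply the deterministic triangle inequality $|X-Z| \leq |X-Y| + |Y-Z|$ pointwise. Consequently the event $\{|X-Z| > c_1 + c_2\}$ is contained in $\{|X-Y| > c_1\} \cup \{|Y-Z| > c_2\}$, and a union bound yields
\[
\Pr_{\pi_{XYZ}}[|X-Z| > c_1+c_2] \leq \Pr_{\pi_{XY}}[|X-Y|>c_1] + \Pr_{\pi_{YZ}}[|Y-Z|>c_2] \leq \eta_1 + \eta_2.
\]
Since the $(X,Z)$-marginal of $\pi_{XYZ}$ is a valid coupling witnessing this tail bound, we conclude $|X-Z| \leq_{\eta_1+\eta_2} c_1+c_2$, as desired.

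The only delicate point is the gluing step, which requires a valid disintegration of each coupling over $Y$ so that the conditional-independence construction is well defined. This is immediate for discrete random variables and holds in full generality through the existence of regular conditional distributions on standard Borel spaces. Because every PRV in this work is real-valued and hence lives on a standard Borel space, the disintegration and the conditionally independent coupling are well defined, so this step poses no genuine obstacle in our setting.
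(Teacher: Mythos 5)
Your proof is correct and follows essentially the same route as the cited source \citep{microsoft_PRV_2} (the paper itself quotes this lemma without reproving it): glue the two couplings over the shared $Y$-marginal via conditional independence, apply the pointwise triangle inequality $|X-Z| \leq |X-Y| + |Y-Z|$, and finish with a union bound. Your remark on regular conditional distributions for real-valued random variables is exactly the right justification for the gluing step, so nothing is missing.
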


This would give us $|Y-\hat{Y}_n| \leq_{\eta} \tau$, where $\eta=\eta_0+\eta_1+\eta_2$ and $\tau=\tau_0+\tau_1+\tau_2$. After that, we will use the following result from~\cite{microsoft_PRV_2} that translates guarantees from coupling approximations to privacy curves.

\begin{lemma}[Lemma~5.2, \citep{microsoft_PRV_2}] \label{lem:coupling_to_privacy_curves}
	If $Y$ and $\tilde{Y}$ are two PRVs such that $|Y-\tilde{Y}|\le_\eta c$, then for every $\eps>0$, 
	$$\delta_{\tilde{Y}}(\eps+c)-\eta \le \delta_{Y}(\eps) \le \delta_{\tilde{Y}}(\eps-c)+\eta.$$
\end{lemma}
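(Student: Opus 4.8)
The plan is to work directly with the closed-form expression for the privacy curve of a PRV, namely $\delta_Y(\epsilon) = \mathbb{E}\big[(1 - e^{\epsilon - Y})_+\big]$, which is exactly the functional form of the privacy curve computed in Algorithm~\ref{alg:sampledPRV}. Writing $g_\epsilon(y) := (1 - e^{\epsilon - y})_+$, the curve is simply $\delta_Y(\epsilon) = \mathbb{E}[g_\epsilon(Y)]$. I would first record three elementary properties of this ``hockey-stick'' function: (i) $g_\epsilon$ is nondecreasing in $y$, since $e^{\epsilon - y}$ is decreasing in $y$; (ii) $0 \le g_\epsilon(y) \le 1$ for all $y$, with $g_\epsilon(y) = 0$ whenever $y \le \epsilon$; and (iii) the shift identity $g_\epsilon(y + c) = g_{\epsilon - c}(y)$, which holds because $g_\epsilon(y) = g_0(y - \epsilon)$ depends on $y$ and $\epsilon$ only through their difference.

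Next, I would fix a coupling of $(Y, \tilde{Y})$ witnessing $|Y - \tilde{Y}| \le_\eta c$, so that the event $E := \{|Y - \tilde{Y}| \le c\}$ satisfies $\Pr[E^c] \le \eta$. To obtain the upper bound $\delta_Y(\epsilon) \le \delta_{\tilde Y}(\epsilon - c) + \eta$, split $\delta_Y(\epsilon) = \mathbb{E}[g_\epsilon(Y)\mathbbm{1}_E] + \mathbb{E}[g_\epsilon(Y)\mathbbm{1}_{E^c}]$. On $E$ we have $Y \le \tilde Y + c$, so monotonicity (i) together with the shift identity (iii) gives $g_\epsilon(Y) \le g_\epsilon(\tilde Y + c) = g_{\epsilon - c}(\tilde Y)$; on $E^c$ we bound $g_\epsilon(Y) \le 1$ using (ii). Dropping the indicator on the nonnegative term $g_{\epsilon-c}(\tilde Y)$ and using $\Pr[E^c] \le \eta$ then yields $\delta_Y(\epsilon) \le \mathbb{E}[g_{\epsilon-c}(\tilde Y)] + \eta = \delta_{\tilde Y}(\epsilon - c) + \eta$.

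For the lower bound $\delta_{\tilde Y}(\epsilon + c) - \eta \le \delta_Y(\epsilon)$, I would invoke the symmetry of the coupling-approximation relation: since $|\tilde Y - Y| \le_\eta c$ as well, the identical argument with the roles of $Y$ and $\tilde Y$ swapped gives $\delta_{\tilde Y}(\epsilon') \le \delta_Y(\epsilon' - c) + \eta$ for every $\epsilon'$. Instantiating at $\epsilon' = \epsilon + c$ and rearranging gives exactly the lower bound, and chaining the two inequalities establishes the two-sided claim.

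The argument is short, and the only real subtlety is the bookkeeping on the bad event $E^c$: one must exploit that $g_\epsilon$ is bounded by $1$, so that the contribution of $E^c$ costs at most $\Pr[E^c] \le \eta$, rather than trying to control $g_\epsilon(Y) - g_{\epsilon-c}(\tilde Y)$ pointwise there, where it can be large. The other point to get right is the sign in the monotonicity/shift step, ensuring the pointwise comparison $Y \le \tilde Y + c$ on the good event translates into a comparison of $\delta$ at arguments differing by $c$ in the correct direction. I expect no genuine obstacle beyond this careful case split.
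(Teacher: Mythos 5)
Your proof is correct, and there is nothing to check it against within this paper: the statement is quoted verbatim as Lemma~5.2 of \citet{microsoft_PRV_2} and invoked without proof, since it is an imported ingredient of the accountant's error analysis rather than a contribution of this work. Your argument --- writing $\delta_Y(\epsilon)=\mathbb{E}\bigl[(1-e^{\epsilon-Y})_+\bigr]$, exploiting monotonicity, boundedness by $1$, and the shift identity $g_\epsilon(y+c)=g_{\epsilon-c}(y)$ of the hockey-stick function, splitting on the good event of the coupling, and obtaining the lower bound by the symmetry $|Y-\tilde{Y}|\leq_\eta c \iff |\tilde{Y}-Y|\leq_\eta c$ --- is essentially the standard proof in that reference, with all signs and directions handled correctly (the pointwise bound $Y\le\tilde{Y}+c$ on the good event correctly shifts the argument of $\delta_{\tilde{Y}}$ \emph{down} to $\epsilon-c$, consistent with $\delta$ being nonincreasing in $\epsilon$). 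The one implicit assumption worth stating is that $\delta_{\tilde{Y}}$ is likewise \emph{defined} by the hockey-stick expectation for the approximating random variable $\tilde{Y}$ (as it is for $\hat{Y}_n$ in Algorithm~\ref{alg:sampledPRV}), so that the formula applies on both sides; with that noted, there is no gap.
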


We start by proving that $|Y^n - \hat{Y}_n| \leq_{\eta_2} \tau_2$ for some $\eta_2,\tau_2 \geq 0$ and some given $\alpha > 0$.
\begin{lemma}\label{lem:coupling-sample-to-conv}
    Let $\tilde{Y}_1,\dots, \tilde{Y}_k, Y_{1,n},\dots,Y_{k,n}, Y^n, \hat{Y}_n$ be defined as above. Suppose $d_{\text{TV}}(\tilde{Y}_i,Y_{i,n}) \leq \alpha$ for each $1 \leq i \leq k$. Then,
    \[
        \left|Y^n - \hat{Y}_n\right| \leq_{\eta_2} 0,
    \]
    where $\eta_2 = \Pr\left[|\sum_{i=1}^{k}{\tilde{Y}_i}| \geq L\right] + 2k\alpha$.
\end{lemma}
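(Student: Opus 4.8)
The plan is to exhibit an explicit coupling of $Y^n$ and $\hat{Y}_n$ under which the two random variables coincide off a low-probability event, since by definition $|Y^n - \hat{Y}_n| \leq_{\eta_2} 0$ just asserts the existence of a coupling placing mass at most $\eta_2$ on the event $\{Y^n \neq \hat{Y}_n\}$. The only structural difference between $Y^n = \sum_i Y_{i,n}$ and $\hat{Y}_n = \bigoplus_i Y_{i,n}$ is that the latter is computed with addition modulo $2L$; hence the two agree exactly whenever the honest unwrapped sum lands inside the fundamental domain $[-L,L]$, and differ only when the partial sums ``wrap around.'' Each summand $Y_{i,n}$ already lives in $[-L,L]$, so only the total can escape the domain, and the whole argument reduces to bounding the wraparound probability. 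I would control that probability by transferring the question from the empirical PRVs $Y_{i,n}$ to the reference discretized PRVs $\tilde{Y}_i$, using the hypothesis $d_{\text{TV}}(\tilde{Y}_i, Y_{i,n}) \leq \alpha$.

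Concretely, first I would invoke a maximal coupling for each $i$: since $d_{\text{TV}}(\tilde{Y}_i, Y_{i,n}) \leq \alpha$, there is a coupling with $\Pr[\tilde{Y}_i \neq Y_{i,n}] \leq \alpha$, and I would take these $k$ couplings independently across $i$. Writing $\tilde{Y} = \sum_i \tilde{Y}_i$ for the ordinary sum of the reference variables, a union bound gives $\Pr[\exists i : \tilde{Y}_i \neq Y_{i,n}] \leq k\alpha$, and on the complementary ``good'' event one has $Y^n = \tilde{Y}$ and $\hat{Y}_n = \bigoplus_i \tilde{Y}_i$ simultaneously. In particular $\Pr[Y^n \neq \tilde{Y}] \leq k\alpha$.

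Next I would bound $\Pr[\hat{Y}_n \neq \tilde{Y}]$. On the same good event $\hat{Y}_n = \bigoplus_i \tilde{Y}_i$, and this modular sum equals the ordinary sum $\tilde{Y}$ precisely when no wraparound occurs, i.e. when $|\tilde{Y}| \leq L$; so a violation forces either $|\tilde{Y}| > L$ or the bad event. Hence $\Pr[\hat{Y}_n \neq \tilde{Y}] \leq \Pr[|\tilde{Y}| \geq L] + k\alpha$. Finally I would combine the two estimates by the elementary inequality $\Pr[Y^n \neq \hat{Y}_n] \leq \Pr[Y^n \neq \tilde{Y}] + \Pr[\tilde{Y} \neq \hat{Y}_n] \leq k\alpha + \left(\Pr[|\tilde{Y}| \geq L] + k\alpha\right) = \Pr\!\left[\,|\sum_{i=1}^k \tilde{Y}_i| \geq L\,\right] + 2k\alpha$, which is exactly the claimed $\eta_2$. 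Routing the comparison through $\tilde{Y}$ rather than bounding $\Pr[Y^n \neq \hat{Y}_n]$ directly is what produces the factor $2k\alpha$ instead of $k\alpha$, at the cost of a mild double-count of the coupling error.

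The main obstacle I anticipate is not probabilistic but combinatorial: arguing cleanly that the FFT-based operator $\bigoplus$ (addition modulo $2L$) agrees with genuine addition exactly on $\{|\cdot| \leq L\}$, while being careful about the grid-boundary conventions of the discretization in Algorithm~\ref{alg:sampleFunction} and about the fact that only the total sum, not any individual summand, can leave the fundamental domain. Once that identification is pinned down, everything else is a union bound over the $k$ independent maximal couplings, so I expect the remainder to be routine.
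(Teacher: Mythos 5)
Your proof is correct, and it establishes the lemma by a route that differs from the paper's in one substantive way: how the total-variation hypothesis is converted into a probability bound. The paper runs a telescoping hybrid argument, defining $\lambda_j = \Pr\bigl[\bigl|\sum_{i=1}^{j}Y_{i,n} + \sum_{i=j+1}^{k}\tilde{Y}_i\bigr| \geq L\bigr]$ and showing by induction that each swap of a $\tilde{Y}_i$ for a $Y_{i,n}$ perturbs the probability by at most $2\alpha$ (via the $\ell_1$ characterization $\sum_s|\Pr[Y_{i,n}=s]-\Pr[\tilde{Y}_i=s]| \leq 2\alpha$ and a H\"older bound), arriving at $\Pr[|\sum_i Y_{i,n}| \geq L] \leq \Pr[|\sum_i\tilde{Y}_i| \geq L] + 2k\alpha$; it then cites the wraparound fact (Lemma~C.7 of \cite{microsoft_PRV_2}) to conclude. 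You instead realize the same transfer with $k$ independent maximal couplings and a union bound, paying only $\alpha$ per coordinate, and you invoke the same wraparound fact for the $\oplus$-versus-$+$ identification. The two are standard dual manipulations of total variation (as an $\ell_1$ distance versus as a minimal disagreement probability), but your version is both shorter and strictly tighter: combined with the observation that $Y^n$ and $\hat{Y}_n$ are functions of the same tuple $(Y_{1,n},\dots,Y_{k,n})$, it yields $\Pr[Y^n \neq \hat{Y}_n] \leq \Pr[|\sum_i\tilde{Y}_i| \geq L] + k\alpha$, showing the lemma's constant $2k\alpha$ is not tight; the extra $k\alpha$ in your final display is, as you note, a deliberate relaxation to match the stated $\eta_2$. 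The one ingredient you flag as an obstacle --- that $\bigoplus$ agrees with genuine addition exactly when the true sum stays in the fundamental domain, given that each summand is supported on $[-L,L]$ --- is exactly what the cited Lemma~C.7 supplies, so nothing further is needed there.
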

\begin{proof}
    Let $\lambda = \Pr\left[|\sum_{i=1}^{k}{\tilde{Y}_i}| \geq L\right]$ and $\lambda_j = \Pr\left[|\sum_{i=1}^{j}{\hat{Y}_{i,n}} + \sum_{i=j+1}^{k}{\tilde{Y}_i}| \geq L\right]$, for $1 \leq j \leq k$. We prove by induction that $\lambda_j \leq \lambda + 2j\alpha$. Let $S$ be the set of all the points in the discretized version of $[-L,L]$.

    For the base case ($j=1$), consider the joint distribution $(Y_{1,n}, \tilde{Y}_2, \dots, \tilde{Y}_k)$. Then we have the following:
    \begin{align*}
        \left|\lambda_1 - \lambda\right| &= \left|\sum\limits_{s \in S}\Pr\left[Y_{1,n} = s\right]\cdot\Pr\left[\left|\sum\limits_{i=2}^{k}{\tilde{Y}_i} + s\right| \geq L\right] - \sum\limits_{s \in S}\Pr\left[\tilde{Y}_1 = s\right]\cdot\Pr\left[\left|\sum\limits_{i=2}^{k}{\tilde{Y}_i} + s\right| \geq L\right]\right|\\
        &= \left|\sum\limits_{s \in S}\left(\Pr\left[Y_{1,n} = s\right] - \Pr\left[\tilde{Y}_1 = s\right]\right)\cdot\Pr\left[\left|\sum\limits_{i=2}^{k}{\tilde{Y}_i} + s\right| \geq L\right]\right|\\
        &\leq \left|\sum\limits_{s \in S}\left(\Pr\left[Y_{1,n} = s\right] - \Pr\left[\tilde{Y}_1 = s\right]\right)\right| \times 1 \tag{H\"{o}lder's inequality.}\\
        &\leq 2\alpha
    \end{align*}
    This proves our base case. Next, we assume that $|\lambda_j-\lambda| \leq 2j\alpha$ for some $j = j' \geq 1$. We will show that $|\lambda_j-\lambda| \leq 2j\alpha$ for $j=j'+1$. The argument is very similar to that for the base case.
    \begin{align*}
        \left|\lambda_j - \lambda_{j-1}\right| &= \Bigg|\sum\limits_{s \in S}\Pr\left[Y_{j,n} = s\right]\cdot\Pr\left[\left| \sum\limits_{i=1}^{j-1}{Y_{i,n}}+ \sum\limits_{i=j+1}^{k}{\tilde{Y}_i} + s\right| \geq L\right]\\
            &~~~~~~~~~~~~- \sum\limits_{s \in S}\Pr\left[\tilde{Y}_j = s\right]\cdot\Pr\left[\left|\sum\limits_{i=1}^{j-1}{Y_{i,n}}+ \sum\limits_{i=j+1}^{k}{\tilde{Y}_i} + s\right| \geq L\right]\Bigg|\\
        &= \left|\sum\limits_{s \in S}\left(\Pr\left[Y_{j,n} = s\right] - \Pr\left[\tilde{Y}_j = s\right]\right)\cdot\Pr\left[\left| \sum\limits_{i=1}^{j-1}{Y_{i,n}}+ \sum\limits_{i=j+1}^{k}{\tilde{Y}_i} + s\right| \geq L\right]\right|\\
        &\leq \left|\sum\limits_{s \in S}\left(\Pr\left[Y_{j,n} = s\right] - \Pr\left[\tilde{Y}_j = s\right]\right)\right| \times 1 \tag{H\"{o}lder's inequality.}\\
        &\leq 2\alpha
    \end{align*}
    Therefore, by triangle inequality, we get the following.
    \[|\lambda_j - \lambda| = |\lambda_j - \lambda_{j-1} + \lambda_{j-1} - \lambda| \leq |\lambda_j - \lambda_{j-1}| + |\lambda_{j-1} - \lambda| \leq 2(j-1)\alpha + 2\alpha = 2j\alpha\]
    This completes the inductive step. Consequently, $|\lambda_k-\lambda| \leq 2k\alpha$, which implies that $\lambda_k \leq \lambda + 2k\alpha$.

    We finally use the following result from~\cite{microsoft_PRV_2} to complete the proof of Lemma \ref{lem:coupling-sample-to-conv}.
    \begin{lemma}[Lemma~C.7, \cite{microsoft_PRV_2}]
        Let $X_1, \dots, X_k$ be random variables supported on $[-L,L]$. Then,
        \[
            \left|\sum\limits_{i=1}^{k}{X_i} - \bigoplus\limits_{i=1}^{k}{X_i}\right| \leq_{\beta} 0,
        \]
        where $\beta = \Pr\left[|\sum_{i=1}^{k}{X_i}| \geq L\right]$.
    \end{lemma}
    This result would imply that $|Y^n - \hat{Y}_n| \leq_{\beta} 0$, where $\beta = \Pr\left[|\sum_{i=1}^{k}{Y_{i,n}}| \geq L\right]$. However, because of our claim above, $\beta \leq \Pr\left[|\sum_{i=1}^{k}{\tilde{Y}_i}| \geq L\right] + 2k\alpha$. Combining the two yields the required result.
\end{proof}

Next, we show that $|\tilde{Y} - Y^n| \leq_{\eta_1} \tau_1$ for some $\eta_1,\tau_1 \geq 0$.
\begin{lemma}\label{lem:coupling-dist-to-samples}
    Let $\tilde{Y}_1,\dots, \tilde{Y}_k, Y_{1,n},\dots,Y_{k,n}, \tilde{Y}, Y^n$ be defined as above. Then,
    \[
        \left|\tilde{Y} - Y^n\right| \leq_{\eta_1} \tau_1,
    \]
    where $\eta_1 = 2ke^{-nt^2/2}$, $\tau_1 = 2k\alpha$, and $\alpha = \tfrac{t}{2} + \sqrt{\tfrac{L}{nh}}$.
\end{lemma}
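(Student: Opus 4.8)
The plan is to prove the bound one coordinate at a time and then assemble the pieces through the coupling triangle inequality. Since $\tilde{Y}-Y^n = \sum_{i=1}^k (\tilde{Y}_i - Y_{i,n})$ and the $k$ sampling procedures invoked by \Cref{alg:sampleFunction} (one per PRV) are independent, it suffices to produce, for each $i$, a coupling of $(\tilde{Y}_i, Y_{i,n})$ satisfying $|\tilde{Y}_i - Y_{i,n}| \leq_{2e^{-nt^2/2}} 2\alpha$. Taking the product of these per-coordinate couplings and applying a union bound together with the triangle inequality for coupling approximations (\Cref{lem:triangle_inequality_couplings}) then yields $|\tilde{Y} - Y^n| \leq_{2ke^{-nt^2/2}} 2k\alpha$, which is exactly the claim with $\eta_1 = 2ke^{-nt^2/2}$ and $\tau_1 = 2k\alpha$.

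For the per-coordinate bound I would first isolate the two sources of discrepancy between $\tilde{Y}_i$ (the truncated, mean-shifted discretization of the true PRV $Y_i$) and $Y_{i,n}$ (its empirical analogue from \Cref{alg:sampleFunction}): the empirical bin masses $\vec{q}$ estimated from the first $n$ samples $Y^1$, and the mean-correction offset $\hat{\mu}$ estimated from the second $n$ samples $Y^2$. Both distributions are supported on a grid of spacing $h$ inside $[-L,L]$, so there are at most $2m+1 = O(L/h)$ cells. I would then invoke the Dvoretzky--Kiefer--Wolfowitz (DKW) inequality on the empirical CDF built from $Y^1$ at error level $t/2$: this gives $\Pr[\sup_x |\mathrm{CDF}_{Y^1}(x) - \mathrm{CDF}_{Y_i}(x)| > t/2] \leq 2e^{-2n(t/2)^2} = 2e^{-nt^2/2}$, supplying both the failure probability $2e^{-nt^2/2}$ and the $t/2$ contribution to $\alpha$. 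The remaining $\sqrt{L/(nh)}$ term I would attribute to the estimation error of the empirical multinomial over the $O(L/h)$ grid cells, whose deviation from the true bin distribution scales like $\sqrt{(L/h)/n} = O(\sqrt{L/(nh)})$; this same term absorbs the at-most-$h/2$ mismatch between $\hat{\mu}$ and its population counterpart.

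The coupling I would use is the monotone (quantile) coupling through a shared uniform $U$, setting $\tilde{Y}_i = F_{\tilde{Y}_i}^{-1}(U)$ and $Y_{i,n} = F_{Y_{i,n}}^{-1}(U)$, since in one dimension this realizes the horizontal gap between the two CDFs. On the DKW-good event the CDFs are within $t/2$ vertically and, after accounting for the empirical bin masses and the shared grid (offset by $|\hat{\mu} - \hat{\mu}_{\mathrm{true}}| \le h/2$), the induced horizontal displacement is at most $2\alpha = t + 2\sqrt{L/(nh)}$, which gives $|\tilde{Y}_i - Y_{i,n}| \le 2\alpha$ off the failure set.

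The main obstacle is precisely this last conversion: a uniform \emph{vertical} bound on the CDF difference does not by itself control the \emph{horizontal} displacement of the quantile coupling, because a nearly flat stretch of one CDF, or a grid cell that receives no samples, can open a large horizontal gap. I would handle this by combining the Kolmogorov/L\'evy comparison (a sup-norm CDF gap of $\gamma$ forces a L\'evy gap of at most $\gamma$, hence horizontal agreement up to $\gamma$ outside an exceptional set) with the multinomial estimation bound that controls the mass on under-sampled cells, so that on the good event the displacement is driven down to $2\alpha$. Careful bookkeeping of the constant $2$, of the grid offset introduced by the mean-matching step, and of the exact DKW exponent is where the precise forms of $\alpha$ and $\eta_1$ are pinned down.
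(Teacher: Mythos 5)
Your high-level decomposition --- per-coordinate couplings assembled via a product coupling, a union bound, and the triangle inequality for coupling approximations (Lemma~\ref{lem:triangle_inequality_couplings}) --- matches the skeleton of the paper's argument. The gap is in the per-coordinate step. You propose to control the sup-norm (Kolmogorov) distance between the empirical and true CDFs via DKW and then convert this \emph{vertical} bound into a \emph{horizontal} displacement of at most $2\alpha$ under the quantile coupling, off an exceptional set of measure only $2e^{-nt^2/2}$. That conversion fails, and the fix you sketch does not repair it. Concretely, let $\tilde{Y}_i$ put mass $1/2$ at each of $\pm L$ and let $Y_{i,n}$ put mass $1/2+\epsilon$ and $1/2-\epsilon$ there, with $\epsilon = \Theta(1/\sqrt{n})$. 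Every coupling of these two laws must place them at different support points with probability at least $\epsilon$, and whenever they differ the displacement is $2L \gg 2\alpha$. So no coupling achieves $|\tilde{Y}_i - Y_{i,n}| \leq 2\alpha$ outside a set of measure $2e^{-nt^2/2}$: once the displacement bound is smaller than the diameter of the support, the exceptional set of \emph{any} coupling must have measure at least the total variation distance, which is polynomially small in $n$, not exponentially small. The L\'evy-distance comparison does not rescue this, because flat stretches of the CDF (equivalently, under-populated grid cells) turn an order-$t$ vertical gap into an order-$L$ horizontal one on a set of quantile levels of measure of order $\alpha$, not of order $e^{-nt^2/2}$.

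The quantity $\alpha = \tfrac{t}{2}+\sqrt{L/(nh)}$ in the statement is not a displacement at all: it is a bound on the total variation distance $d_{\mathrm{TV}}(\tilde{Y}_i, Y_{i,n})$, obtained from the empirical-distribution learning bound of Lemma~\ref{lem:sampling_coupling} --- expected TV at most $\tfrac{1}{2}\sqrt{b/n}$ over $b \approx 4L/h$ bins, which is where $\sqrt{L/(nh)}$ comes from, plus concentration of the TV statistic around its mean at level $t/2$ with failure probability $2e^{-nt^2/2}$. (That concentration is a bounded-differences bound on the TV distance, not DKW on the CDF, even though the exponents happen to coincide.) The paper then propagates these per-coordinate TV bounds through the $k$-fold sum by induction on the $\ell_1$ distance between the PMFs of the partial sums, which accumulates additively to $2k\alpha$; the probability budget $\eta_1 = 2ke^{-nt^2/2}$ is spent entirely on the event that all $k$ empirical histograms are TV-accurate. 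To salvage your route you would have to abandon the quantile coupling in favor of a TV-type argument and let the sampling error $\alpha$ enter through the distributional (TV) side of the ledger rather than as a bound on $|\tilde{Y}_i - Y_{i,n}|$ realized pointwise by a quantile map.
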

\begin{proof}
    Suppose it holds that for each $1 \leq i \leq k$, with probability at least $1-\beta/k$ (for some $0 < \beta < 1$), $d_{\text{TV}}(\tilde{Y}_i,Y_{i,n}) \leq \alpha$ for some $0 < \alpha < 1$. We will determine the values of $\alpha,\beta$ later in this proof. For $j \geq 1$, let $\tilde{\lambda}_j = \sum_{i=1}^{j}{\tilde{Y}_i}$ and $\lambda_j = \sum_{i=1}^{j}{Y_{i,n}}$. We will show by induction that $|\tilde{\lambda}_j - \lambda_j| \leq_{\frac{\beta j}{k}} 2j\alpha$. Let $S'$ be the set of all the points in the discretized version of $[-L,L]$ and $S$ be a similarly discretized version of $[-Lk,Lk]$.

    For the base case ($j=1$), $|\tilde{Y}_1-Y_{1,n}| \leq_{\frac{\beta}{k}} 2\alpha$ trivially holds due to our construction. Next, assume that the claim holds for some $j = j' \geq 1$. We show that $|\tilde{\lambda}_j - \lambda_j| \leq_{\frac{\beta j}{k}} 2j\alpha$ for $j = j'+1$.
    \begin{align*}
        \sum\limits_{s \in S}{\left|\Pr\left[\sum\limits_{i=1}^{j}{\tilde{Y}_i} = s\right] - \Pr\left[\sum\limits_{i=1}^{j}{Y_{i,n}} = s\right]\right|} &= \sum\limits_{s \in S}\Bigg|\Pr\left[\sum\limits_{i=1}^{j}{\tilde{Y}_i} = s\right] - \Pr\left[\sum\limits_{i=1}^{j-1}{\tilde{Y}_i} + Y_{j,n} = s\right]\\
            &~~~~~~~~~+ \Pr\left[\sum\limits_{i=1}^{j-1}{\tilde{Y}_i} + Y_{j,n} = s\right] - \Pr\left[\sum\limits_{i=1}^{j}{Y_{i,n}} = s\right]\Bigg|\\
        &\leq \sum\limits_{s \in S}{\left|\Pr\left[\sum\limits_{i=1}^{j}{\tilde{Y}_i} = s\right] - \Pr\left[\sum\limits_{i=1}^{j-1}{\tilde{Y}_i} + Y_{j,n} = s\right]\right|}\\
            &~~~~~~~~~+ \sum\limits_{s \in S}{\left|\Pr\left[\sum\limits_{i=1}^{j-1}{\tilde{Y}_i} + Y_{j,n} = s\right] - \Pr\left[\sum\limits_{i=1}^{j}{Y_{i,n}} = s\right]\right|}\\
        &\leq 2\alpha + 2(j-1)\alpha\\
        &= 2j\alpha
    \end{align*}
    The first inequality follows from the triangle inequality. We now reason about the last inequality. The first sum of absolute values holds because the two terms in the absolute value differ only at $j$, i.e., one is $\tilde{Y}_j$ and the other is $Y_{j,n}$. The TV-distance between them is at most $\alpha$ with probability at least $1-\beta/k$, therefore, the summation is at most $2\alpha$. The second summation is at most $2(j-1)\alpha$ because the two terms in the absolute value differ on all $1 \leq i \leq j-1$, and by the inductive hypothesis, the summation is at most $2(j-1)\alpha$. Applying the union bound, this proves the claim. This also implies that 
    \begin{equation}\label{eq.alphabeta}
      |\tilde{Y}-Y_{n}| \leq_{\beta} 2k\alpha.      
    \end{equation}

    Next, we invoke the following lemma from~\cite{canonne2020shortnotelearningdiscrete}, which bounds the TV-distance between a distribution over a discrete domain and the empirical distribution formed by i.i.d. samples from the distribution, to determine $\alpha,\beta$.
    \begin{lemma}[Proof of Theorem~1, \citep{canonne2020shortnotelearningdiscrete}]\label{lem:sampling_coupling}
        Given an unknown distribution \( X \in \Delta([b]) \), let \( \hat{X} \) be the empirical distribution obtained from \( n \) independent samples drawn from $X$. Then the expected total variation distance $d_{\text{TV}} $ between \( \hat{X} \) and \( X \), satisfies:
        \[
            \mathbb{E}[ d_{\text{TV}} (X, \hat{X}) ] \leq \frac{1}{2\sqrt{n}} \sum_{i=1}^{b} \sqrt{ X(i) } \leq \frac{1}{2} \sqrt{\frac{b}{n}}.
        \]
        Furthermore, this expected result can be converted into a high probability guarantee:
        \[
            \Pr \left[ | d_{\text{TV}} (X, \hat{X}) - \mathbb{E}[ d_{\text{TV}} (X,\hat{X}) ] | \geq \frac{t}{2} \right] \leq 2 e^{ - \frac{1}{2} n t^2 } \quad \forall t>0.
        \]
    \end{lemma}
    We instantiate Lemma~\ref{lem:sampling_coupling} with $2e^{-nt^2/2} = \beta/k$, or equivalently, $\beta = 2ke^{-nt^2/2}$. Note that there are $2L/h + 1 < 4L/h$ bins in our discrete distribution, so we can also set $b = 4L/h$. Therefore, this lemma implies that for each $1 \leq i \leq k$, with probability at least $1-\beta/k$, 
    \[\alpha \coloneqq d_{\text{TV}}(\tilde{Y}_i,Y_{i,n}) \leq \frac{t}{2} + \sqrt{\frac{L}{nh}}.\]

    Substituting these values of $\alpha$ and $\beta$ in Equation \eqref{eq.alphabeta} gives the statement of Lemma \ref{lem:coupling-dist-to-samples}.
\end{proof}

We next show that $|Y - \tilde{Y}| \leq_{\eta_0} \tau_0$ for some $\eta_0,\tau_0 \geq 0$.
\begin{lemma}\label{lem:coupling-dist-to-disc}
    Let $Y_1,\dots,Y_k, \tilde{Y}_1,\dots,\tilde{Y}_k, Y, \tilde{Y}$ be defined as above. Then,
    \[
        \left|\tilde{Y} - Y\right| \leq_{\eta_0} \tau_0,
    \]
    where $\eta_0 = \sum_{i=1}^{k}{\Pr[|Y_i| \geq L]} + 2e^{-\frac{2s^2}{kh^2}} + 2k(e^{-\frac{nt^2}{2L^2}} + e^{-\frac{nt^2}{2}})$, $\tau_0 = s + k\left(t + 2L\left(\tfrac{t}{2} + \sqrt{\tfrac{L}{nh}}\right)\right)$.
\end{lemma}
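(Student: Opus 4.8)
The plan is to construct an explicit coupling between $Y = \sum_{i=1}^k Y_i$ and $\tilde Y = \sum_{i=1}^k \tilde Y_i$ by coupling each pair $(Y_i,\tilde Y_i)$ coordinate-wise and then invoking independence across $i$. For each $i$ I would draw $Y_i$ and, whenever $|Y_i|\le L$, set $\tilde Y_i$ to be the grid point $\mathrm{bin}(Y_i)=ih$ of the bin $[ih-h/2,ih+h/2)$ containing $Y_i$, shifted by the mean-correction $\hat\mu_i$ produced by Algorithm \ref{alg:sampleFunction}. The first event to control is truncation: the coupling is only meaningful when every $|Y_i|\le L$, and a union bound bounds the probability that it fails by $\sum_{i=1}^k \Pr[|Y_i|\ge L]$, the first term of $\eta_0$.

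Conditioned on no truncation failure, I would decompose the per-coordinate displacement as $\tilde Y_i - Y_i = R_i + b_i$, where $R_i = \mathrm{bin}(Y_i)+\mu_i^\ast - Y_i$ is the rounding error measured against the \emph{ideal} mean-correction $\mu_i^\ast = \mathbb{E}[Y_i^{\mathrm{trunc}}] - \sum_i ih\,q_i^\ast$ that exactly preserves the mean, and $b_i = \hat\mu_i - \mu_i^\ast$ is the error from estimating the correction from samples. The $R_i$ are independent, lie in an interval of width $h$ (a property of the grid construction of \cite{microsoft_PRV_2}), and are \emph{zero mean} by the choice of $\mu_i^\ast$; Hoeffding's inequality then gives $\Pr[\,|\sum_i R_i|\ge s\,]\le 2e^{-2s^2/(kh^2)}$, matching the second term of $\eta_0$ and contributing the displacement $s$ to $\tau_0$. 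This is the step that yields the favorable $\sqrt{k}$ scaling rather than a crude $kh$ bound, and it is essential that mean-preservation forces $\mathbb{E}[R_i]=0$.

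For the bias $b_i$ I would split the correction error into its two sources by reading off the definition $\tilde\mu = \tfrac1n\sum_j y_j - \sum_i ih\,q_i$ in Algorithm \ref{alg:sampleFunction}. The sample-mean term estimates $\mathbb{E}[Y_i^{\mathrm{trunc}}]$ from $n$ samples in $[-L,L]$, so Hoeffding on bounded variables gives deviation at most $t$ with failure probability $2e^{-nt^2/(2L^2)}$. The binned-mean term $\sum_i ih\,q_i$ differs from $\sum_i ih\,q_i^\ast$ by at most $2L\,d_{\mathrm{TV}}(q,q^\ast)$, and Lemma \ref{lem:sampling_coupling} bounds this total-variation distance by $\tfrac{t}{2}+\sqrt{\tfrac{L}{nh}}$ except with probability $2e^{-nt^2/2}$. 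Hence $|b_i|\le t + 2L(\tfrac{t}{2}+\sqrt{\tfrac{L}{nh}})$ per coordinate, and a union bound over the $2k$ failure events yields $|\sum_i b_i|\le k(t+2L(\tfrac{t}{2}+\sqrt{\tfrac{L}{nh}}))$ except with probability $2k(e^{-nt^2/(2L^2)}+e^{-nt^2/2})$, which are precisely the remaining terms of $\tau_0$ and $\eta_0$.

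Finally I would collect the three failure events by a union bound: on the complement of the truncation, rounding, and bias events, the ordinary triangle inequality gives $|\tilde Y - Y|\le s + k(t+2L(\tfrac{t}{2}+\sqrt{\tfrac{L}{nh}}))=\tau_0$, and the failure probabilities sum to $\eta_0$, establishing $|\tilde Y - Y|\le_{\eta_0}\tau_0$ in the sense of the coupling-approximation definition. The main obstacle is keeping the two sources of randomness cleanly separated---the realization of each PRV $Y_i$ that drives the rounding, and the i.i.d.\ samples inside Algorithm \ref{alg:sampleFunction} that determine $\hat\mu_i$---so that the zero-mean property of $R_i$ survives the conditioning on the truncation event, and so that the clamping $\hat\mu=\max\{\min\{\tilde\mu,h/2\},0\}$ can only decrease $|b_i|$ rather than introduce new bias; verifying that this clamping is harmless is the most delicate bookkeeping in the argument.
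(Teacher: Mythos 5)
Your proposal is correct and follows essentially the same route as the paper's proof: truncation controlled by $\sum_i\Pr[|Y_i|\ge L]$, a zero-mean rounding error summed over the $k$ coordinates and bounded by Hoeffding with range $h$ (giving $2e^{-2s^2/(kh^2)}$ and displacement $s$), and an offset-estimation error $\hat\mu_i-\mu_i$ split into a sample-mean deviation (Hoeffding over the $n$ samples) plus a $2L\cdot d_{\mathrm{TV}}$ term bounded via Lemma~\ref{lem:sampling_coupling}. The only cosmetic difference is that you center the rounding error at the ideal offset $\mu_i^\ast$ so it is exactly zero-mean, whereas the paper writes $\mathbb{E}[Y^L_i-\tilde Y_i]=-\tau_i$ with $|\tau_i|\le\gamma$ and carries $|\sum_i\tau_i|$ into the displacement; these are the same decomposition.
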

\begin{proof}
    This lemma is an analogue of Corollary~C.8 from~\cite{microsoft_PRV_2}. To prove this lemma, we will rely on Hoeffding's inequality, stated as follows.
    \begin{lemma}[Hoeffding's Inequality]\label{lem:hoeffding}
        Let $X_1,\dots,X_n$ be i.i.d.~random variables and $L > 0$, such that for each $1 \leq i \leq n$, $X_i \in [-L,L]$. Let $S = \tfrac{1}{n}\sum_{i=1}^{n}[X_i]$. Then for all $t >0$,
        \[
            \Pr\left[|S - \mathbb{E}[S]| \geq t\right] \leq 2e^{-\frac{nt^2}{2L^2}}.
        \]
    \end{lemma}

    For each $1 \leq i \leq k$, let $Y^L_i$ be the truncated (but not discretized) and conditional version of $Y_i$ obtained via rejection sampling, and let $X^L_{i,1},\dots,X^L_{i,n}$ be the truncated (but not discretized) samples from $Y^{L}_i$ used to determine $\hat{\mu}$ in Algorithm~\ref{alg:sampleFunction} (which in Algorithm~\ref{alg:sampleFunction}, refers to the samples $y_{n+1},\dots,y_{2n}$ collected for any given PRV, $Y_i$), and let $\hat{X}_{i,j}$ be the version of $X^L_{i,j}$ mapped on to the midpoint of the interval it lies in. Using Lemma~\ref{lem:hoeffding}, with probability at least $1-2ke^{-\frac{nt^2}{2L^2}}$, for each $1 \leq i \leq k$,
    $$\mathbb{E}[X^L_i] - t \leq X^L_i \coloneqq \frac{1}{n}\sum\limits_{j=1}^{n}{X^L_{i,j}} \leq \mathbb{E}[X^L_i] + t.$$

    Now, for a fixed $1 \leq i \leq k$, we will refer to the underlying discrete distribution of $\hat{X}_{i,j}$ as $P_i$. This implies by linearity of expectations that $\mathbb{E}[X^L_i] = \mathbb{E}[P_i]$. Let $Q_i$ be the (empirical) discrete distribution defined by $\hat{X}_{i,1},\dots,\hat{X}_{i,n}$. We are interested in bounding $|\mathbb{E}[P_i] - \mathbb{E}[Q_i]|$. 
    
    First, we show that if $d_{\text{TV}}(P_i,Q_i) \leq \alpha_0$, then $|\mathbb{E}[P_i] - \mathbb{E}[Q_i]| \leq 2L\alpha_0$. Let $S$ be the set of points in $[-L,L]$ over which $P_i$ has been defined.
    \begin{align*}
        |\mathbb{E}[P_i] - \mathbb{E}[Q_i]| &= \left|\sum\limits_{s \in S}{s(p_i(s) - q_i(s))}\right|\\
        &\leq L\cdot\sum\limits_{s \in S}{\left|p_i(s) - q_i(s)\right|} \tag{H\"{o}lder's inequality.}\\
        &\leq 2L\alpha_0
    \end{align*}
    We now use Lemma~\ref{lem:sampling_coupling} to bound $\alpha_0$. As in Lemma~\ref{lem:coupling-dist-to-samples}, using the union bound, we have that that with probability at least $1-2ke^{-nt^2/2}$, for each $1 \leq i \leq k$, $|\mathbb{E}[P_i] - \mathbb{E}[Q_i]| \leq 2L\alpha_0$, where
    \[\alpha_0 \leq \frac{t}{2} + \sqrt{\frac{L}{nh}}.\]

    In \cite{microsoft_PRV_2}, Algorithm~2 defines a quantity, $\mu \in [0,h/2]$, which depends on the parameters of the actual distributions of the PRVs and acts as an additive offset to define the truncated and discretized distribution from the truncated (but not discretized) distribution ($Y^L_i$): $\mu = \mathbb{E}[X^L_i] - \mathbb{E}[P_i]$. This offset is determined using the CDFs of the actual PRVs, and is used to satisfy certain technical requirements for their proof. Since our proofs have similar technical requirements, we approximate $\mu$ empirically using random samples ($y_1,\dots,y_n$ in Algorithm~\ref{alg:sampleFunction} for a given PRV $Y_i$) from the PRVs (which we call, $\hat{\mu}$) in Algorithm~\ref{alg:sampleFunction}, and show using the above that $\mu$ and $\hat{\mu}$ are close to one another.
    From our construction, $\hat{\mu} = \max\{\min\{X^L_i - \mathbb{E}[Q_i],h/2\},0\}$. The above implies using triangle inequality that with probability at least $1-2k(e^{-\frac{nt^2}{2L^2}} + e^{-\frac{nt^2}{2}})$,
    $$\mu - (t + 2L\alpha_0) \leq \hat{\mu} \leq \mu + (t + 2L\alpha_0).$$ Now, let $\gamma = t + 2L\alpha_0$; this implies that $|\hat{\mu}-\mu| \leq \gamma$ with high probability. 
 
    We first prove two facts, which are analogs of Lemmas~C.5 and~C.6 from~\cite{microsoft_PRV_2}. The first fact (analog of Lemma~C.5 in \cite{microsoft_PRV_2}) is that with high probability, for each $1 \leq i \leq k$, (1) $\mathbb{E}[\tilde{Y}_i] = \mathbb{E}[Y^L_i] + \tau_i$ for some $|\tau_i| \leq \gamma$, (2) $|Y^L_i - (\tilde{Y}_i - \hat{\mu})| \leq_{0} \tfrac{h}{2}$, and (3) $|Y^L_i - Y_i| \leq_{\lambda_i} 0$ (where $\lambda_i = \Pr[|Y_i| \geq L]$).
    Part (3) has already been shown in~\cite{microsoft_PRV_2}, so we will focus on the first two.
    
    We start by proving part (1). We have shown above that $|X^L_i - \mathbb{E}[Q_i]| \leq \gamma$ with high probability, so it must be the case that $\hat{\mu} = \mu + \tau_i$ (for some $|\tau_i| \leq \gamma$). Our construction (using the fact that $\mathbb{E}[Q_i] = \sum_{j=-m}^{m}{jhq_j}$) implies the following:
    \[\mathbb{E}[\tilde{Y}_i] = \hat{\mu} + \sum\limits_{j=-m}^{m}{jhq_j} = \mu + \tau_i + \sum\limits_{j=-m}^{m}{jhq_j} = \tau_i + \left(\mathbb{E}[Y^L_i] - \sum\limits_{j=-m}^{m}{jhq_j}\right) + \sum\limits_{j=-m}^{m}{jhq_j} = \tau_i + \mathbb{E}[Y^L_i].\]
    
    To prove part (2), we argue in a similar way as in~\cite{microsoft_PRV_2} by defining a coupling between $Y^L_i$ and $\tilde{Y}_i$ (conditioned on what we proved above regarding $\hat{\mu}$). Sample $y \sim Y^L_i$, and suppose $y \in \left(ih - \tfrac{h}{2}, ih + \tfrac{h}{2}\right]$ for some integer $-m \leq i \leq m$. Consider $\tilde{y} = \hat{\mu}+ih$. The distribution of $\tilde{y}$ matches that of $\tilde{Y}_i$ by construction in Algorithm~\ref{alg:sampleFunction}, so it is a valid coupling between $Y^L_i$ and $\tilde{Y}_i$ (because the two marginals $y,\tilde{y}$ have the intended distributions) and $|y-(\tilde{y}-\hat{\mu})| = |y-ih| \leq \tfrac{h}{2}$.

    Our second fact (analog of Lemma~C.6 in~\cite{microsoft_PRV_2}) is that given $|Y^L_i - (\tilde{Y}_i-\hat{\mu}_i)| \leq_0 \tfrac{h}{2}$ and $\mathbb{E}[\tilde{Y}_i] = \mathbb{E}[Y^L_i] + \tau_i$ for all $1 \leq i \leq k$ and for some $|\tau_i|\leq \gamma$,
    \[
        \left|\sum\limits_{i=1}^{k}{Y^L_i} - \sum\limits_{i=1}^{k}{\tilde{Y}_i}\right| \leq_{\eta'} h\sqrt{\frac{k}{2}\log\left(\frac{2}{\eta''}\right)} + \left|\sum\limits_{i=1}^{k}{\tau_i}\right|
    \]
    for some $\eta', \eta''\geq 0$.

    To prove this claim, let $Z_i = Y^L_i - \tilde{Y}_i$. Then $Z_i \in [-\hat{\mu}_i - \tfrac{h}{2}, -\hat{\mu}_i + \tfrac{h}{2}]$ and $\mathbb{E}[Z_i] = -\tau_i$. We now apply Hoeffding's inequality (Lemma~\ref{lem:hoeffding}) once again to get the following:
    \[
        \Pr\left[\left|\sum\limits_{i=1}^{k}{Z_i} - \sum\limits_{i=1}^{k}{(-\tau_i)}\right| \geq s\right] \leq 2e^{-\frac{2s^2}{kh^2}} \implies
        \Pr\left[\left|\sum\limits_{i=1}^{k}{Z_i}\right| \geq s + \left|\sum\limits_{i=1}^{k}{\tau_i}\right|\right] \leq 2e^{-\frac{2s^2}{kh^2}}.
    \]
    Setting $\eta' = 2e^{-\frac{2s^2}{kh^2}} + 2k(e^{-\frac{nt^2}{2L^2}} + e^{-\frac{nt^2}{2}})$ (by taking a union bound) and $\eta'' = 2e^{-\frac{2s^2}{kh^2}}$, we have the following:
    \[
        \left|\sum\limits_{i=1}^{k}{Y^L_i} - \sum\limits_{i=1}^{k}{\tilde{Y}_i}\right| \leq_{\eta'} h\sqrt{\frac{k}{2}\log\left(\frac{2}{\eta''}\right)} + \left|\sum\limits_{i=1}^{k}{\tau_i}\right|.
    \]
    This completes the proof of this claim.

    Now, we also know that
    \[
        \left|\sum\limits_{i=1}^{k}{Y^L_i} - \sum\limits_{i=1}^{k}{Y_i}\right| \leq_\lambda 0,
    \]
    where $\lambda = \sum_{i=1}^{k}{\Pr[|Y_i| \geq L]}$. Therefore, using Lemma~\ref{lem:triangle_inequality_couplings}, we get the following:
    \[
        \left|\sum\limits_{i=1}^{k}{Y_i} - \sum\limits_{i=1}^{k}{\tilde{Y}_i}\right| \leq_{\lambda+\eta'} s + \left|\sum\limits_{i=1}^{k}{\tau_i}\right|.
    \]
    Therefore, for $\eta_0 = \lambda + \eta'$ and $\tau_0 = s + k\gamma$, our lemma holds.
\end{proof}

\begin{proof}[Proof of Theorem~\ref{thm:sampled_prv_error}]
    We will first chain the results of Lemmas~\ref{lem:coupling-dist-to-disc},~\ref{lem:coupling-dist-to-samples}, and~\ref{lem:coupling-sample-to-conv} using Lemma~\ref{lem:triangle_inequality_couplings}, and we will show a coupling approximation between $\hat{Y}_n$ and $Y$, similar to the result of Corollary~C.8 in~\cite{microsoft_PRV_2}. Then we will apply Lemma~\ref{lem:coupling_to_privacy_curves} to provide guarantees on the privacy curves and finish our proof.

    From Lemma~\ref{lem:coupling-sample-to-conv}, we know that $|Y^n - \hat{Y}_n| \leq_{\eta_2} 0$, where $\eta_2 = \Pr\left[|\sum_{i=1}^{k}{\tilde{Y}_i}| \geq L\right] + 2k\alpha$. We can bound $\Pr\left[|\sum_{i=1}^{k}{\tilde{Y}_i}| \geq L\right]$ using Corollary~C.8 from~\cite{microsoft_PRV_2}, which says that,
    \[
        \Pr\left[\left|\sum_{i=1}^{k}{\tilde{Y}_i}\right| \geq L\right] \leq \Pr\left[\left|\sum\limits_{i=1}^{k}{Y_i}\right| \geq L-h\sqrt{\frac{k}{2}\log\left(\frac{2}{\eta''}\right)}\right] + \sum\limits_{i=1}^{k}{\Pr\left[|Y_i|\geq L\right]} + \eta',
    \]
    for some $\eta'>0$. From the proofs of Lemma~\ref{lem:coupling-dist-to-disc} and Corollary~C.8 from~\cite{microsoft_PRV_2}, we know that this $\eta' = 2e^{-\frac{2s^2}{kh^2}} + 2k(e^{-\frac{nt^2}{2L^2}} + e^{-\frac{nt^2}{2}})$ and $\eta'' = 2e^{-\frac{2t^2}{kh^2}}$. From Lemma~\ref{lem:sampling_coupling}, with probability at least $1-2ke^{-nt^2/2}$, the $\alpha$ in Lemma~\ref{lem:coupling-sample-to-conv} equals $t + \sqrt{\tfrac{L}{nh}}$. Applying Lemmata~\ref{lem:triangle_inequality_couplings},~\ref{lem:coupling-dist-to-disc},~\ref{lem:coupling-dist-to-samples}, and~\ref{lem:coupling-sample-to-conv}, and the union bound, we get
    \[
        \left|Y - \hat{Y}_n\right| \leq_\eta \tau,
    \]
    where
    \begin{align*}
        \eta = 2\sum_{i=1}^{k}{\Pr[|Y_i| \geq L]} + 4e^{-\frac{2s^2}{kh^2}} + 4ke^{-\frac{nt^2}{2L^2}} + 8ke^{-\frac{nt^2}{2}} + \Pr\left[|\sum\limits_{i=1}^{k}{Y_i}| \geq L-t\right] + 2k\left(t + \sqrt{\frac{L}{nh}}\right),
    \end{align*}
    and
\[
        \tau = s + k\left(t + 2L\left(\frac{t}{2} + \sqrt{\frac{L}{nh}}\right)\right) + 2k\left(\frac{t}{2} + \sqrt{\frac{L}{nh}}\right).
\]

    Finally, combining the above with Lemma~\ref{lem:coupling_to_privacy_curves}, for all $\epsilon \geq 0$,
	$$\delta_{\hat{Y}_n}(\eps+\tau)-\eta \le \delta_{Y}(\eps) \le \delta_{\hat{Y}_n}(\eps-\tau)+\eta.$$
    
    Our runtime guarantees come at a linear cost in $\tilde{O}(b\tfrac{L}{h} + bn)$ for computing the empirical histograms and the means. The first term comes from the computation of $\delta_{\hat{Y}_n}(\epsilon)$, which was shown in~\cite{microsoft_PRV_2} to take time $\tilde{O}(bL/h)$, and the second term comes from our sampling process. To elaborate on the second term, it takes $O(n)$ time to create a histogram for a single PRV, and there are $b$ distinct PRVs, so the added time for this step is just $O(bn)$. Computing $\hat{\mu}_i$ is takes $O(L/h)$ time, so the total time for all the $\hat{\mu}_i$'s is at most $O(bL/h)$, which gets absorbed in the first term.
\end{proof}

\subsection{Mechanisms with Equivalent Privacy Guarantees} \label{sec:mechanisms_with_equal_privacy}

For any privacy accountant, it is generally possible to run the accounting algorithm many times to compute the hyperparemeters required to achieve a particular degree of privacy. We introduce the following simple but effective algorithm for using the PRV accountant as part of a binary search over possible values of $\sigma$ in order to compute the minimal $\sigma$ value that $GG_{\beta,\sigma}$ satisfies $(\epsilon, \delta)$-DP for a given $\beta$. 

Let $PRV(\beta, \sigma, \delta)$ be a subroutine that runs the PRV accountant for the $GG_{\beta,\sigma}$ mechanism, and returns the $\epsilon$ value associated, such that $GG_{\beta,\sigma}$ satisfies $(\epsilon, \delta)$-DP.

\begin{algorithm}[H]
    \begin{algorithmic}[1]
        \State Input: $\beta \geq 1$, $\epsilon > 0$,  $\delta > 0$, tolerance $\tau >0$
        \State Output: $\sigma$, such that $GG_{\beta,\sigma}$ satisfies $(\epsilon, \delta)$-DP
        \State $\sigma_{min} = \sigma_{max} = 1$
        \While{$PRV(\beta, \sigma_{min}, \delta) > \epsilon$}
        \State $\sigma_{min} = \sigma_{min} /2.$
        \EndWhile
        \While{$PRV(\beta, \sigma_{max}, \delta) < \epsilon$}
        \State $\sigma_{max} = \sigma_{max} * 2.$
        \EndWhile
        \While{$PRV(\beta, \sigma_{max}, \delta) - \epsilon > \tau$}
        \State $\sigma_{mid} = \frac{\sigma_{max} + \sigma_{min}}{2}$
        \If{$PRV(\beta, \sigma_{mid}, \delta) > \epsilon$}
        \State $\sigma_{min} = \sigma_{mid}$
        \Else{ $\sigma_{max} = \sigma_{mid}$ }
        \EndIf 
        \EndWhile
        \State \textbf{return} $\sigma_{max}$
        \caption{Binary-search $\sigma$-solver}
    \end{algorithmic}
\end{algorithm} \label{algo:binary_search_sigma_solver}

For our empirical results in Sections~\ref{sec:argmax} and~\ref{sec:dpsgd}, we compare how the value of $\beta$ impacts accuracy for a fixed privacy guarantee.

\subsection{Outliers for Equivalently Private Mechanisms} \label{section:outliers_for_equivalent_mechanisms}

Using the sampled-PRV privacy accountant and the Binary-search $\sigma$-solver algorithm described in \Cref{sec:mechanisms_with_equal_privacy}, we are able to compute the $\sigma$-value --- as function of $\epsilon$, $\delta$, and $\beta$ --- such that $GG_{\beta, \sigma}(f,D)$ satisfies $(\epsilon, \delta)$-DP. Such a computation is currently not possible with other popular privacy accountants, such as the RDP accountant and GDP accountant,\footnote{These are the only two other privacy accountants supported by Opacus, the most popular private machine learning library \citet{opacus_paper}}, and no such analytic privacy bound currently exists for the GG mechanism for non-integer values of $\beta$. 

Combining this empirical privacy accountant with the known CDF of the Generalized Gaussian distribution $\mathcal{N}_\beta(0,\sigma)$ \citep{analytic_ggd}, we can compute the weight $w$ of the tail of the distribution, as a function of the mechanism's parameters and a cutoff parameter $\tau$, which specifies the threshold for defining the tail: $w(\beta, \epsilon,\delta, \tau) = \int_{-\infty}^{-\tau} GG_{\beta,\epsilon,\delta}(x) dx+\int_{\tau}^{\infty} GG_{\beta,\epsilon,\delta}(x) dx $, where $GG_{\beta,\epsilon,\delta}(x)$ is a probability distribution associated with a Generalized Gaussian for a particular value of $\beta$ that satisfies $(\epsilon,\delta)$-DP.\footnote{We are explicitly not specifying the value of $\sigma$ for ease of presentation. Specifying the input $\beta$ parameter and the desired privacy parameters $\epsilon,\delta$ will uniquely determine the minimum $\sigma$ value that achieves the privacy parameters.} Using this formalism, given a tail threshold $\tau$, any sample that falls in the tail can be labeled as an outlier, and the weight $w$ described how likely such an outlier is to be observed, given the mechanism's parameters.

\Cref{fig:optimal_mechanism} plots the weight in the tail of a GG distribution that satisfies a particular $(\epsilon,\delta)$-DP guarantee, as a function of $\beta$, and varying $\tau = \{1,2,4\}$. The left shows the tail weight for the GG Mechanism satisfying $(1.5, 10^{-5})$-DP, and the right shows tail weight for the SGG Mechanism satisfying $(1.5, 10^{-5})$-DP using Poisson sampling rate $q=0.01$ and composed over 100 rounds. Given a desired $(\epsilon,\delta)$-DP guarantee, a practitioner that wishes to minimize the probability of outliers in any of their computations could generate such a plot and then pick the choice $\beta$ which provides a minimum weight, thus minimizing the likelihood of outliers.

\begin{figure}[H]
    \centering
    \subfigure{\includegraphics[width=1.0\textwidth]{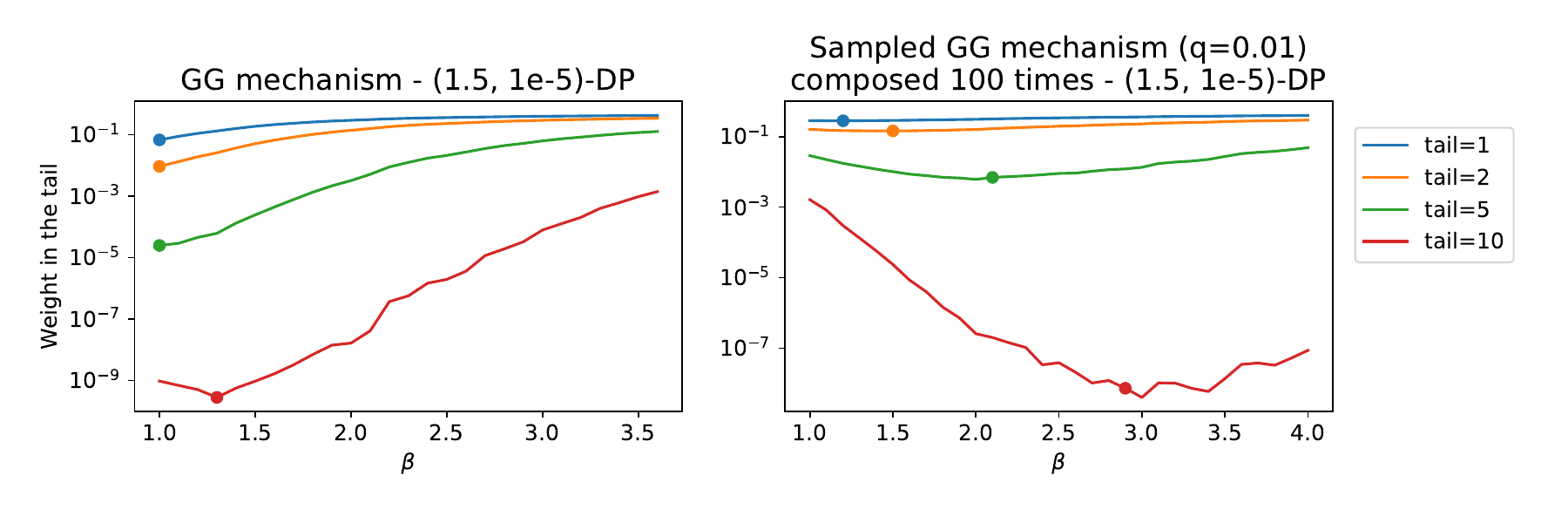}
    }
    \caption{Likelihood of outliers for the GG Mechanism  (left) and SGG Mechanism with Poisson sampling probability $q=0.01$ composed 100 rounds (right), both that satisfy $(1.5, 10^{-5})$-DP . In the legend, \emph{tail} refers to the $\tau$ tail cutoff parameter. The jaggedness in the plotting comes from the contributions of sampling error methods and integration error for very small weights; values are slightly smoothed with a Savitzky-Golay filter with polynomial order 2 and window size 5.
    }\label{fig:optimal_mechanism} 
\end{figure}

While for the single-shot GG mechanism (left), the Laplace mechanism appears optimal (i.e., has the least weight in the tail), \Cref{fig:optimal_mechanism} shows that that there are regimes where the tails of \Laplace~and \Gaussian~are heavier (i.e., outliers are more likely) than other equivalently private mechanisms in the GG family. This shows it is possible for that there to exist values of $\beta \notin \{1,2\}$ for which the associated GG Mechanism outperforms Laplace and Gaussian mechanism on this particular metric. This observation provides a potential direction for new methods to evaluate and search for alternative DP mechanisms: minimizing outliers was one of the main considerations cited by both the US Census \citep{census_TDA} and researchers behind PATE \citep{pate_2018} when deciding to use the \Gaussian~mechanism instead of the \Laplace~mechanism. Of course, the likelihood of outliers should be integrated with other organizational objectives and constraints, since in general, an algorithm designer's goal is not simply to minimize outliers, but rather to minimize some loss function (e.g., maximize accuracy) that is impacted by outliers

\section{Omitted Privacy Proofs}

\subsection{Bounded \Renyi Divergence of the Generalized Gaussian Mechanism} \label{appendix:rdp_of_gg_is_bounded} 

Here, we prove that the Renyi divergence between two Generalized Gaussian random variables is bounded, which is used in the proof of Theorem~\ref{thm.ggprivacy}. The two distributions in the statement of Lemma~\ref{lemma:renyi_div_is_bounded} correspond to the maximally different output distributions of the GG Mechanism on neighboring databases, where $\mu>0$ is the sensitivity of the input function. Note that this bound would be unchanged if we shifted the means of both distributions by the same amount, so we do not need to consider the values of the function, only the difference in the values.

\begin{lemma} \label{lemma:renyi_div_is_bounded}
    The Renyi Divergence $D_\alpha(\mathcal{N}_\beta(0,\sigma)|| \mathcal{N}_\beta(\mu,\sigma))$ is bounded for all $\alpha >1$, $\beta \geq 1$, $\sigma >0$, and $\mu > 0$.
\end{lemma}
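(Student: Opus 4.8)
The plan is to work directly with the integral representation of the R\'enyi divergence and reduce boundedness to a one-line tail estimate. Write $C = \frac{\beta}{2\sigma^{1/\beta}\Gamma(1/\beta)}$ for the normalizing constant shared by $\mathcal{N}_\beta(0,\sigma)$ and $\mathcal{N}_\beta(\mu,\sigma)$ (both have the same $\sigma$ and $\beta$), and let $p(x) = Ce^{-|x|^\beta/\sigma}$ and $q(x) = Ce^{-|x-\mu|^\beta/\sigma}$. I would use the identity
\[
D_\alpha\big(\mathcal{N}_\beta(0,\sigma)\,\|\,\mathcal{N}_\beta(\mu,\sigma)\big) = \frac{1}{\alpha-1}\log \int_{-\infty}^{\infty} p(x)^\alpha q(x)^{1-\alpha}\,dx.
\]
Since $\alpha > 1$ is fixed and the integrand is strictly positive, the entire claim reduces to showing that $I := \int p(x)^\alpha q(x)^{1-\alpha}\,dx$ is finite. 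Substituting the densities gives $p(x)^\alpha q(x)^{1-\alpha} = C\exp(-g(x)/\sigma)$ with $g(x) = \alpha|x|^\beta - (\alpha-1)|x-\mu|^\beta$, so it suffices to prove $\int \exp(-g(x)/\sigma)\,dx < \infty$.

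The key step is a lower bound of the form $g(x) \geq c|x|^\beta - c'$ with $c > 0$. The only obstacle is the negative term $-(\alpha-1)|x-\mu|^\beta$, which I would control by exploiting the exact leading-order cancellation between $|x|^\beta$ and $|x-\mu|^\beta$. Concretely, from $(1+\mu/|x|)^\beta \to 1$ as $|x|\to\infty$ together with $|x-\mu| \leq |x|+\mu$, for every $\epsilon > 0$ there is a constant $C_\epsilon$ (depending only on $\beta,\mu,\epsilon$) with $|x-\mu|^\beta \leq (1+\epsilon)|x|^\beta + C_\epsilon$ for all $x$. Choosing $\epsilon < 1/(\alpha-1)$, which is possible since $\alpha > 1$, yields
\[
g(x) \geq \alpha|x|^\beta - (\alpha-1)\big[(1+\epsilon)|x|^\beta + C_\epsilon\big] = \big(1-(\alpha-1)\epsilon\big)|x|^\beta - (\alpha-1)C_\epsilon,
\]
so I may take $c = 1-(\alpha-1)\epsilon > 0$ and $c' = (\alpha-1)C_\epsilon$.

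With this bound in hand, the conclusion is immediate:
\[
I = C\int \exp(-g(x)/\sigma)\,dx \leq Ce^{c'/\sigma}\int \exp(-c|x|^\beta/\sigma)\,dx < \infty,
\]
because $\int \exp(-c|x|^\beta/\sigma)\,dx$ is, up to a positive constant, the normalizing integral of a Generalized Gaussian with shape $\beta \geq 1$ and scale $\sigma/c > 0$, which is finite by Definition~\ref{def.gengauss}. The main subtlety to flag is that the crude inequality $(a+b)^\beta \leq 2^{\beta-1}(a^\beta+b^\beta)$ is too lossy here, as it drives the coefficient of $|x|^\beta$ negative for large $\beta$; the argument genuinely needs the sharp asymptotic $|x-\mu|^\beta = |x|^\beta(1+o(1))$ so that the leading terms cancel down to the strictly positive coefficient $1-(\alpha-1)\epsilon$. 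Everything else is harmless, since the integrand is continuous and hence bounded on compact sets, so only the tails (where the estimate above applies) matter for integrability, and the nonsmoothness of $|x|$ at the origin when $\beta=1$ plays no role.
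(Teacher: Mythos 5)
Your proof is correct, and while it starts from the same reduction as the paper---writing $D_\alpha = \frac{1}{\alpha-1}\log\int p^\alpha q^{1-\alpha}\,dx$ and analyzing the exponent $-\alpha|x|^\beta + (\alpha-1)|x-\mu|^\beta$---the way you establish integrability is genuinely different. The paper splits the line into the three intervals $x<0$, $0\le x\le\mu$, $x>\mu$, and on each unbounded piece argues via the sign of the derivative that the exponent is negative and decreasing, concluding convergence; as written, that conclusion leans on negativity of the exponent, and one really needs the implicit quantitative decay from the derivative bound (e.g.\ $f_3'(x)\le -\beta x^{\beta-1}/\sigma^\beta$ for $x>\mu$) for the tail integrals to converge. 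Your route replaces that case analysis with the single uniform estimate $|x-\mu|^\beta \le (1+\epsilon)|x|^\beta + C_\epsilon$ with $\epsilon<1/(\alpha-1)$, which dominates the whole integrand by a constant multiple of a Generalized Gaussian density $e^{-c|x|^\beta/\sigma}$ with $c=1-(\alpha-1)\epsilon>0$. That is airtight and arguably cleaner: it avoids the piecewise bookkeeping, makes the dependence on $\alpha$ explicit (the bound degrades as $\alpha\to\infty$, as it must), and yields an explicit upper bound on the divergence, namely $\frac{1}{\alpha-1}\bigl(\tfrac{c'}{\sigma} - \tfrac{1}{\beta}\log c\bigr)$ after evaluating the dominating integral, rather than a pure existence statement. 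Your remark that the crude bound $(a+b)^\beta\le 2^{\beta-1}(a^\beta+b^\beta)$ is too lossy is also correct and worth keeping, since it explains why the leading-order cancellation is essential. What the paper's derivative argument buys in exchange is that it needs no limiting argument at infinity and exhibits exactly where the exponent changes sign; but for the purpose of this lemma your domination argument is the more economical one.
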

\begin{proof}
    Fix $\beta\ge1$, $\sigma>0$, and $\mu>0$, and consider the PDFs of generalized Gaussian random variables, $\mathcal{N}_\beta(0,\sigma)$ and $\mathcal{N}_\beta(\mu,\sigma)$. These PDFs are:
    \[
    p(x)=\frac{1}{Z}\exp\Bigl(-\Bigl|\frac{x}{\sigma}\Bigr|^\beta\Bigr) \text{ and }
    q(x)=\frac{1}{Z}\exp\Bigl(-\Bigl|\frac{x-\mu}{\sigma}\Bigr|^\beta\Bigr), \] 
    with normalization constant $Z=2\sigma\,\Gamma\Bigl(1+\frac{1}{\beta}\Bigr)$.
    The Rényi divergence of order $\alpha>1$ is defined as,
    \begin{equation}\label{eq.reynidiv}
    D_\alpha(p\|q)=\frac{1}{\alpha-1}\log\int_{-\infty}^{\infty}\frac{p(x)^\alpha}{q(x)^{\alpha-1}}\,dx.
     \end{equation}
    Substituting the densities into the integrand gives:
    \begin{align*}
    \frac{p(x)^\alpha}{q(x)^{\alpha-1}}
    &=\frac{1}{Z^\alpha}\exp\Bigl(-\alpha\Bigl|\frac{x}{\sigma}\Bigr|^\beta\Bigr)
    \cdot Z^{\alpha-1}\exp\Bigl((\alpha-1)\Bigl|\frac{x-\mu}{\sigma}\Bigr|^\beta\Bigr)\\
    &=\frac{1}{Z}\exp\Bigl(-\alpha\Bigl|\frac{x}{\sigma}\Bigr|^\beta+(\alpha-1)\Bigl|\frac{x-\mu}{\sigma}\Bigr|^\beta\Bigr).
    \end{align*}
    
    Define $f: \mathbb{R} \to \mathbb{R}$ as,
    $$f(x) = \exp\left(-\alpha\left|\frac{x}{\sigma}\right|^\beta+(\alpha-1)\left|\frac{x-\mu}{\sigma}\right|^\beta\right).$$
    It is sufficient to show that, barring a finite range of real numbers, the exponent in $f$ is negative, which would make the integral converge. For that, we consider three intervals: $x < 0$, $0 \leq x \leq \mu$, and $\mu < x$, and define three functions $f_1$, $f_2$, and $f_3$ on these intervals, respectively, as follows.
    \begin{align*}
        f_1(x) &\coloneqq -\alpha\left(-\frac{x}{\sigma}\right)^{\beta} + (\alpha-1)\left(\frac{\mu-x}{\sigma}\right)^{\beta}\\
        f_2(x) &\coloneqq -\alpha\left(\frac{x}{\sigma}\right)^{\beta} + (\alpha-1)\left(\frac{\mu-x}{\sigma}\right)^{\beta}\\
        f_3(x) &\coloneqq -\alpha\left(\frac{x}{\sigma}\right)^{\beta} + (\alpha-1)\left(\frac{x-\mu}{\sigma}\right)^{\beta}
    \end{align*}
    Note that these functions describe the behavior of the exponent of $f$ in their respective domains.

    We start by showing that $f_3$ is always negative in the range $(\mu,\infty)$. Note that the right derivative of $f_3$ equals that of its left derivative at $x=\mu$ because it is a polynomial function and the limits on both the sides will be equal. We compute the derivative of $f_3$ as follows:
    $$f'_3(x) = -\frac{\alpha\beta x^{\beta-1}}{\sigma^{\beta}} + \frac{(\alpha-1)\beta(x-\mu)^{\beta-1}}{\sigma^{\beta}}.$$
    For any $t>0$, $x^t$ is an increasing function, but on $t=0$, $x^t=1$ is a constant function. Therefore, for $\beta=1$,
    $$f'_3(x) = -\frac{\alpha\beta}{\sigma^{\beta}} + \frac{(\alpha-1)\beta}{\sigma^{\beta}} = -\frac{\beta}{\sigma^{\beta}} < 0.$$
    For $\beta>1$,
    \begin{align*}
        f'_3(x) = -\frac{\alpha\beta x^{\beta-1}}{\sigma^{\beta}} + \frac{(\alpha-1)\beta(x-\mu)^{\beta-1}}{\sigma^{\beta}}
        = \frac{\beta}{\sigma^{\beta}}\left[-\alpha x^{-\beta-1} + (\alpha-1)(x-\mu)^{\beta-1}\right].
    \end{align*}
    However, since $x>\mu$, $x^{\beta-1} > (x-\mu)^{\beta-1}$ and $\alpha > 1$, $f'_3$ must always be negative whenever $x>\mu$. Also, $f_3(\mu) = -\alpha\left(\tfrac{\mu}{\sigma}\right)^{\beta} < 0$, which implies that $f_3(x) < 0$ whenever $x > \mu$.

    We next consider the range $0 \leq x \leq \mu$. Not that this is a finite interval and that $f_2(x)$ will be finite on all $x$ in this domain because $f_2$ is a polynomial function where the rational terms are well-defined in the domain.

    We finally consider the third range $x < 0$ and the function $f_1$. Substitute $y=-x$ and define $g_1(y) = \alpha\left(\tfrac{y}{\sigma}\right)^{\beta} + (\alpha-1)\left(\tfrac{y+\mu}{\sigma}\right)^{\beta}$ for $y>0$. In this, we have simply changed the variables and the domains so that $f_1(x) = g_1(-x)$ for all $x < 0$. Now, the argument for this case is the same as in that for $f_3$, while noting that the term $\left(\tfrac{y}{\sigma}\right)^{\beta}$ is now shifted to the left as $\left(\tfrac{y+\mu}{\sigma}\right)^{\beta}$. Therefore, $g_1(y) < 0$ for $y>0$, which implies that $f_1(x) < 0$ for $x < 0$.

    Finally, we can split the integral for the Renyi divergence into three intervals $x < 0$, $0 \leq x \leq \mu$, and $\mu < x$. We have shown that in the middle interval, the integrand is bounded, therefore, the integral will be bounded in this finite range. On the other hand, we have shown that in the other two intervals, the exponent in $f$ is always negative, which means that the integral in both cases will converge to finite values. This proves the lemma.
\end{proof}

\subsection{DP Guarantees of Generalized Gaussian Mechanism}  \label{appendix:gg_is_private}

\ggprivacy*
\begin{proof}
    By \Cref{lemma:renyi_div_is_bounded} we know that the \Renyi divergence of the GG Mechanism on any two neighboring databases $D_\alpha(\mathcal{N}_\beta(0,\sigma)|| \mathcal{N}_\beta(\Delta f,\sigma))$ is bounded. By the definition of \Renyi Differential Privacy \citep{RDP_2017} (\Cref{def:RDP-calculation}), then the GG Mechanism must satisfy $(\alpha, \epsilon)$-RDP for the finite $\epsilon$ corresponding to the upper bound on $D_\alpha(\mathcal{N}_\beta(0,\sigma)|| \mathcal{N}_\beta(\Delta f,\sigma))$. Then by the known translation between RDP and DP \citep{RDP_2017}, the GG Mechanism also satisfies $(\epsilon + \frac{\log (1/ \delta)}{\alpha -1}, \delta)$-DP for the same $\epsilon$ and for any $\delta \in (0,1)$.   
\end{proof}

\subsection{DP Guarantees of GGNmax}\label{appendix:private_argmax_is_private} \label{sec:ggnmax_priv_proof}

\ggnmaxIsPrivate* 

The proof of Theorem \ref{thm.ggargmaxpriv} follows closely to the proof of privacy of the ReportNoisyMax algorithm with Laplace noise, as presented in \citet{privacy_book}. It is included here for completeness.

\begin{proof} 
    Fix neighboring databases $D,D'$, where $D = D' \cup \{a\}$. Let $c$ and $c'$ respectively denote the vector of function values (i.e., counts) when the database is $D$ and $D'$. We use two properties:
    \begin{enumerate}
        \item \textit{Monotonicity of counts}. For all $j \in [N]$, $c_j \geq c'_j$
        \item \textit{Lipschitz property}. For all $j \in [N], 1 + c'_j \geq c_j$
    \end{enumerate}

    Fix any $i \in [N]$. We will bound from above and below ratio of the probabilities that $i$ is selected with $D$ and with $D'$. Fix $r_{-i}$, a draw from $[\mathcal{N}_{\beta}(0, \sigma)]^{N-1}$ used for all the noisy function evaluation except for the $i$th function. We use the notation $Pr[i| \zeta]$ to mean the probability that the output of the GGNmax algorithm is $i$ conditioned on $\zeta$.

    We first argue that $\Pr[i|D, r_{-i}] \leq e^{\epsilon} \Pr[i|D', r_{-i}] + \delta$. Define 
    \[
        r^* = \min_{r_i}: c_i + r_i > c_j + r_j \; \; \forall j \neq i.
    \]
    Note that having fixed $r_{-i}$, $i$ is will be the output of the GGNmax mechanism (i.e., the $i$th function will have the largest noisy count) when the database is $D$ if and only if $r_i\geq r^*$. Then for all $1 \leq j \neq i \leq N$: 
    \begin{align*}
            c_i + r^* &> c_j + r_j \\
            \Rightarrow (1 +c'_i) + r^* \geq c_i + r^* &> c_j + r_j \geq c'_j + r_j \\
            \Rightarrow c'_i + (r^* + 1) &> c'_j + r_j    
    \end{align*}

    Thus if $r_i \geq r^* + 1$, then $i$ will be the output of GGNmax, meaning that the $i$th function had the largest noisy count, when the database is $D'$ and the noise vector is $(r_i, r_{-i})$. We now wish to relate the probability of $\Pr[r_i \geq 1 + r^*]$ to the probability $ \Pr[r_i \geq r^*]$.  We note that this is the step where the proof diverges from the privacy proof of Report Noisy Max, as given in \cite{privacy_book}.

  Let $Z \sim \mathcal{N}_{\beta}(0, \sigma)$. We are given that $GG_{\beta, \sigma}(f,D)$ satisfies $(\epsilon, \delta)$-DP, so,
    \begin{align*}
       \Pr[Z \geq r^*] &\leq e^{\epsilon} \Pr[Z \geq r^* + 1] + \delta \\ 
        \Rightarrow  \Pr[i| D, x_{-i}] = \Pr[r_i \geq r^*] &\leq e^{\epsilon} \Pr[r_i \geq r^* + 1] + \delta \leq e^\epsilon \Pr[i|D', x_{-i}] + \delta.
    \end{align*}
    
    We now argue that $Pr[i|D'] \leq e^{\epsilon} \Pr[i|D] + \delta$. Define, again,
    \[
        r^* = \min_{r_i}: c'_i + r_i > c'_j + r_j \; \; \forall j \neq i.
    \]

    Note that having fixed $r_{-i}$, $i$ is will be the output of the GGNmax when the database is $D'$ if and only if $r_i\geq r^*$. For all $1 \leq j \neq i \leq N$,
    \begin{align*}
            c'_i + r^* &> c'_j + r_j \\
            \Rightarrow 1 + c'_i + r^* &> 1 +  c'_j + r_j \\
            \Rightarrow c'_i + (1 + r^*) &> (1 + c'_j) + r_j \\
            \Rightarrow c_i + (1+ r^*) \geq c'_i + (r^* + 1) &> (1 + c'_j) + r_j \geq c_j + r_j 
     \end{align*}
     
    Thus, if $r_i \geq r^* + 1$, then $i$ will be the output of GGNmax on database $D$ with randomness $(r_i, r_{-i})$. We again wish to relate the probability of $\Pr[r_i \geq 1+ r^*]$ to the probability $ \Pr[r_i \geq r^*]$, where $Z \sim \mathcal{N}_{\beta}(0, \sigma)$. Again using the fact that $GG_{\beta, \sigma}(f,D)$ satisfies $(\epsilon, \delta)$-DP, 
     \begin{align*}
        \Pr[Z \geq r^*] &\leq e^{\epsilon} \Pr[Z \geq r^* + 1 ] + \delta \\ 
        \Rightarrow  \Pr[i| D', x_{-i}] = \Pr[r_i \geq r^*] & \leq e^{\epsilon} \Pr[r_i \geq r^* + 1] + \delta \leq e^\epsilon \Pr[i|D, x_{-i}] + \delta.
    \end{align*}

    Thus the GGNmax mechanism satisfies the same $(\epsilon,\delta)$-DP guarantee as the 1-dimensional $GG_{\beta,\sigma}(f,D)$ mechanism with sensitivity $\Delta=1$ for the count functions. 
\end{proof}

\section{Additional Details and Results for Private Argmax and PATE}\label{app.argmas} \label{appendix:argmax_and_pate}

PATE (Private Aggregation of Teacher Ensembles) is an algorithm to train a private machine learning model. In the first step, the private dataset is partitioned into $T$ datasets, such that a single user's data is only in single partition. A ``teacher'' model is trained for each partition. Then, the teacher models are collected to privately vote on how to label an unlabeled, public dataset, usually through an algorithm based off of the Report-Noisy-Max algorithm. A ``student'' model is trained on the privately labeled dataset. We present a high-level overview of the algorithm in \Cref{fig:pate_diagram}.

\begin{figure}[tbh]
    \centering
    \subfigure{\includegraphics[width=0.95\textwidth]{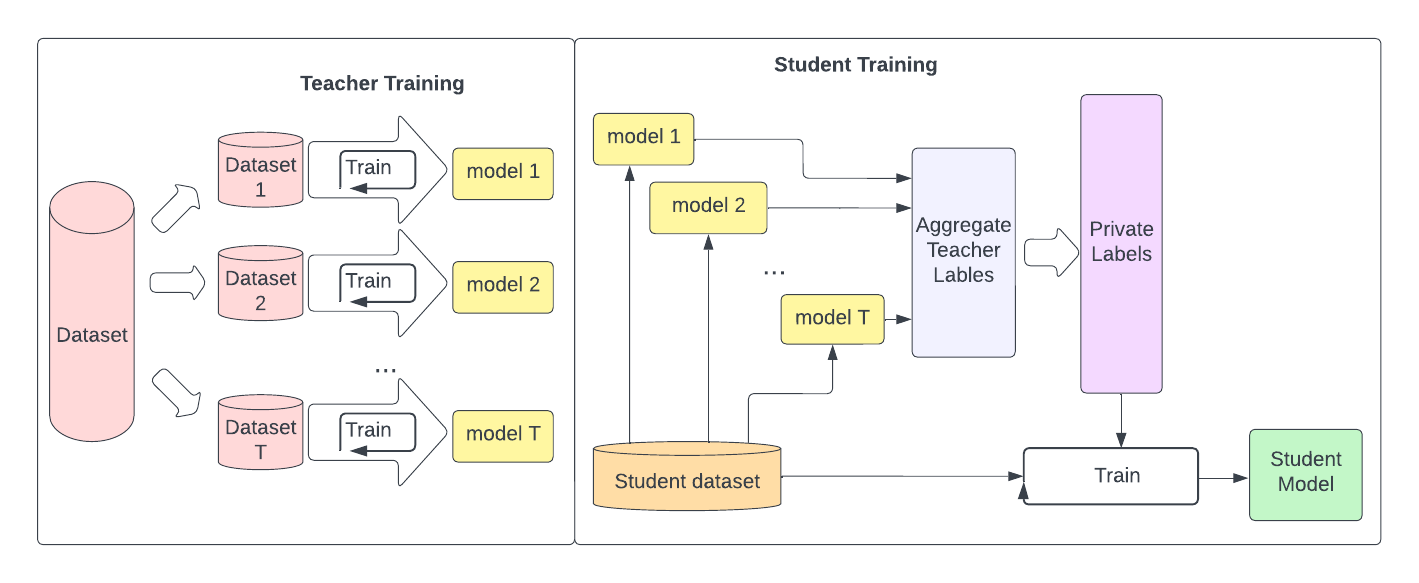}}
    \caption{\small{Diagram of PATE implementation \citep{pate_2017}}} 
    \label{fig:pate_diagram}
\end{figure}

In the main body of the paper we primarily explore the idea of how the choice of $\beta$ affects the label accuracy of PATE; in order to explore this more fully, we propose a new measure for accuracy for the private \Argmax\ problem, which is better suited to the goals of private ML --- measuring the probability of returning the true \Argmax, rather than returning an outcome with score similar to the true \Argmax. For this utility measure, we empirically find that $\beta=2$ (\Gaussian) is again near-optimal. 

\subsection{Simulations for Ensemble-Based Private Vote Aggregation} \label{sec:argmax_simulations}

While PATE is one of the primary motivations for the GGNMax mechanism, taking a private \Argmax\ is a very general problem and is particularly important for algorithms which attempt to reconcile beliefs (or votes) across many parties. Classical work in differential privacy on private \Argmax\ considers that for a vector of values, the utility of the mechanism is a function of the probability that the mechanism returns any index that has a value associated with, \textit{close} to the maximum value. However, in a task like classification in ML, there is only a single label that is the ``correct'' label, and thus we argue that for a task like classification in ML, a mechanism should be evaluated on how often it returns the label that would have been assigned without noise.

Building upon this intuition, we define the \emph{Hardmax Utility} of an \Argmax\ mechanism $\mathcal{M}$ on functions $\{f_i\}$ over a distribution $\mathcal{P}$ of databases as:
\[
    \textrm{Hardmax-Utility}_{\mathcal{P}}(\mathcal{M}, \{ f_i\}) := \Pr_{D \sim \mathcal{P}}[\mathcal{M}(D, \{f_i\}) = \textstyle\arg\max_i(f_i(D)) ].
\]

With this utility measure in mind, we wish to measure the impact of $\beta$ on the Hardmax utility of GG Private \Argmax\ algorithm. Given a vector of function values $\{f_i(D)\}$, noise addition can only change the \Argmax\ if the noise added is larger than the existing gap between the highest function value and all other values. We refer to the difference between the largest and second largest value in $\{f_i(D)\}$, as the \emph{runner-up-gap}. 

To empirically evaluate the impact of $\beta$ on the Hardmax utility, we construct a set of random histograms with varying running-up-gaps; by varying the runner-up-gap, we vary the extend to which the outcome of the mechanism can be changed by outliers in the noise terms.

For our simulations, we construct 500 histograms of votes as follows. We fix the total number of votes (simulating votes from teacher models) as $V=1000$ and fix the largest number of votes for class 0 as $x_0 = v$. For a runner-up parameter $r\in[.001, 0.2]$, we fix the largest number of votes for class 0 at $x_0 = v$, and the second-largest number of votes for class 1 at $x_1 = v (1-r)$, corresponding to a runner-up-gap of $vr = 100r$. For 2-class histograms, fixing $V$ and $r$ fully specifies the (deterministic) histogram. 
For $N$-class histograms, votes for the first three classes $x_0, x_1, x_2$ are set so $x_1 = x_0 \cdot (1-r)$ (to ensure a runner-up-gap of $r$), $x_2 = x_1\cdot 0.95$, and $\frac{(N-3)*x_2}{2} + x_1+x_0 = V$. The next $N-4$ classes have votes uniformly sampled on the integers between $0$ and $x_2$, inclusive, and the final class has the remainder of votes required to reach $V$ votes in total: $x_{N-1} = V- \sum_{i=0}^{N-2} x_i$. If $x_{N-1}$ is negative or greater than $x_1$, the histogram is not used and random values are resampled. We then instantiate GGNmax on each database (histogram) with counting queries $f_i$ that count the number of votes for each class $i$.

For a given $(\epsilon, \delta)$-DP guarantee and for a range of values  $\beta_i \in [1,4]$, we compute $\sigma_i$ such that $(\beta_i, \sigma_i$)-Generalized Gaussian Private \Argmax\ satisfies $(\epsilon, \delta)$-DP (see Appendix \ref{sec:mechanisms_with_equal_privacy} for algorithmic details of this process). For each pair $(\beta_i, \sigma_i)$, we compute the Hardmax Utility of each mechanism by computing the likelihood of the $(\beta_i, \sigma_i)$-Generalized Gaussian Private Argmax algorithm returning the true argmax, averaged across 50 trials.

\begin{figure}[tbh]
    \centering
    \subfigure{\includegraphics[width=0.85\textwidth]{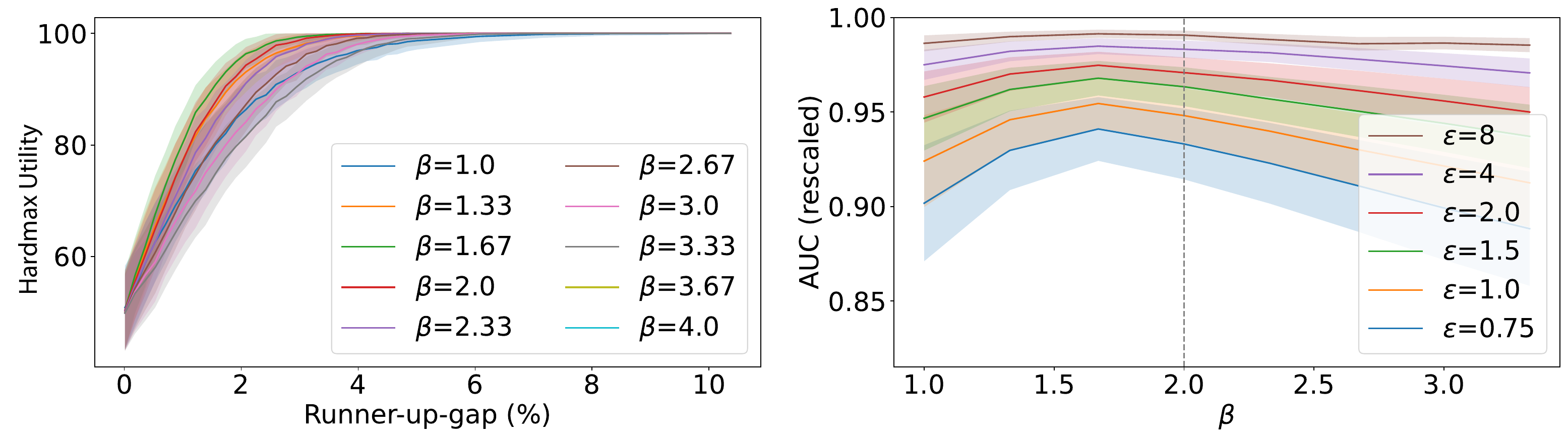}}
    \caption{\small{Hardmax Utility for 2-class histogram. Left: Hardmax Utility as a function of runner-up-gap, for mechanisms satisfying (1,$10^{-5}$)-DP. Right: Area-Under-the-Curve (AUC) of the curves on the left for different values of $\epsilon$. AUC is rescaled so that 1.0 is the maximum. 
    }
    }
    \label{fig:probability_of_argmax_AUC}
\end{figure}

\Cref{fig:probability_of_argmax_AUC} shows the Hardmax Utility of $(\beta_i, \sigma_i$)-Generalized Gaussian Private \Argmax\ for 2-class histograms
under varying $\beta$ values. The left shows the Hardmax utility as a function of the runner up gap for varying $\beta$ values. We observe that varying $\beta$ for a fixed runner-up-gap does not have a substantial impact on Hardmax utility, and that the optimal $\beta$ value typically does not dependent on the runner-up-gap. 

The right shows the Area-Under-the-Curve (AUC) of the Hardmax utility curves (similar to those on the left) as a function of $\beta$ for different values of $\epsilon$. Higher AUC means a better overall accuracy. The AUC is integrated over a range of runner-up-gap values from 0\% to 10\%, and AUC is rescaled such that 1.0 is the maximum. We observe that empirically, values of $\beta$ close to $\beta=2$ have better Hardmax-Utility than ones further away, regardless of the $(\epsilon, \delta)$-DP parameters. However, we do note that while $\beta =2$ is near-optimal, $\beta$ values slightly smaller than 2 do give even better performance. A key takeaway is that \Gaussian~does outperform \Laplace, although there is room for further improvements over \Gaussian\ by fine-tuning the $\beta$ parameter.

\Cref{fig:probability_of_argmax_AUC_multiclass} extends these findings to a multi-class setting with 25 classes. On the left, we see that the choice of $\beta$ has a larger impact on Hardmax utility in the multi-class setting, but that the optimal choice $\beta$ is again independent of the runner-up-gap. On the right, we observe that the impact of $\beta$ is also more pronounced in this multi-class setting, and that the optimal $\beta$ value is much closer to 2 in this setting. 

We also observe that as $\beta$ grows, the AUC (right) becomes more jagged -- this is also true for Hardmax Utility (left), but is more difficult to observe. This is because for a fixed $\epsilon$, the sensitivity of $\beta$ on $\sigma$ increases as $\beta$ grows. Let $\sigma'(\beta, \epsilon, \delta)$ be the function that returns the minimum value of $\sigma$ that satisfies $(\epsilon,\delta)$-DP for a given choice of $\beta$; the value of $\sigma'$ increases in $\beta$ at an increasing rate for a fixed $(\epsilon, \delta)$ pair, as can be observed in Figure \ref{fig:epsilon_dp_as_a_function_of_beta}. Our experiments use evenly spaced values of $\sigma$ and $\beta$, so the gap between the truly optimal $\sigma$ and the best choice of $\sigma$ from an evenly discretized set, increases as a function of $\beta$.

\begin{figure}[h]
    \centering
\subfigure{\includegraphics[width=0.85\textwidth]{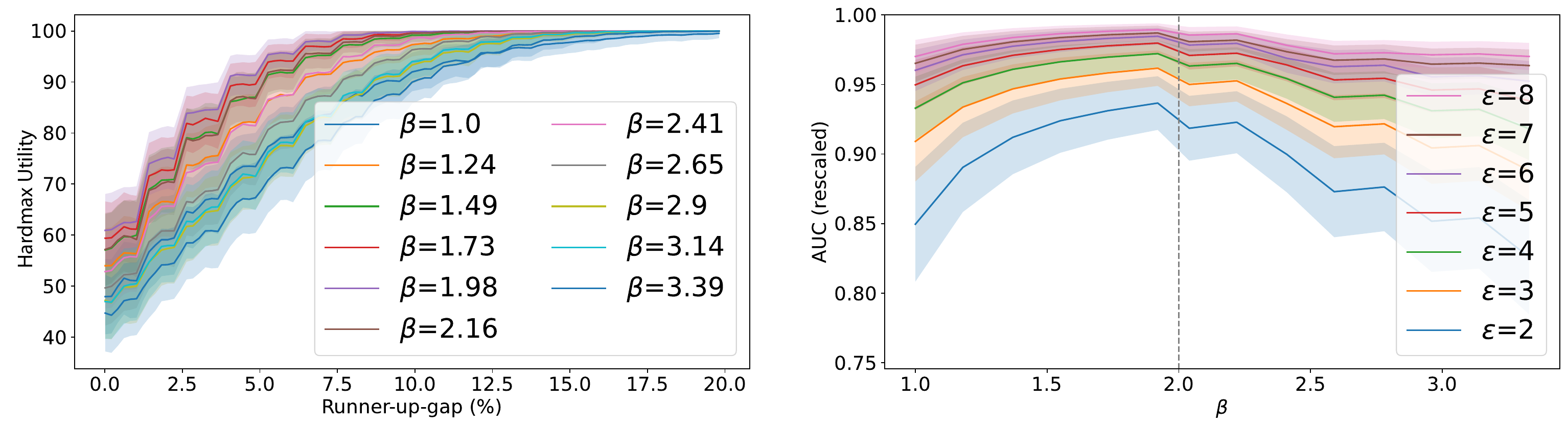}}
    \caption{\small{Hardmax Utility for 25-class histogram. Left: Hardmax Utility as a function of runner-up-gap, for mechanisms with equivalent (2,$10^{-5}$)-DP. Right: Area-Under-the-Curve (AUC) of the curves on the left for different values of $\epsilon$, where a higher AUC means a better overall accuracy. AUC is rescaled such that 1.0 is the maximum to normalize across different sizes of domain.}}
    \label{fig:probability_of_argmax_AUC_multiclass}
\end{figure}

\section{Additional $\beta$-DP-SGD Results and Implementation Details}\label{sec:dpsgd_results}

\subsection{Hyperparameters Used in Training} \label{appendix:hyperparameters_in_dpsgd}

\textbf{Hyperparameters} We run our $\beta$-DP-SGD algorithm for a maximum of 100 epochs for each parameter setting and sweep over the following parameters: $\beta \in \{1.0, 1.33, 1.66, 2.0\}$, learning rate $\eta\in\{0.5, 1.0\}$, clipping norm $C \in \{0.05, 0.1, 0.25, 0.5\}$, and noise multiplier $\sigma \in \{2.0, 2.5, 3.0\}$, for $\delta=10^{-6}$.  In practice, we run for $T=100$ epochs, then compute the privacy budget after each training epoch. We return the maximum utility the model achieved before the privacy budget was exceeded; we also early stop if the utility drops for 10 epochs consecutively. Each experiment is run 3 times, which we found sufficient given standard deviations that generally fell below 0.3\%.

\textbf{Datasets:} We train on CIFAR-10 \citep{cifar_10} and Street View House Numbers (SVHN) \citep{svhn}, two common computer vision datasets, which respectively contain 60,000 and 99,289 small, color images split across ten classes; the Adult dataset \citep{adult_dataset}, a tabular dataset with a binary classification task; and the IMDB dataset \citep{imdb_dataset}, a collection of movie reviews meant for binary sentiment classification.

\textbf{Models:} For the vision classification tasks (CIFAR-10 and SVHN), we use the models described in in \cite{handcrafted_dp}, which previously achieved SOTA results for the $\epsilon \leq \sim 2.5$ regime. Specifically, we train Convolutional Neural Networks (CNNs) on pretrained image features produced by scattering networks \cite{scatternet} as described in \citet{handcrafted_dp}; these are referred to as \emph{ScatterNet}. For the the Adult Dataset we train a 2-layer Fully Connected Network (FCN), with 32 neurons in the hidden layer. For the IMDB dataset, we train a Long-Short Term Memory (LSTM) network with 1,081,002 parameters, in order to demonstrate the method on a relatively medium-sized model from scratch.

\subsection{The Role of Individual Hyperparameters in $\beta$-\DPSGD} \label{app:hyperparams_in_dpsgd} \label{appendix:extra_dpsgd_results}

Below, we investigate the role of individual hyperparameters on the the final test-accuracy. We replicate the experiments of Section \ref{sec:dpsgd_experiments}, but holding fixed individual hyperparameters, and varying those fixed values to determine whether that hyperparameter substantially impacts test-accuracy. Specifically, we evaluate the impact of the learning rate (\ref{app.learningrate}), clipping norm (\ref{app.clipnorm}), and noise multiplier (\ref{app.noisemult}). The remaining parameters are optimized as described in Appendix \ref{appendix:hyperparameters_in_dpsgd}, and we see little effect in varying other values. We report the average maximum test-accuracy for each trial independently -- this means that for each plot and each fixed value of $\beta$, hyperparameters are evaluated independently, and the maximum average-test-accuracy is taken for each value of $\beta$.   

In general, we observe that while some hyperparameters may have some small effect, varying these hyperparameters do not have a substantial effect on test-accuracy. They also do not impact the general relationship between $\beta$ and test-accuracy: within the range $\beta \in [1,2]$ we found that, $\beta=2$ performs as well as or better than other values in that range for most parameter settings.

As in the presentation of results in Section \ref{sec:dpsgd_experiments}, some plots do not have test-accuracy values reported for specific choices of $\beta$ and $\epsilon$. This is because larger values of $\beta$ tend to consume more privacy per-step, and the starting $\epsilon$ required exceeded the $\epsilon$ budget for that curve.

\subsubsection{Learning Rate}\label{app.learningrate}

Here we explore the impact of varying the learning rate on the test-accuracy, as a function of $\beta$. Figures \ref{fig:dpsgd_results_scatter_separate__learning_rate} and \ref{fig:dpsgd_results_scatter_separate__learning_rate2} show test-accuracy results of $\beta$-\DPSGD\ on all four databases with respective learning rates 0.5 and 1.0. We observe that performance is relatively similar across these two figures, and thus conclude that learning rate does not have a strong impact.

    \begin{figure}[H]
        \centering
        \subfigure{\includegraphics[width=.8\textwidth]{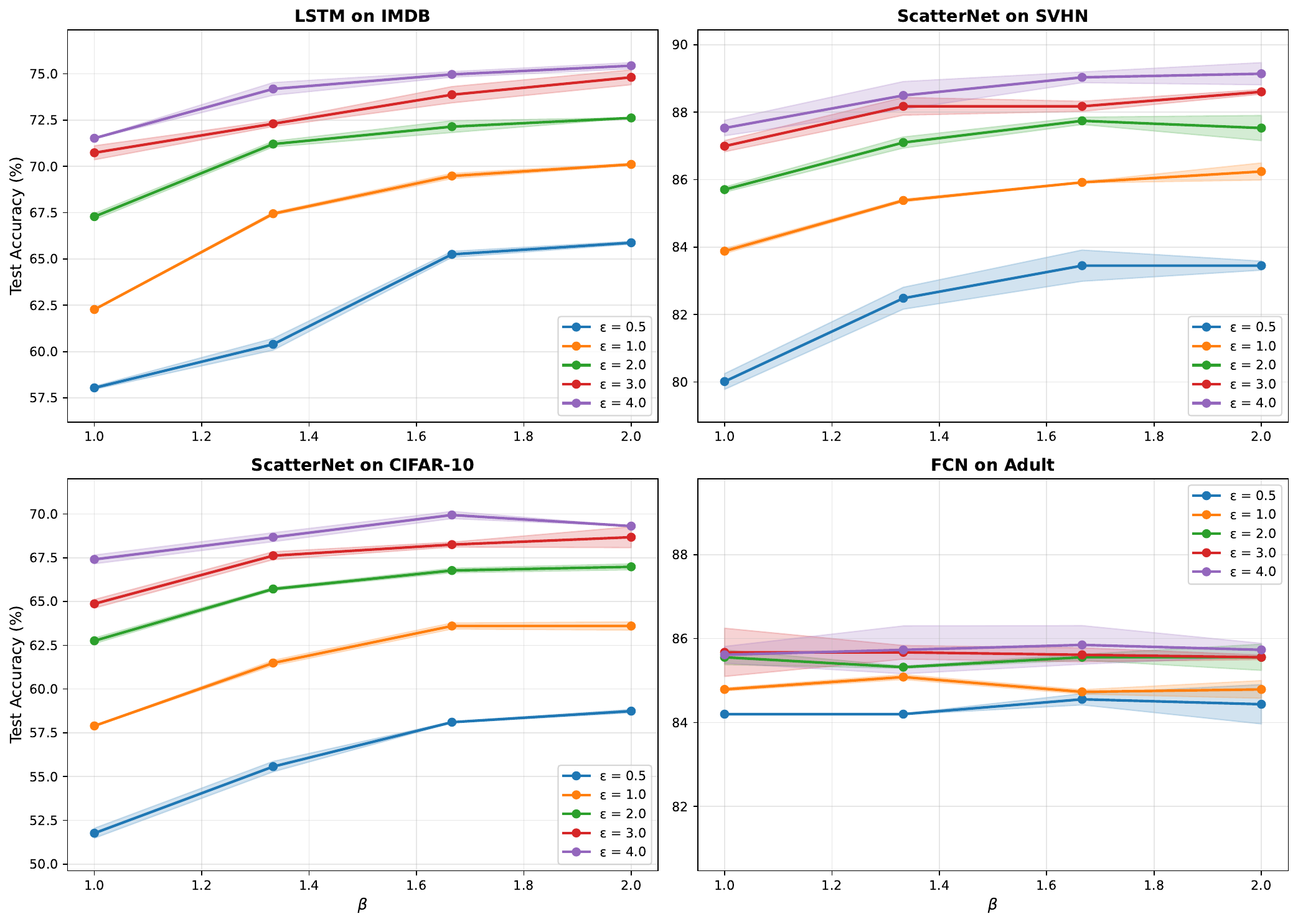}}
         \caption{\small{$\beta$-\DPSGD~results with learning rate 0.5}}
        \label{fig:dpsgd_results_scatter_separate__learning_rate}
    \end{figure}

        \begin{figure}[H]
        \centering
        \subfigure{\includegraphics[width=.8\textwidth]{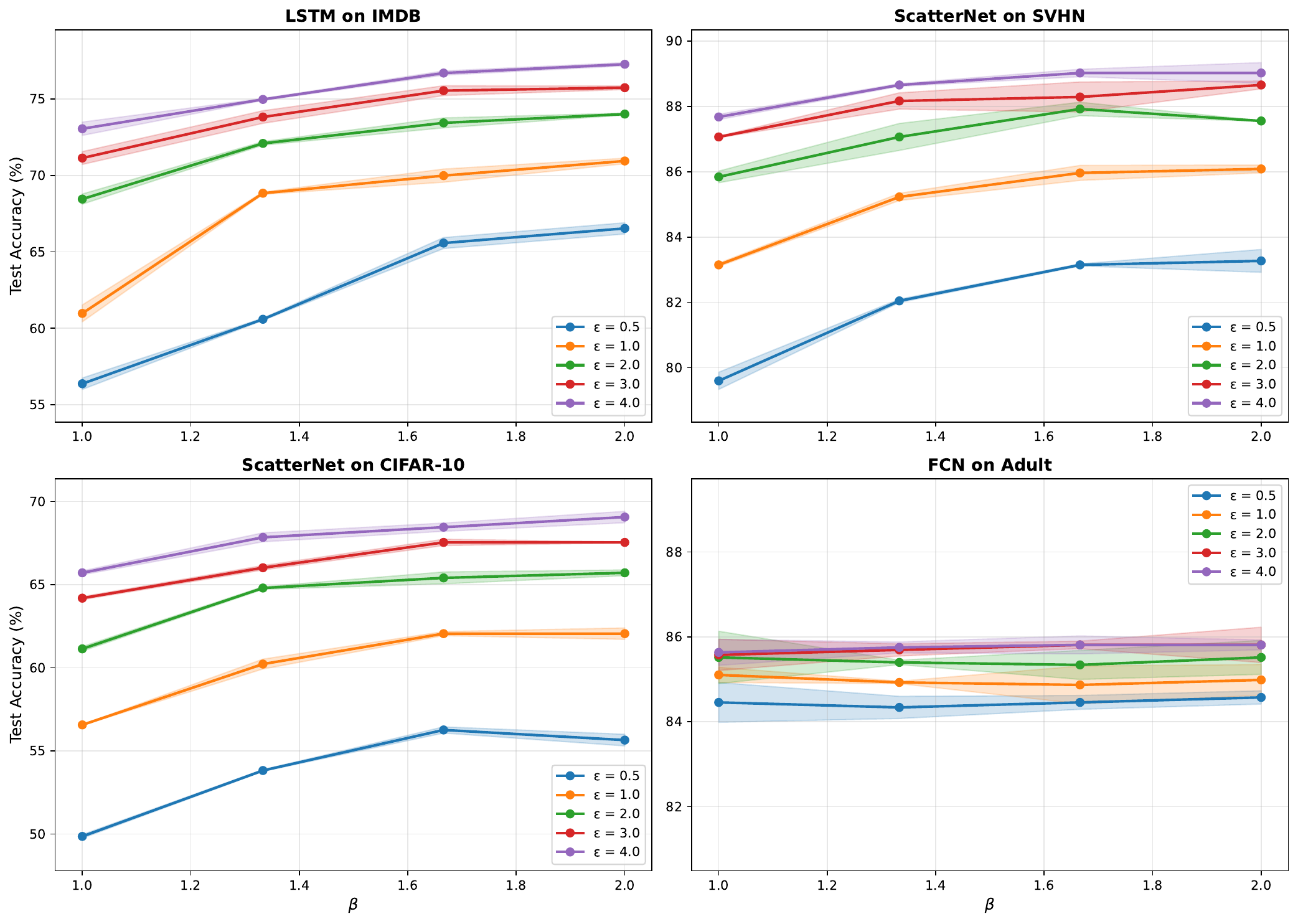}}
        \caption{\small{$\beta$-\DPSGD~results with learning rate 1.0}}
        \label{fig:dpsgd_results_scatter_separate__learning_rate2}
    \end{figure}

\subsubsection{Clipping Norm}\label{app.clipnorm}

Here we explore the impact of varying the clipping norm on the test-accuracy, as a function of $\beta$. Figures \ref{fig:dpsgd_results_scatter_separate__clipping}, \ref{fig:dpsgd_results_scatter_separate__clipping2}, and \ref{fig:dpsgd_results_scatter_separate__clipping3} show test-accuracy results of $\beta$-\DPSGD\ on all four databases with respective clipping norms of 0.05, 0.10, and 0.25. We observe that clipping norm appears to have a bigger effect on the relationship between test-accuracy and $\beta$ than learning rate, but still relatively minor and only at larger (sub-optimal) values of $\beta$.

    \begin{figure}[H]
        \centering
\subfigure{\includegraphics[width=.8\textwidth]{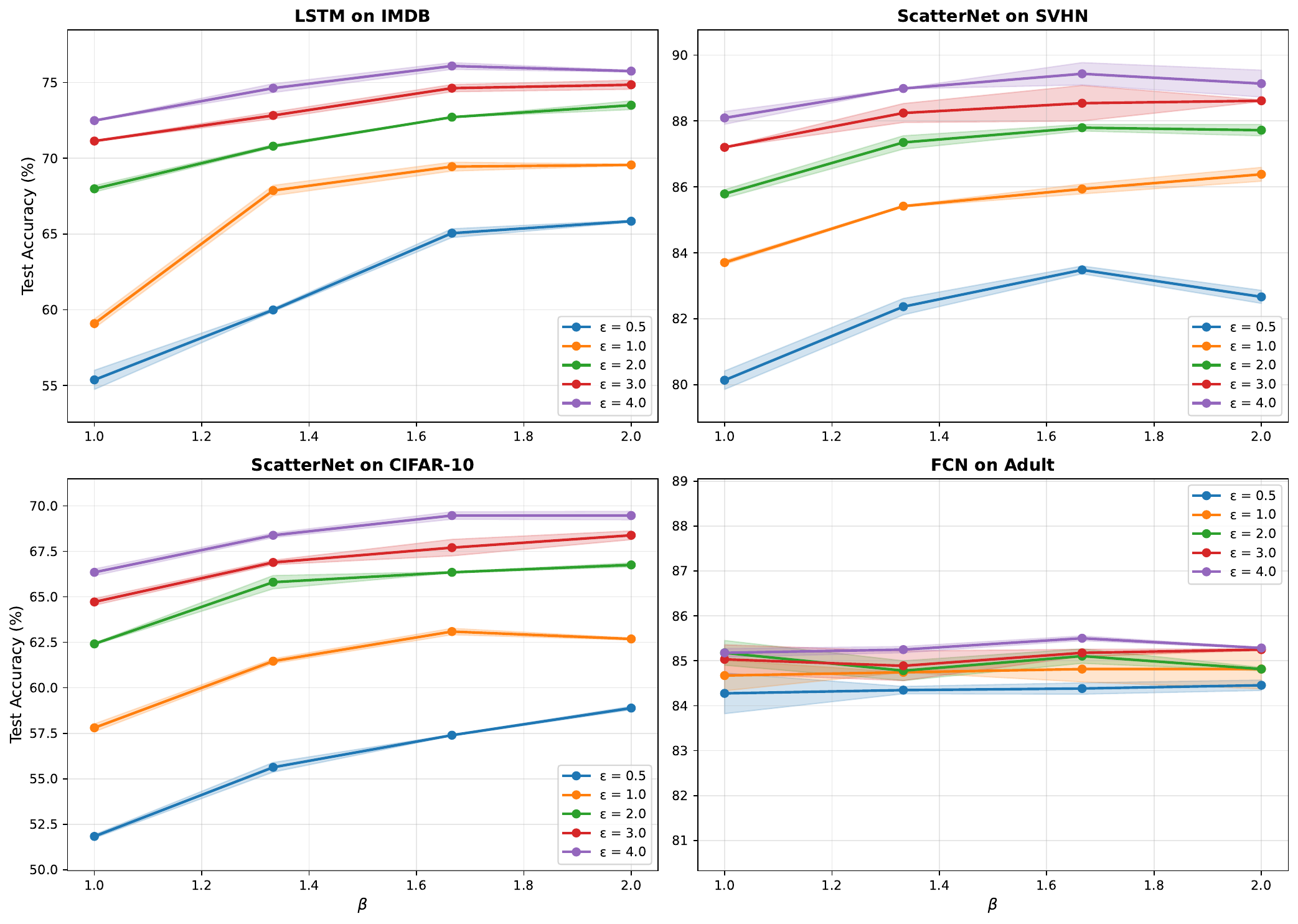}}
        \caption{\small{$\beta$-\DPSGD~results with clipping norm 0.05}}
        \label{fig:dpsgd_results_scatter_separate__clipping}
    \end{figure}

        \begin{figure}[H]
        \centering
\subfigure{\includegraphics[width=.8\textwidth]{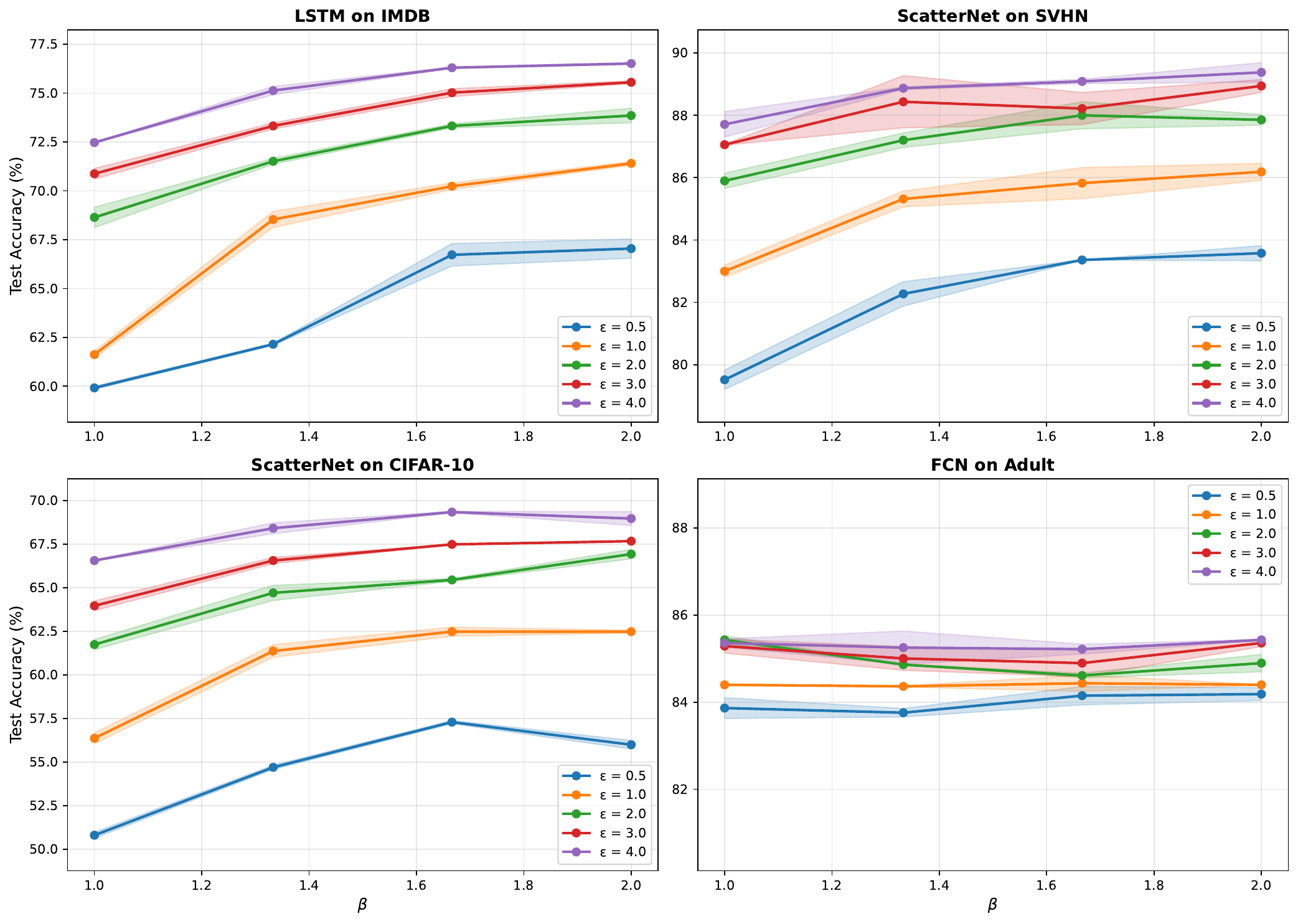}}
        \caption{\small{$\beta$-\DPSGD~results with clipping norm 0.1}}
        \label{fig:dpsgd_results_scatter_separate__clipping2}
    \end{figure}

        \begin{figure}[H]
        \centering
\subfigure{\includegraphics[width=.8\textwidth]{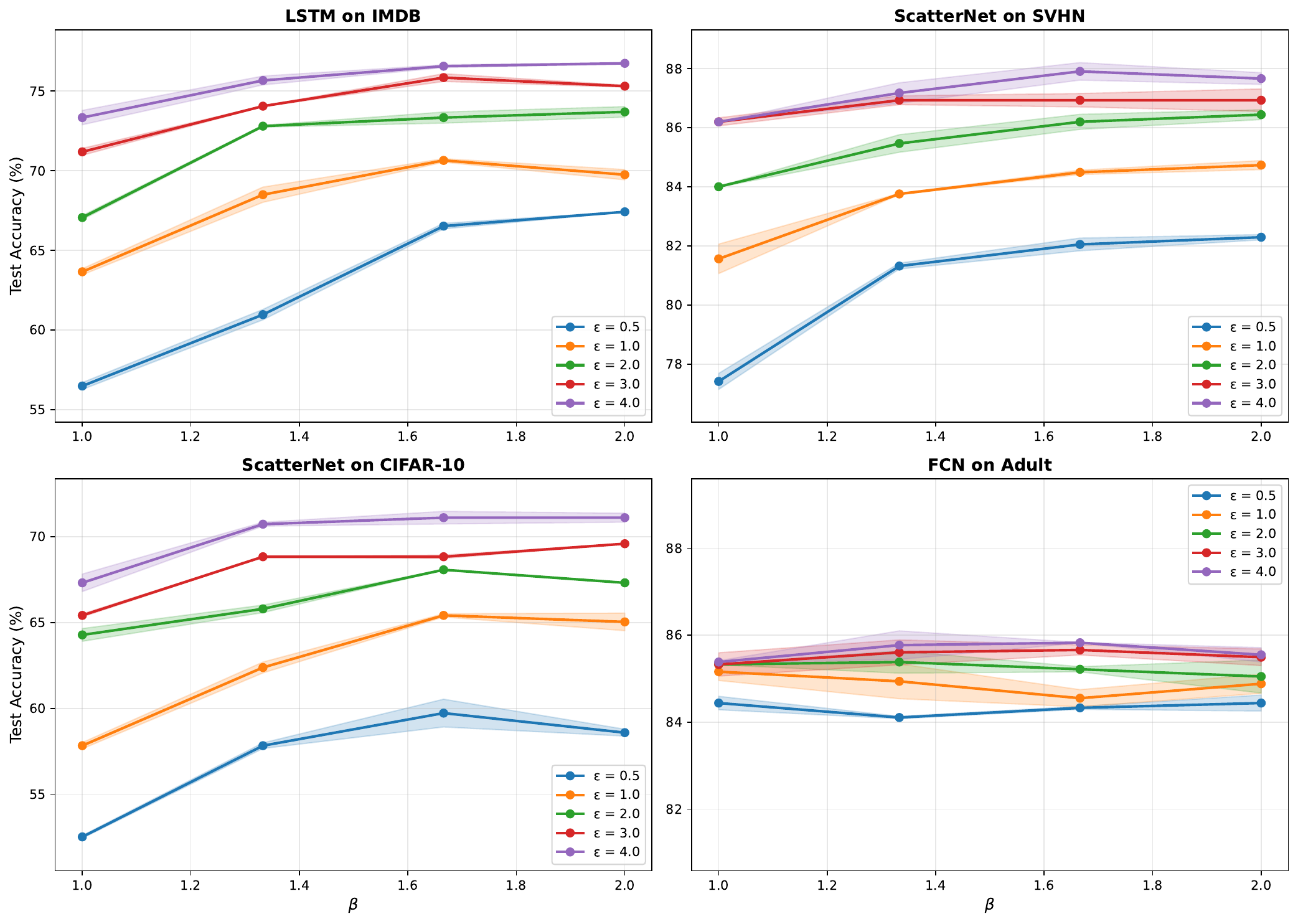}}
        \caption{\small{$\beta$-\DPSGD~results with clipping norm 0.25}}
        \label{fig:dpsgd_results_scatter_separate__clipping3}
    \end{figure}

\subsubsection{Noise Multiplier}\label{app.noisemult}

Here we explore the impact of varying the noise multiplier $\sigma$ on the test-accuracy, as a function of $\beta$. Figures \ref{fig:dpsgd_results_scatter_separate__noise_mult2}, \ref{fig:dpsgd_results_scatter_separate__noise_mult3}, and \ref{fig:dpsgd_results_scatter_separate__noise_mult4} show test-accuracy results of $\beta$-\DPSGD\ on all four databases with noise multipliers of 1.5, 2.0, 2.5, and 3.0. We observe that varying the noise multiplier has a relatively weak effect on accuracy, although for larger values of $\sigma$, the test accuracy curve (as a function of $\beta$) is flatter, with less dependence on $\beta$ -- particularly for ScatterNet and LSTM. 

As the noise multiplier increases, the privacy budget consumption per step decreases --- this means that for fixed $(\epsilon,\delta)$, the total number of training steps in training increases. For a fixed noise multiplier, increasing the value of $\beta$, increases the privacy cost for a single step --- this means that for a fixed $\sigma$, increasing $\beta$ decreases the total number of training steps. While existing work found that models attain higher accuracies when larger batch sizes are trained for more training epochs \cite{deepmind_jax_privacy}, we do not find any such evidence. Introducing the $\beta$ parameter appears to remove this effect.

Empirically, across all values of noise multipliers evaluated, $\beta=2$ performs as well as or better than other values in the range $\beta \in [1,2]$ on all datasets and $\epsilon$ values evaluated.

    \begin{figure}[H]
        \centering
         \subfigure{\includegraphics[width=.8\textwidth]{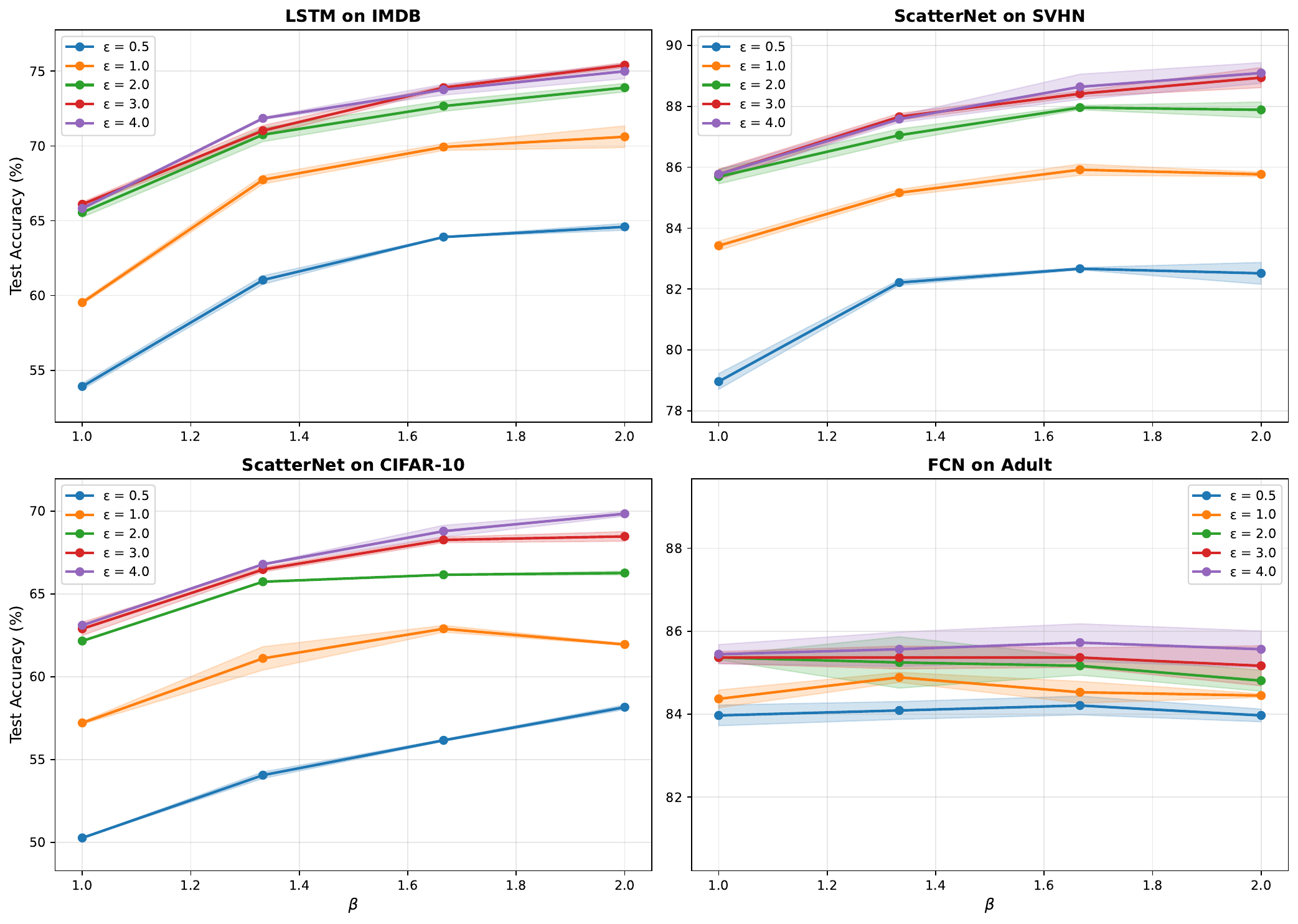}}
        \caption{\small{$\beta$-\DPSGD~results with noise multiplier $\sigma = 2.0$ }}
        \label{fig:dpsgd_results_scatter_separate__noise_mult2}
    \end{figure}

    \begin{figure}[H]
        \centering
         \subfigure{\includegraphics[width=.8\textwidth]{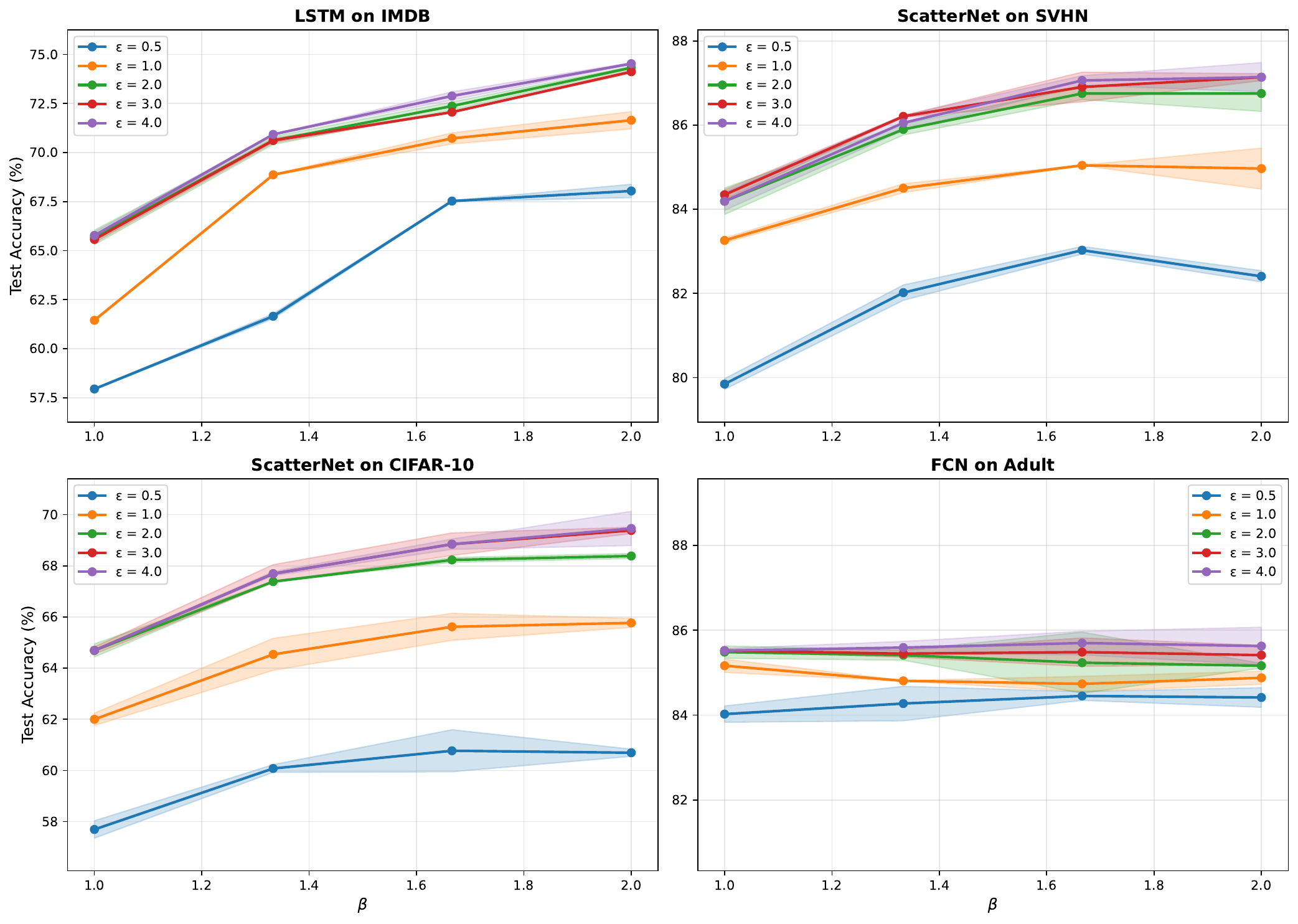}}
        \caption{\small{$\beta$-\DPSGD~results with noise multiplier $\sigma = 2.5$}}
        \label{fig:dpsgd_results_scatter_separate__noise_mult3}
    \end{figure}

    \begin{figure}[H]
        \centering
         \subfigure{\includegraphics[width=.8\textwidth]{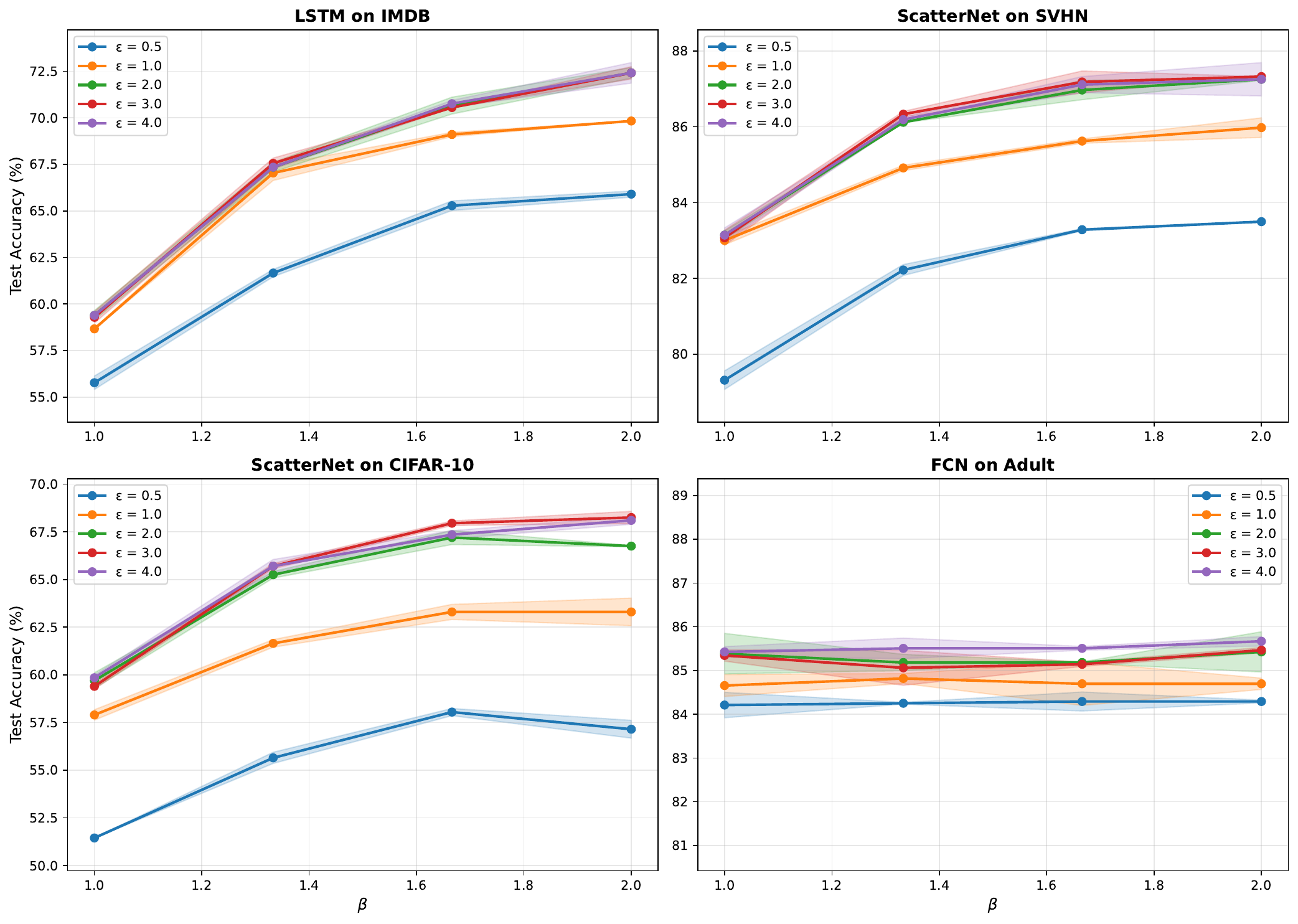}}
        \caption{\small{$\beta$-\DPSGD~results with noise multiplier $\sigma=3.0$ }}
        \label{fig:dpsgd_results_scatter_separate__noise_mult4}
    \end{figure}

\subsection{Sampling from the Generalized Gaussian Distribution} \label{appendix:ggd_fast_sampling}

Unlike the Gaussian and \Laplace~distributions, sampling from the Generalized Gaussian mechanism is not natively supported by built-in libraries like Python's `math' library, or the commonly used \href{https://numpy.org/}{numpy} library. Another commonly used library for statistical computing, \href{https://scipy.org/}{SciPy}, does have the `\href{https://docs.scipy.org/doc/scipy/reference/generated/scipy.stats.gennorm.html#scipy-stats-gennorm}{scipy.stats.gennorm}' function; however, we found that it regularly takes too long for intensive computations like stochastic gradient descent in practical settings, which involves sampling from high-dimensional gradients thousands of times. Further, the Scipy function is only able to be sampled on a CPU, which makes it ill-suited for DP-SGD, which is regularly performed on a GPU.

We implement a method for sampling from the Generalized Gaussian mechanism included in our code here: \url{https://github.com/RoyRin/Generalized-Gaussians}. In our experiments, we can sample from the Generalized Gaussian only $\sim 1.3$x slower than sampling from a Gaussian directly. It is possible to conduct similar sampling using the method of inverse probability transforms, since the Generalized Gaussian has a known CDF.

\subsection{Reproducibility and Computing Resources}

For our \DPSGD~experiments, the execution of our techniques does not result in a significant increase in processing time compared to the conventional application of DP-SGD. The only addition to the computation duration comes from increased amounts of hyperparameter searching. All experiments and data analysis are reproducible in the codebase provided 
\url{https://github.com/RoyRin/Generalized-Gaussians}. The \DPSGD\ results in this paper were completed in under 500 hours of GPU time, which was split across 16 machines that were mounted on Nvidia T4 and RTX6000 machines GPUs. All data analysis was conducted on a 8 core machine with 16 GB RAM machine.

\end{document}